\newtheorem{assumption}{Assumption}%[section]
\newtheorem{theorem}{Theorem}%[section]
\newtheorem{lemma}{Lemma}
\newtheorem{example}{Example}
\newtheorem{proposition}{Proposition}
\newtheorem{definition}{Definition}
\newtheorem{remark}{Remark}
\newcommand{\argmin} {{\arg\min}}
\newcommand{\x} {{\bf x}}
\DeclareMathOperator*{\sgn}{sgn}
\title{adversarial_wide_nn}
\begin{document}

% If your paper is accepted and the title of your paper is very long,
% the style will print as headings an error message. Use the following
% command to supply a shorter title of your paper so that it can be
% used as headings.
%
%\runningtitle{I use this title instead because the last one was very long}

% If your paper is accepted and the number of authors is large, the
% style will print as headings an error message. Use the following
% command to supply a shorter version of the authors names so that
% they can be used as headings (for example, use only the surnames)
%
%\runningauthor{Surname 1, Surname 2, Surname 3, ...., Surname n}

\twocolumn[

\aistatstitle{Unlabeled Data Help: Minimax Analysis and Adversarial Robustness}

\aistatsauthor{ Yue Xing \And Qifan Song \And  Guang Cheng }

\aistatsaddress{ Purdue University\\\texttt{xing49@purdue.edu} \And  Purdue University\\\texttt{qfsong@purdue.edu} \And University of California, Los Angeles\\\texttt{guangcheng@ucla.edu} } 

]

% \twocolumn[
% \icmltitle{Unlabeled Data Helps: Statistical Minimax Analysis and Adversarial Robustness}
% \begin{icmlauthorlist}
% \icmlauthor{Aeiau Zzzz}{equal,to}
% \icmlauthor{Bauiu C.~Yyyy}{equal,to,goo}
% \icmlauthor{Cieua Vvvvv}{goo}
% \icmlauthor{Iaesut Saoeu}{ed}
% \icmlauthor{Fiuea Rrrr}{to}
% \icmlauthor{Tateu H.~Yasehe}{ed,to,goo}
% \icmlauthor{Aaoeu Iasoh}{goo}
% \icmlauthor{Buiui Eueu}{ed}
% \icmlauthor{Aeuia Zzzz}{ed}
% \icmlauthor{Bieea C.~Yyyy}{to,goo}
% \icmlauthor{Teoau Xxxx}{ed}
% \icmlauthor{Eee Pppp}{ed}
% \end{icmlauthorlist}
% \icmlkeywords{Machine Learning, ICML}

% \vskip 0.3in
% ]
\begin{abstract}
 %    The recent proposed self-supervised learning (SSL) idea reveals the potential of boosting the performance of existing AI algorithms by learning from additional unlabeled data. However, it is still unclear whether the existing SSL algorithms can fully utilize the information of both labelled and unlabeled data. This paper presents the convergence analysis of a reconstruction-based SSL algorithm considered in \cite{lee2020predicting}, for several statistical models. While existing literature focus on the convergence upper bound of the algorithms, this paper mainly studies the corresponding minimax lower bound, which is essential to justify the optimality of an algorithm. We prove that the reconstruction-based method attains the optimal rate under different data generation models. Furthermore, we incorporate the reconstruction-based SSL into the existing adversarial training algorithms and show that learning from unlabeled data does improve the adversarial robustness, in both theoretical and empirical studies.
     
          The recent proposed self-supervised learning (SSL) approaches successfully demonstrate the great potential of supplementing learning algorithms with additional unlabeled data. However, it is still unclear whether the existing SSL algorithms can fully utilize the information of both labelled and unlabeled data. This paper gives an affirmative answer for the reconstruction-based SSL algorithm \citep{lee2020predicting} under several statistical models. While existing literature only focuses on establishing the upper bound of the convergence rate, we provide a rigorous minimax analysis, and successfully justify the rate-optimality of the reconstruction-based SSL algorithm under different data generation models. Furthermore, we incorporate the reconstruction-based SSL into the existing adversarial training algorithms and show that learning from unlabeled data helps improve the robustness.
     
    %  The recent proposed self-supervised learning (SSL) approaches successfully demonstrate the great potential of supplementing learning algorithms with additional unlabeled data, and they have been adapted in adversarial training to improve adversarial robustness. However, it is still unclear whether the existing SSL algorithms can fully utilize the information of both labelled and unlabeled data even in the clean training, and how to connect the estimation efficiency between clean estimators and adversarially robust estimators. This paper considers reconstruction-based SSL algorithm \citep{lee2020predicting} under several statistical models. While existing literature only focus on establishing upper bound for the convergence rate for clean training, we provide a rigorous minimax analysis, and successfully justify the rate-optimality of reconstruction-based SSL algorithm under different data generation models for both clean training and adversarial training. 
    %  Furthermore, we incorporate the reconstruction-based SSL into the existing adversarial training algorithms and show that learning from unlabeled data does improve the adversarial robustness, in both theoretical and empirical studies.
\end{abstract}

% \tableofcontents
% \newpage
\section{Introduction}

Modern learning algorithms (e.g., deep learning) have been the driving force of artificial intelligence. However, the success of these algorithms heavily relies on a huge volume of high-quality labeled training data, and these labeled data are expensive and not always available. To overcome this, the recently proposed idea of self-supervised learning (SSL) aims to supplement the training process with abundant unlabeled data, which are inexpensive and easily accessible (e.g., the images or footage captured by surveillance systems).

While SSL algorithms have gained increasing popularity, the corresponding theoretical investigations were not conducted until recent years. For instance, \cite{lee2020predicting,teng2021can} reduce and reformulate the reconstruction-based SSL as a simple statistical model and show that it improves the estimation efficiency. Two other studies \citep{arora2019theoretical,tosh2021contrastive} justify the effectiveness of contrastive learning in classification. 

However, to our best knowledge, there is no existing study working on the fundamental information limits of SSL, i.e., the minimax lower bound of the estimation efficiency. Minimax lower bound does not directly inspire new methodologies, but it helps understand whether the existing methods achieve the best or not. 

Minimax rate is the best possible convergence rate that can be achieved by \textbf{any} estimator in the worst case given finite samples, where ``the worst-case'' refers to the data distribution. 
Attaining minimax rate guarantees that the estimator achieves the best efficiency under the worst case. Failing to attain the minimax rate means that there must exist some scenarios where the estimator is not efficient. 
As a result, to understand the performance and optimality of SSL algorithms, it is essential to study the minimax bound.

Another motivation to study minimax lower bound is to understand the role of conditional independence (CI) for SSL. \cite{lee2020predicting} identifies that CI is a key factor yielding the estimation efficiency. A natural question would be how essential the CI condition is.
\cite{lee2020predicting} suggests that the convergence rate of SSL estimate might be slower when CI does not hold. However, it is unclear whether it is caused by that the SSL is inefficient under conditional dependency, or that the fundamental information limit is worse under conditional dependency. To answer this question, minimax lower bound analysis is necessary.

Besides the minimax analysis of SSL, since the aforementioned works observe a great advantage of SSL over classical supervised learning methods, it is natural to conjecture that the rationale behind SSL can potentially boost the performance of other techniques, e.g., adversarial training, differential privacy, and pruning. We consider adversarial training in this paper.

It is well known that deep learning models are vulnerable when they are fed with adversarial inputs \citep[e.g.,][]{DolphinAttack2017, papernot2017practical}. The adversarial inputs lead to serious concerns about AI safety. However, despite the existing literature in adversarial training with SSL that works on algorithm design and empirical study, e.g., \cite{kim2020adversarial,zeng2021adversarial,ho2020contrastive,gowal2020self,chen2020adversarial}, there is little theoretical understanding towards this.

In summary, this paper aims to study two problems. First, we want to establish theoretical justifications on how and when the reconstruction-based algorithm in \cite{lee2020predicting} enhances the estimation efficiency in a training process. 
Second, we study an adversarially robust adaptation of this algorithm to show its effectiveness in the adversarial setting. 

\vspace{-0.1in}
\paragraph{Model Setup} To explain the details, we denote covariates $X_1\in\mathbb{R}^{d_1}$, response $Y\in\mathbb{R}$, and some extra attributes $X_2\in\mathbb{R}^{d_2}$. Following \cite{lee2020predicting}, our target is to learn a model of $Y$ from $X_1$. We use $(X_1,X_2,Y)$ as the random variables, and $(x_1,x_2,y)$ as observations. Assume we have the following datasets:
\begin{itemize}
    \item \vspace{-0.1in}Labeled Data: $S_1$ ($n_1$ samples of $(x_1,x_2,y)$) and $S_2$ ($n_2$ samples of $(x_1,y)$).
    \item Extra unlabeled data: $S_3$ ($n_3$ samples of $(x_1,x_2)$) and $S_4$ ($n_4$ samples of $(x_1)$).
\end{itemize}
\vspace{-0.1in}
In terms of the estimation procedure, in clean training, it can be summarized as follows:
\begin{itemize}
    \item \vspace{-0.1in}Pretext task: Learn some representation  $\widehat{\phi}:=\argmin_{\phi\in\mathcal F}\frac{1}{n_1+n_3}\sum_{S_1,S_3}\|x_2-\phi(x_1)\|_2^2$ for the mapping from $x_1$ to $x_2$, where $\mathcal F$ is some function space to be defined. 
    \item Downstream task: estimate the coefficient $\widehat{W}=\argmin_{W}\frac{1}{n_1+n_2}\sum_{S_1,S_2}l(W^{\top}\widehat\phi(x_1),y)$ for some loss function $l$ depending on the specific task, e.g. regression, logistic regression. For classification task, the final classifier is $\mbox{sgn}(\widehat W^{\top}\widehat\phi(x_1))$. 
\end{itemize}\vspace{-0.1in}
To set a concrete example, we want to predict the gender $y$ of a person using the hairstyle $x_1$ and train from a set of front photos. One can first train a regression model from the hairstyle $x_1$ to predict the face $x_2$ (pretext task), then use this regression model to make a prediction for each photo, and finally, use the predicted face picture $\widehat\phi(x_1)$ to train on the gender label (downstream task). The gender label is not used in the pretext task, so we can use many unlabeled photos for the regression model in the pretext task.

In classical learning methods, the relationship between $y$ and $x_1$ is learned solely based on the training data pairs of $(x_1,y)$. The information from $x_2$ and the relationship between $(x_1,x_2)$ are overlooked. The above reconstruction-based method utilizes these data so that it potentially improves the estimation efficiency. Note that we consider two conditions for possible extra data sets besides $S_1$: (1) whether the data are labeled or not, and (2) whether the data contain $X_2$. Thus, for the sake of completeness, it is natural to include all $S_1$ to $S_4$ in our framework of minimax analysis, despite that the usual clean SSL algorithm doesn't utilize data $S_4$. We will show that $S_4$ indeed does not affect the rate of the minimax lower bound. 

Besides, as mentioned above, \cite{lee2020predicting} also discusses the importance of CI in SSL. The CI condition is formally defined as follows:
\begin{definition}
The data generation model satisfies conditional independence if $X_1$ and $X_2$ are conditionally independent given $Y=y$.
\end{definition}

\vspace{-0.1in}
\paragraph{Contributions} Our contributions are as follows:

First, we explain from a minimax perspective that the SSL method is generally well-behaved in clean training for classification models. We provide detailed characterizations in how the four datasets ($S_1$ to $S_4$) affect the estimation efficiency. From both lower bound and upper bound aspects, the size of $S_1$, $S_2$, and $S_3$ affects the efficiency, and whether SSL improves the efficiency depends on the comparison between $n_1+n_2$ and $n_1+n_3$. In addition, towards the CI condition, we reveal that when $n_1+n_3$ is large enough, no matter whether CI holds or not, SSL is minimax optimal under a well-designed family of $\mathcal{F}$, though the rate is slower without CI. As for dataset $S_4$, its sample size $n_4$ does not affect the minimax convergence rate.

Secondly, we adapt the ``pseudolabel'' method \citep{carmon2019unlabeled} so that adversarial training achieves the minimax lower bound in classification. We figure out the minimax lower bound of the convergence with the presence of unlabeled data (i.e., $S_1$ to $S_4$) and propose a way so that adversarial training achieves this lower bound with the help of SSL under the proper design of pseudolabel imputation. Again, when $n_1+n_3$ is sufficiently large, SSL improves the adversarial robustness compared to a vanilla adversarial training. 

Finally, as a by-product, we provide discussions about SSL for regression with $\phi$ being a linear function and SSL for classification with $\phi$ being a two-layer ReLU neural network (with lazy training). 
For the former one, we establish similar minimax results as the above. For the latter one, we show that the neural network family serves as a good candidate for $\mathcal{F}$ in the pretext task when there is no parametric knowledge for $\mathcal{F}$. It can potentially accelerate the convergence.

\section{Related Works}\label{sec:related_works}
Below is a summary of other related articles in the areas of self-supervised learning, adversarial training, as well as statistical minimax lower bound analysis.
\paragraph{Self-Supervised Learning} 
There are two popular types of self-supervised learning algorithms in the literature, i.e., reconstruction-based SSL and contrastive learning. Reconstruction-based SSL learns the reconstruction mapping from the large pool of unlabeled images and then employs it to reconstruct labeled images which are used in the downstream task (\citealp{noroozi2016unsupervised,zhang2016colorful,pathak2016context,doersch2015unsupervised,gidaris2018unsupervised}). Contrastive learning uses the unlabeled images to train representations that distinguish different images invariant to non-semantic transformations (\citealp{mikolov2013distributed,oord2018representation,arora2019theoretical,dai2017contrastive,chen2020simple,tian2020makes,chen2020simple,khosla2020supervised,haochen2021provable,chuang2020debiased,xiao2020should,li2020prototypical}).

\vspace{-0.1in}
\paragraph{Adversarial Training} 
Many works consider the adversarial robustness of learning algorithms from different perspectives, e.g., the statistical properties or generalization performance of the global optimum of some well-designed adversarial loss function \citep{mehrabi2021fundamental,javanmard2020precise,javanmard2020precise1,dan2020sharp,taheri2020asymptotic,yin2018rademacher,raghunathan2019adversarial,schmidt2018adversarially,najafi2019robustness,zhai2019adversarially,hendrycks2019using}, or the algorithmic properties of optimizing the adversarial loss function \citep{sinha2018certifying,gao2019convergence,zhang2020over,allen2020feature,xing2021generalization}.

Related studies about semi-supervised learning with unlabeled data can be found in deep learning and other areas. For example, \citet{carmon2019unlabeled,xing2021adversarially} verify that unlabeled data helps in improving the estimation efficiency of adversarially robust models. \citet{cannings2017local} use unlabeled data to construct the local $k$-Nearest Neighbors algorithm. 
% A difference to our scenario is that the reconstruction-based SSL utilizes extra data attributes, while there is no extra information for each sample in the above studies. 

\vspace{-0.1in}
\paragraph{Minimax Lower Bound} Minimax lower bound is an important property in the area of statistics, and has been studied for different models, e.g. non-parametric model, linear regression, LASSO, as well as adversarially robust estimate \citep{audibert2007fast,raskutti2012minimax,yang2015minimax,sun2016stabilized,dicker2016ridge,cai2016optimal,mourtada2019exact,tony2019high,dan2020sharp,xu2020cam,xing2021adversarially}.

\section{Minimax Lower Bound}\label{sec:lower_bound}

To reconcile the notation for both clean and adversarial training, for binary classification, we denote risk $R(f,\epsilon)$ as the population misclassification rate of the classifier $\mbox{sgn}[f(\widetilde X_1)]$, where $\widetilde X_1$ is the attacked input variable under strength $\epsilon$. Specifically, given $f$ and $X_1=x_1$, $\widetilde{X}_1=\arg\max_{z\in \mathcal{R}(x_1,\epsilon)}l(f(z),y)$ where $l$ is the loss function for training\footnote{In the models we consider in this paper, the attacks for $l$ and $0-1$ loss are the same.}. The constraint $\mathcal{R}(x_1,\epsilon)$ is an $\mathcal{L}_2$ or $\mathcal{L}_{\infty}$ ball centering at $x_1$ with radius $\epsilon$. Define $R^*(\epsilon)=\inf_fR(f,\epsilon)$ as the optimal misclassification rate under $\epsilon$. To train a classifier, one minimizes an empirical loss function, where the loss can be different from $R$, e.g., square loss or cross-entropy.

% To regulate the distribution of $(X_1,X_2,Y)$, we define $\mathcal{P}=\{P_{\beta}, \beta\in\mathbb{R}^{K d_1}\}$ for some constant $K>0$ as the class of distributions of $(X_1,X_2,Y)$ such that: 
% (1)$P_{\beta}(Y=1|X_1=x_1)$ is a function of $x_1$ parameterized by some $\beta$, denoted by $p(\cdot;\beta)$; 

To regulate the distribution of $(X_1,X_2,Y)$, we impose the following assumption:
% require that $\mathcal{P}$ satisfies the following assumption:
\begin{assumption}\label{assumption}
The distribution family $\mathcal{P}$ satisfies:

(1) There is some known function $p(\cdot;\cdot)$ such that, any distribution in $\mathcal{P}$ satisfies $P(Y=1|X_1=x_1)=p(x_1;\beta)$ for some $\beta\in\mathbb{R}^{d_1}$;

(2) Assume $(X_1,X_2,Y)$ satisfies (1) with $\beta=\beta^*$, then $R(2p(\cdot;\beta)-1,\epsilon)$ is $L$-Lipschitz and is twice differentiable in $\beta$ when $\beta\in B(\beta^*,r)$ for some small $r>0$ for all $\epsilon=0$ and the $\epsilon$ of interest. 

\end{assumption}

The condition (1) in Assumption \ref{assumption} is for the purpose of parametrization. Since we are doing parametric estimation, we need to consider the class of models whose parametric form exists. The generalized linear model is included by our assumption. 
% Given the parametric form, the estimation procedure does not differentiate whether the model is linear or not.
% is for the convenience of designing pretext function space $\mathcal{F}$, 
The condition (2) in Assumption \ref{assumption} describes how $\beta$ is related to the misclassification rate. It should hold for $\epsilon=0$ (clean training) and the $\epsilon$ of interest in adversarial training. We fix $\epsilon$ (which does not change with $n$) and do not consider it a changing parameter throughout training.

The following theorem presents the minimax lower bounds of the convergence of any estimator when CI holds/does not hold, for both clean ($\epsilon=0$) and adversarial training ($\epsilon>0$).
Combining with upper bounds in the later section (i.e., Theorems \ref{thm:mixed:upper}, \ref{thm:linear}, \ref{thm:mix:adv}), the presented rates indeed are optimal:
\begin{theorem}\label{thm:mixed:lower:no_ci}
Assume Assumption \ref{assumption} holds. Also assume $(d_1+d_2)\log (n_i)=o(n_i^{1/3})$ for $i=1,2,3$. The minimax lower bound is
\begin{eqnarray*}
&&\inf_{\widehat f}\sup_{\mathcal{P}} \mathbb{E}R(\widehat f,\epsilon)-R^*(\epsilon)\\&=&\Omega\left(\frac{d_1}{n_1+n_2}\wedge\left(\frac{d_2}{n_1+n_2}+\frac{d_1+d_2}{n_1+n_3}\right)\right).
\end{eqnarray*}
When CI holds, the lower bound becomes
\begin{eqnarray*}
\inf_{\widehat f}\sup_{\mathcal{P}} \mathbb{E}R(\widehat f,\epsilon)-R^*(\epsilon)=\Omega\left(\frac{d_1}{n_1+n_2}\wedge\frac{d_1}{n_1+n_3}\right).
\end{eqnarray*}
\end{theorem}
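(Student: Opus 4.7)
The plan is to combine a Fano-type packing argument with a local quadratic expansion of the risk. Assumption \ref{assumption}(2), together with the first-order optimality of $\beta^*$, gives $R(2p(\cdot;\beta)-1,\epsilon)-R^*(\epsilon)\asymp\|\beta-\beta^*\|_H^2$ near $\beta^*$ for some positive semidefinite Hessian $H$, so proving an excess-risk rate of $d/n$ reduces to proving a parameter-distance rate of $\sqrt{d/n}$. In every sub-family I consider, the marginal of $X_1$ is fixed and parameter-free; hence $S_4$ contributes zero KL about $\beta$ and may be discarded, which is precisely why $n_4$ is absent from the stated rates.

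For the first term $\frac{d_1}{n_1+n_2}$, the sub-family takes $X_2$ to be pure noise independent of $(X_1,Y)$, so every $S_3$-likelihood is free of $\beta$ and the full Fano KL budget must be paid for by the $n_1+n_2$ label-carrying samples. A Gilbert--Varshamov packing of $\{\beta_k\}\subset\mathbb{R}^{d_1}$ with separation $\delta\asymp\sqrt{d_1/(n_1+n_2)}$ has pairwise KL of order $(n_1+n_2)\delta^2 = O(d_1)$, matching $\log|\{\beta_k\}|$ for $2^{d_1}$ packing points, and Fano yields $\Omega(d_1/(n_1+n_2))$ in excess risk.

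For the SSL term $\frac{d_2}{n_1+n_2}+\frac{d_1+d_2}{n_1+n_3}$ without CI, I would build a layered sub-family: $X_2 = AX_1+\xi$ for $A$ in an affinely parametrized class with $d_1+d_2$ effective directions, and $Y\mid X_1\sim\mathrm{Bern}(\sigma(W^\top AX_1))$ with $W\in\mathbb{R}^{d_2}$, so that the induced conditional is $p(x_1;\beta)$ with $\beta = A^\top W\in\mathbb{R}^{d_1}$. Perturbing $A$ at scale $\delta_A$ with $W$ fixed affects only the $(X_1,X_2)$-channel and pays KL $\lesssim (n_1+n_3)\delta_A^2$, while perturbing $W$ at scale $\delta_W$ with $A$ fixed affects only the label channel and pays KL $\lesssim (n_1+n_2)\delta_W^2$. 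A product packing in $(A,W)$ together with this per-channel KL decomposition then yields the additive rate via Fano. For the CI case, $X_2$ is instead generated as a noisy function of $Y$ alone, enforcing $X_1 \perp X_2\mid Y$; then $X_2\mid X_1$ is a two-component mixture whose mixing weight is $p(x_1;\beta)$, so $S_3$ samples carry $d_1$-dimensional information about $\beta$ directly, and the analogous Fano computation gives $\Omega(d_1/(n_1+n_3))$. Combined with the first sub-family, both stated minima follow.

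The main obstacle is the layered construction for the non-CI case: one must simultaneously ensure (i) the parametric form $P(Y=1\mid X_1) = p(X_1;\beta)$ required by Assumption \ref{assumption}(1) holds with $\beta\in\mathbb{R}^{d_1}$; (ii) the KL tensorizes cleanly across the $(X_1,X_2)$- and label-channels, so that the two contributions add rather than couple (otherwise the rate degenerates into a multiplicative one and loses the additive structure); and (iii) the quadratic lower bound on excess risk from Assumption \ref{assumption}(2) survives the nonlinear reparametrization $\beta = A^\top W$ along the restricted directions used in the packing. Verifying (ii)--(iii) while keeping the perturbations inside the Lipschitz/smoothness neighborhood of Assumption \ref{assumption}(2) is where the bulk of the technical work lies.
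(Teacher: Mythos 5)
The most serious problem is your first sub-family, in which $X_2$ is pure noise independent of $(X_1,Y)$. For the class the theorem is meant to describe (and for which the matching upper bounds in Theorems \ref{thm:mixed:upper}, \ref{thm:linear}, \ref{thm:mix:adv} hold), $X_2$ must be informative about $Y$: under CI the rate $O(d_1/(n_1+n_3))$ is attainable precisely because $\mathbb{E}[X_2\mid X_1]$ identifies $\beta$ (cf.\ condition (4) of Assumption \ref{assumption:more}), and a pure-noise $X_2$ destroys this. If your sub-family were admissible it would give the unconditional bound $\Omega(d_1/(n_1+n_2))$, which exceeds the CI upper bound $O(d_1/(n_1+n_3))$ whenever $n_3\gg n_2$; so either the construction lies outside the intended $\mathcal{P}$ or the stated min would be far from tight. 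Either way it cannot be the source of the $d_1/(n_1+n_2)$ term. Relatedly, lower bounds over sub-families combine through a maximum, not a minimum, so ``combining'' your families as proposed would produce a bound contradicting the upper bounds: the min in the statement has to emerge from a case analysis on the relative sizes of $n_1+n_2$, $n_1+n_3$ and $d_1,d_2$ inside a single family with informative $X_2$, which is what the paper does (its CI proof places a prior on $\mu_0$, observes that both the $n_1+n_2$ label factors and the $n_1+n_3$ reconstruction factors of the posterior constrain $\mu_0$, and takes the smaller of the two scales). Your own CI Fano computation, done correctly, gives $d_1/(n_1+n_2+n_3)$ --- i.e.\ the min up to constants --- not $\Omega(d_1/(n_1+n_3))$ as written, since the labeled samples also carry $d_1$-dimensional information in that construction.

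The second gap is the one you flag in obstacle (ii) but do not resolve, and it is the heart of the non-CI bound. In your layered family $\beta=A^{\top}W$, perturbing $A$ with $W$ fixed changes the label channel as well (the labels follow $\sigma(W^{\top}AX_1)$), so the KL does not tensorize; in fact the additive expression $d_2/(n_1+n_2)+(d_1+d_2)/(n_1+n_3)$ is \emph{not} a valid lower bound outside the regime $d_2<d_1$ and $n_1+n_2=o(n_1+n_3)$, because the labeled sample can always estimate $\beta=A^{\top}W$ directly at rate $d_1/(n_1+n_2)$, capping the risk below your additive rate --- this leakage is exactly why the theorem states a minimum. The paper's resolution (Lemma \ref{lem:minimax:covariance} together with the three-case analysis in the appendix) is to work in the regime $n_1+n_2=o(n_1+n_3)$, put a rank-one prior $\Sigma_{1,2}-\Sigma_{1,2}^*=\Delta_1\Delta_2^{\top}$ with $\Delta_2$ aligned with the known $a$ and $\Delta_1$ truncated to an $\ell_{\infty}$-ball of radius $1/\sqrt{n_1+n_3}$, and then show that the label-likelihood factor is $\exp\{o(1)\}$ on that support, so the posterior of $\Delta_1$ is driven by the $n_1+n_3$ reconstruction samples alone; the remaining cases deliver the $d_1/(n_1+n_2)$ branch. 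Your proposal needs an argument of this type, and without it the product-packing Fano step fails. A smaller point: Assumption \ref{assumption}(2) gives only smoothness, not the two-sided relation $R-R^*\asymp\|\beta-\beta^*\|_H^2$ with nondegenerate $H$ (for $\epsilon=0$ and the $\epsilon$ of interest), so that equivalence must be verified inside the specific construction, as the paper does for its Gaussian-mixture examples and, for $\epsilon>0$, by checking that the adversarial Bayes rule remains $\sgn(x_1^{\top}\mu_0)$ when $X_1\sim N(0,I_{d_1})$.
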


The proof of Theorem \ref{thm:mixed:lower:no_ci} is postponed to the appendix. To prove the minimax lower bound, one common way is to design a specific distribution so that the distribution parameters, e.g. mean and variance, always involve error given the finite training samples. The estimator $\widehat{f}$ will further inherit this error. Our examples used to prove the minimax lower bounds in Theorem \ref{thm:mixed:lower:no_ci} are more complicated compared to \cite{dan2020sharp,xing2021adversarially}.

% To intuitively explain the two lower bounds, one can represent them in a universal form $d_1/(n_1+n_2)\wedge \xi$ for some $\xi$, where $d_1/(n_1+n_2)$ corresponds to the information obtained from labeled data $(X_1,Y)$, and the second term $\xi$ is for the extra information obtained from $(X_1,X_2)$ and $(X_2,Y)$. First, to explain the operator $\wedge$, when $\xi\gg d_1/(n_1+n_2)$, $(X_1,Y)$ brings more information than $(X_1,X_2)$ and $(X_2,Y)$, so the minimax lower bound is $d_1/(n_1+n_2)$ (and vise versa). Second, the term $\xi$ differs with/without CI, and this is due to the role of $X_2$. When CI holds, since $X_1$ is no longer related to $X_2$ given the label, $X_2$ can be viewed as a multivariate noise adding on the label $Y$. That is, $X_2=\mathbb{E}[X_2|Y] + \omega(Y)$, where $\mathbb{E}[X_2|Y]$ is the label, and $\omega(Y)$ is a zero-mean noise. It is almost the same as adding $n_3$ labeled data, thus $\xi$ becomes $d_1/(n_1+n_3)$. When CI does not hold, one need to consider the covariance between $X_1$ and $X_2$ ($(d_1+d_2)/(n_1+n_3)$) as well as the relation between the label and the mean vector of $X_2$ ($d_2/(n_1+n_2)$). 

Although \cite{lee2020predicting} reveals that reconstruction-based SSL achieves a faster convergence rate under CI,  the minimax lower bound gets much larger when CI does not hold based on our result. From this aspect, even if CI does not hold, the SSL algorithm is still good, and achieves the optimal convergence rate based on the results in later sections.

Besides, the rates are irrelevant to the sample size of $S_4$, indicating that the information of $X_4$ is not the bottleneck for this classification problem.

\begin{remark}
% To compare with \cite{xing2021adversarially}, they only consider the linear regression problem with $(X_1,Y)$. In contrast, this paper considers a more general model class $\mathcal P$ of $(X_1,X_2,Y)$. 

% On the other hand, 
\cite{xing2021adversarially} proves that introducing $S_4$ is helpful, and this does not contradict with our arguments in Theorem \ref{thm:mixed:lower:no_ci}. Assume $Var(Y|X_1,X_2)$ and $d_1$ are all constants, and $n_1=n_3=0$, then the minimax lower bound in \cite{xing2021adversarially} is still $\Omega(1/n_2)$. The unlabeled data in $S_4$ improves the convergence in a multiplicative constant level, but not the rate of the convergence. 
\end{remark}

\begin{remark}
In this paper, we consider the upper bound and lower bound of $\mathbb{E}R(\widehat{f},\epsilon)-R^*(\epsilon)$. This is different from some literature in learning theory, e.g. Section 3 in \cite{mohri2018foundations}, where they consider $R(f,\epsilon)-\sum l(f(x_1,x_2),y)/\sum n_i$. The former one focuses on the difference between the testing performance using the trained model and the true robust model, while the latter one considers the discrepancy between the training performance and the testing performance of the same model. Since we aim to study how the trained model performs compared to the true robust model, we use the former one in this paper. It is noteworthy that the latter one converges in a different rate from our results in this paper.
\end{remark}

\section{Convergence Upper Bound}\label{sec:upper_bound}
This section studies the convergence rate of SSL to see whether SSL achieves the optimal rate. 

\subsection{Convergence in Clean Training}
We translate the results in \cite{lee2020predicting} into our format to match the minimax lower bounds above.

For the pretext task, under CI, we consider learning $\phi$ from the function space $\mathcal{F}:=\{\phi \;|\; \phi(x_1)= p(x_1;\beta)\mu_2+(1-p(x_1;\beta))\mu_2',\;\beta\in\mathbb{R}^{d_1},\;\mu_2,\mu_2'\in\mathbb{R}^{d_2} \}$, where $p$ is the parametric form of $P_{\beta}(Y=1|X_1=x_1)$ as defined in Assumption \ref{assumption}.
The rationale behind this choice of $\mathcal{F}$ is that, under CI condition, $\mathbb{E}[X_2|X_1=x_1]=P(Y=1|X_1=x_1)\mathbb{E}[X_2|Y=1]+P(Y=-1|X_1=x_1)\mathbb{E}[X_2|Y=-1]$, which matches the form of functions in $\mathcal{F}$. A concrete example will be provided in Example \ref{example:mixed_gaussian} later. 

For the downstream task, we consider two estimators of $W$ as follows. For both cases, the trained classifier is defined as $\mbox{sgn}(\widehat W^\top\widehat \phi(x_1))$.
\begin{itemize}
    \item \vspace{-0.1in}Logistic regression on $(y, \widehat\phi(x_1))$.
    \item Plugin estimator in \cite{dan2020sharp}, which is equivalent to square loss in clean training. 
\end{itemize}
\vspace{-0.1in}

The following example analyzes the Gaussian mixture model when CI holds. It provides the basic analysis on how $X_2$ affects the convergence.
\begin{example}[Classification under CI]\label{example:mixed_gaussian}
Consider Gaussian mixture model defined as follows:
\begin{eqnarray*}
&&P(Y=1)=P(Y=-1)=\frac{1}{2},\\
&&\;(X_1,X_2)|Y=y\sim N\left(y\begin{bmatrix}
\mu_1^*\\\mu_2^*
\end{bmatrix},\begin{bmatrix}
\Sigma_{1,1}^* & \Sigma_{1,2}^*\\
\Sigma_{2,1}^* & \Sigma_{2,2}^*
\end{bmatrix}\right),
\end{eqnarray*}
where $\mu_i^*$'s and $\Sigma_{i,j}^*$'s are unknown parameters. The conditional distribution of $X_2$ given $X_1=x_1,Y=1$ is
\begin{eqnarray*}\label{eqn:CI}
X_2|X_1=x_1,Y=1&\sim& N( \mu_2^*+\Sigma_{2,1}^*(\Sigma_{1,1}^*)^{-1}(x_1-\mu_1^*),\\
&&\qquad\Sigma_{2,2}^*-\Sigma_{2,1}^*(\Sigma_{1,1}^*)^{-1}\Sigma_{1,2}^* ).
\end{eqnarray*}
Therefore, $\Sigma_{1,2}^*=\textbf{0}$ is equivalent to CI condition in this model. Further, the probability $P(Y=1|X_1=x_1)$ is a function of $x_1$ and $(\Sigma_{1,1}^*)^{-1}\mu_1$ only, so the best $\phi$ to minimize $\mathbb{E}\|x_2-\phi(x_1)\|^2$ can be represented as
$\phi^*(x_1)%&=&\mathbb{E}[x_2|x_1]=\mathbb{E}[x_21\{y=1\}|x_1]+\mathbb{E}[x_21\{y=-1\}|x_1]\\
% &=&p(x_1)(\mu_2+\Sigma_{2,1}\Sigma_{1,1}^{-1}(x_1-\mu_1))+(1-p(x_1))(-\mu_2+\Sigma_{2,1}\Sigma_{1,1}^{-1}(x_1+\mu_1))\\
= (2p(x_1;(\Sigma_{1,1}^*)^{-1}\mu_1^*)-1)\mu_2^*$ under CI. 
Based on this, the family of $\phi$, $\mathcal{F}=\{\phi\mid \phi(x)=(2p(x;\Sigma_{1,1}^{-1}\mu_1)-1)\mu_2,\;\forall \Sigma_{1,1}^{-1}\mu_1,\mu_2 \}$ is a proper choice for the pretext task.

Solving the pretext task, we have
\begin{eqnarray*}
&&\widehat{\phi}(x_1) = (2p(x_1;\widehat{\Sigma_{1,1}^{-1}\mu_1})-1)\widehat\mu_2,\text{ where}\\
&&\mathbb{E}\|\widehat{\Sigma_{1,1}^{-1}\mu_1}-(\Sigma_{1,1}^*)^{-1}\mu_1^* \|^2=O\left(\frac{d_1}{n_1+n_3}\right).
\end{eqnarray*}
The detailed derivation is postponed to appendix. 

In the downstream task, since the output of $\widehat{\phi}$ is always in the same direction (parallel to $\widehat\mu_2$), $\widehat{W}\widehat\phi(x_1)$ becomes $c(2p(x_1;\widehat{\Sigma_{1,1}^{-1}\mu_1})-1)$ for some constant $c$, and its sign only depends on $(2p(x_1;\widehat{\Sigma_{1,1}^{-1}\mu_1})-1)$. The estimation error in $\widehat{W}$ and $\widehat{\mu}_2$ therefore does not affect the final prediction, and the error in the prediction is only caused by the error in $\widehat{\Sigma_{1,1}^{-1}\mu_1}$. Consequently, 
% The estimate $\widehat{W}^{\top}\widehat{\phi}$ is unbiased in the sense that $\mathbb{E}[\widehat{W}^{\top}\widehat{\phi}]$ asymptotically corresponds to the true decision boundary, and
\begin{eqnarray}\label{eqn:example:mixed}
\mathbb{E}R(\widehat{W}^\top\widehat\phi,0)-R^*(0) = O\left(\frac{d_1}{n_1+n_3}\right).
\end{eqnarray}

\end{example}
The proof for Example \ref{example:mixed_gaussian}, and Theorem \ref{thm:mixed:upper} and Theorem \ref{thm:linear} below are postponed to the appendix. The basic idea is to use Taylor expansion on the estimation equation to obtain the Bahadur representation of the estimator. In general, via Bahadur representation, one can show that the estimator asymptotically converges to the true model in Gaussian.

The following theorem can be obtained via extending Example \ref{example:mixed_gaussian} to other models under CI:

\begin{theorem}\label{thm:mixed:upper}
Assume Assumption \ref{assumption} together with some finite-variance condition (to be specified in the appendix) hold. If $(d_1+d_2)\log (n_i)=o(n_i^{1/3})$ for $i=1,2,3$, and $d_2=o(\sqrt{d_1(n_1+n_3)})$, then for both the two loss functions (logistic loss and square loss), when CI holds, 
\begin{eqnarray*}
\mathbb{E}R(\widehat{W}^\top\widehat\phi,0)-R((W^*)^{\top}\phi^*,0) = O\left(\frac{d_1}{n_1+n_3}\right),
\end{eqnarray*}
where $\phi^*$ is the population loss minimizer of the pretext task, and $W^*$ is the population loss minimizer (logistic or square correspondingly) in the downstream task.
% where $\mathcal{F}$ is the class of function of either logistic regression or linear regression depending on the loss.
% Note that the definition of $(R,R^*)$ changes along the loss function $l$.
\end{theorem}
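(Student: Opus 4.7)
The plan is to follow the three-step decomposition sketched in Example \ref{example:mixed_gaussian} and lift it to the general parametric family. Parameterize $\mathcal{F}$ by $\theta=(\beta,\mu_2,\mu_2')\in\mathbb{R}^{d_1+2d_2}$, and write $\phi_\theta(x_1)=p(x_1;\beta)\mu_2+(1-p(x_1;\beta))\mu_2'=\mu_2'+p(x_1;\beta)(\mu_2-\mu_2')$. Under CI, $\mathbb{E}[X_2\mid X_1=x_1]=p(x_1;\beta^*)\mathbb{E}[X_2\mid Y=1]+(1-p(x_1;\beta^*))\mathbb{E}[X_2\mid Y=-1]$, so the population minimizer $\phi^*$ of the squared reconstruction risk lies in $\mathcal{F}$ at some $\theta^*=(\beta^*,\mu_2^*,\mu_2'^*)$. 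This is the key structural consequence of CI that makes the pretext problem well-specified.

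First I would analyze the pretext M-estimator $\widehat\theta$ on $S_1\cup S_3$ via a Bahadur representation. Differentiating the empirical squared loss and Taylor-expanding the score equation at $\theta^*$ gives $\widehat\theta-\theta^*=-H^{-1}\,(n_1+n_3)^{-1}\sum \nabla_\theta\|x_2-\phi_{\theta^*}(x_1)\|^2+\text{remainder}$, where the residuals $x_2-\phi_{\theta^*}(x_1)$ are mean-zero by the preceding paragraph and have finite variance by the assumed finite-variance condition. The population Hessian $H$ is positive definite on identifiable directions, and the remainder is controlled using $(d_1+d_2)\log n_i=o(n_i^{1/3})$ by standard empirical-process arguments. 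This yields $\|\widehat\beta-\beta^*\|^2=O_p(d_1/(n_1+n_3))$ and $\|\widehat\mu_2-\mu_2^*\|^2+\|\widehat\mu_2'-\mu_2'^*\|^2=O_p(d_2/(n_1+n_3))$.

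Next, the crucial structural observation: for every $\phi\in\mathcal{F}$ and every $W$,
\begin{equation*}
W^\top\phi(x_1)=W^\top\mu_2'+p(x_1;\beta)\,W^\top(\mu_2-\mu_2')=a+b\,p(x_1;\beta),
\end{equation*}
so $\mathrm{sgn}(\widehat W^\top\widehat\phi(x_1))=\mathrm{sgn}\bigl[\widehat b\,(p(x_1;\widehat\beta)-\widehat t)\bigr]$ with $\widehat t=-\widehat a/\widehat b$. Thus the downstream estimator enters the final classifier only through two scalars, the sign of $\widehat b$ and the threshold $\widehat t$. For both the logistic and square downstream losses the population objective, viewed as a function of $(a,b)$ given the true $\phi^*$, is strictly convex with a unique optimum matching the Bayes threshold; plugging $\widehat\phi$ into the empirical downstream loss over $S_1\cup S_2$ reduces to a two-parameter M-estimation whose bias due to the pretext error is controlled by the Bahadur representation of Step 1. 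A routine perturbation argument then yields sign-consistency of $\widehat b$ and $|\widehat t-t^*|^2=O_p(1/(n_1+n_2)+d_1/(n_1+n_3))$.

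Finally I would translate these parameter rates into the classification excess risk using Assumption \ref{assumption}(2). Since $R(2p(\cdot;\beta)-1,0)$ is $C^2$ in a neighbourhood of $\beta^*$ with vanishing gradient at $\beta^*$, a second-order Taylor expansion gives $\mathbb{E}R(\widehat W^\top\widehat\phi,0)-R((W^*)^\top\phi^*,0)\lesssim\|\widehat\beta-\beta^*\|^2+|\widehat t-t^*|^2=O_p(d_1/(n_1+n_3))$, where the condition $d_2=o(\sqrt{d_1(n_1+n_3)})$ ensures the pretext rate dominates the threshold error. The main obstacle I anticipate is Step 2: because the downstream objective is evaluated on imputed features $\widehat\phi(x_1)$ that share data (via $S_1$) with the pretext estimator, one must handle the coupled stochastic dependence carefully. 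The cleanest route is to exploit the fact that the downstream problem is intrinsically two-dimensional given $\widehat\beta$, so uniform convergence of the downstream empirical process over the induced one-parameter family of decision functions is cheap, and the extra bias induced by $\widehat\beta\ne\beta^*$ is already quadratically small by Step 1.
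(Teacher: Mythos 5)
Your Step 1 (pretext Bahadur representation giving $\|\widehat\beta-\beta^*\|^2=O_p(d_1/(n_1+n_3))$) and your final Taylor step via Assumption \ref{assumption}(2) match the paper's argument. The gap is in how you handle the downstream task. By working with the general affine parametrization $W^\top\phi(x_1)=a+b\,p(x_1;\beta)$ you introduce a data-dependent threshold $\widehat t=-\widehat a/\widehat b$ that must be estimated from $S_1\cup S_2$, and you yourself bound its error by $|\widehat t-t^*|^2=O_p\bigl(1/(n_1+n_2)+d_1/(n_1+n_3)\bigr)$. You then conclude $O_p(d_1/(n_1+n_3))$ by invoking $d_2=o(\sqrt{d_1(n_1+n_3)})$, but that condition relates $d_2$ to $d_1(n_1+n_3)$ and says nothing about $n_1+n_2$ versus $n_1+n_3$; in the regime where the theorem is actually interesting ($n_3\gg d_1(n_1+n_2)$, which is exactly when SSL beats supervised learning), the term $1/(n_1+n_2)$ dominates $d_1/(n_1+n_3)$, so your chain of bounds does not deliver the stated rate. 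As written, Step 2 to Step 3 is a non sequitur, and if the downstream threshold error were genuinely of order $1/(n_1+n_2)$ the theorem as stated (which has no $n_2$ dependence) could not be proved this way.

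The paper closes this hole structurally rather than by estimating a threshold. Under CI with the family actually used in the analysis (as in Example \ref{example:mixed_gaussian}, $\phi(x_1)=(2p(x_1;\beta)-1)\mu_2$, so the output of $\widehat\phi$ is always parallel to $\widehat\mu_2$), the sign of $\widehat W^\top\widehat\phi(x_1)$ equals the sign of $2p(x_1;\widehat\beta)-1$ up to a single global sign $\sgn(\widehat W^\top\widehat\mu_2)$; the threshold is pinned at $p=1/2$ by the form of $\mathcal F$ and is not estimated. Hence errors in $\widehat W$ and $\widehat\mu_2$ contribute nothing to the misclassification rate (beyond a sign-consistency event of negligible probability), no $1/(n_1+n_2)$ term ever appears, and the excess risk is controlled solely by $\|\widehat\beta-\beta^*\|^2=O(d_1/(n_1+n_3))$ via Assumption \ref{assumption}(2). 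To repair your argument you would either need to restrict to this symmetric form of $\mathcal F$ (making your $\widehat t$ identically $1/2$ and reducing Step 2 to sign consistency), or prove separately that the excess risk is insensitive at second order to the downstream threshold error uniformly in $(n_2,n_3)$ — which is false in general for the free-threshold family. The condition $d_2=o(\sqrt{d_1(n_1+n_3)})$ is there to keep the $d_2$-dimensional nuisance part of the pretext expansion from inflating the $\beta$-rate, not to absorb a downstream sampling error.
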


In contrast to Theorem \ref{thm:mixed:upper}, the following theorem studies the convergence of SSL when CI does not hold. For simplicity, we consider using linear $\phi$ in the pretext task, i.e., $\mathcal F$=\{linear mappings from $\mathbb R^{d_1}$ to $\mathbb R^{d_2}$\}
\begin{theorem}\label{thm:linear}
Assume Assumption \ref{assumption} together with some finite-variance condition (to be specified in the appendix) hold. If $(d_1+d_2)\log (n_i)=o(n_i^{1/3})$ for $i=1,2,3$. For linear $\phi$, if the singular values of $\mathbb{E}X_1X_2^{\top}$ are finite and bounded away from zero, then
\begin{eqnarray*}
&&\mathbb{E} R(\widehat{W}^\top\widehat\phi,0) - R((W^*)^{\top}\phi^*,0) \\
&=& O\left( \frac{ d_2}{n_1+n_2}+\frac{d_1+d_2}{n_1+n_3} \right),
\end{eqnarray*}
where $\phi^*$ is the minimizer of the population loss of the pretext task, and $W^*$ is the minimizer of the population loss in the downstream task.
\end{theorem}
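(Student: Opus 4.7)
My plan is to reduce the excess-risk bound to a parameter bound on the effective classification direction $\widehat{\beta} := \widehat{A}^\top \widehat{W} \in \mathbb{R}^{d_1}$, where $\widehat{A} \in \mathbb{R}^{d_2\times d_1}$ is the linear pretext coefficient (so $\widehat{\phi}(x_1)=\widehat{A}x_1$) and $\widehat{W}\in\mathbb{R}^{d_2}$ is the downstream coefficient. Setting $\beta^* := (A^*)^\top W^*$, the classifier is $\mathrm{sgn}(\widehat{\beta}^\top x_1)$, so Assumption~\ref{assumption}(2) and a Taylor expansion on $\{\widehat{\beta}\in B(\beta^*,r)\}$ give that the excess risk is controlled by a constant multiple of $\|\widehat{\beta}-\beta^*\|^2$; the complementary event is handled by standard concentration under $(d_1+d_2)\log n_i = o(n_i^{1/3})$. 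Thus it suffices to bound $\mathbb{E}\|\widehat{\beta}-\beta^*\|^2$ via the decomposition
\begin{equation*}
\widehat{\beta}-\beta^* = (\widehat{A}-A^*)^\top W^* + (A^*)^\top(\widehat{W}-W^*) + (\widehat{A}-A^*)^\top(\widehat{W}-W^*),
\end{equation*}
in which the last piece is a product of two small quantities and so is higher order.

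First I would handle the pretext task. For linear $\phi$, $\widehat{A}=\widehat{\Sigma}_{21}\widehat{\Sigma}_{11}^{-1}$ is the OLS estimator over $n_1+n_3$ samples, with population target $A^* = \Sigma_{21}\Sigma_{11}^{-1}$; the singular-value assumption on $\mathbb{E}[X_1 X_2^\top]$ makes $A^*$ well-conditioned and the normal equations stably invertible with high probability. The key observation is that $(\widehat{A}-A^*)^\top W^*$ equals the OLS coefficient error of regressing the \emph{scalar} response $(W^*)^\top X_2$ on $X_1$, so the classical OLS Bahadur representation yields $\mathbb{E}\|(\widehat{A}-A^*)^\top W^*\|^2 = O(d_1/(n_1+n_3))$. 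In parallel I would verify the concentration bound $\|\widehat{A}-A^*\|_{\mathrm{op}}=o_p(1)$, which controls the higher-order cross term in the decomposition above.

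Next I would analyze the downstream M-estimator conditional on $\widehat{A}$. Set $g(W,A) = \mathbb{E}[AX_1\,\ell'(W^\top AX_1,Y)]$ and let $\widehat{g}_{\mathrm{down}}$ be its empirical counterpart on $S_1\cup S_2$; by optimality $\widehat{g}_{\mathrm{down}}(\widehat{W},\widehat{A})=0$ and $g(W^*,A^*)=0$. Taylor expansion at $(W^*,A^*)$ yields the Bahadur representation
\begin{equation*}
\widehat{W}-W^* = -H^{-1}\bigl\{\widehat{g}_{\mathrm{down}}(W^*,A^*)-g(W^*,A^*) + \partial_A g(W^*,A^*)[\widehat{A}-A^*]\bigr\} + r_n,
\end{equation*}
with $H=\partial_W g(W^*,A^*)$ invertible under the finite-variance condition, and $r_n$ a remainder controlled by the rate hypothesis. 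After composing with the bounded operator $(A^*)^\top H^{-1}$, the stochastic bracket is a mean-zero $\mathbb{R}^{d_2}$-valued sample average on $n_1+n_2$ observations, contributing $O(d_2/(n_1+n_2))$ to the expected squared norm of $(A^*)^\top(\widehat{W}-W^*)$.

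The principal technical hurdle is the bias term $(A^*)^\top H^{-1}\partial_A g(W^*,A^*)[\widehat{A}-A^*]$, since a naive Frobenius bound on $\widehat{A}-A^*$ gives only $O(d_1 d_2/(n_1+n_3))$, which is too loose. The way out is to exploit the first-order optimality $A^*\mathbb{E}[X_1\ell'(\beta^{*\top}X_1,Y)]=0$, which kills one piece of $\partial_A g(W^*,A^*)$ and leaves a two-stage-least-squares-like projection that effectively acts on only the $d_1$ OLS-relevant directions of $\widehat{A}-A^*$ when reduced back to the $\mathbb{R}^{d_1}$ target. Reusing the Step~1 OLS Bahadur analysis (with a $d_2$-dimensional response) for those directions produces a contribution of $O((d_1+d_2)/(n_1+n_3))$ to $(A^*)^\top(\widehat{W}-W^*)$. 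Plugging the Step~1 bound, the stochastic bound $O(d_2/(n_1+n_2))$, and the bias bound $O((d_1+d_2)/(n_1+n_3))$ into the three-term decomposition, and invoking Assumption~\ref{assumption}(2) one last time to pass back from $\|\widehat{\beta}-\beta^*\|^2$ to risk, yields the claimed rate $O(d_2/(n_1+n_2)+(d_1+d_2)/(n_1+n_3))$.
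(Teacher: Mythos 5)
Your plan is correct and follows essentially the same route as the paper's proof: the same three-term decomposition of $\widehat{A}^{\top}\widehat{W}-(A^*)^{\top}W^*$, the same OLS/Bahadur analysis of the pretext stage contributing $O(d_1/(n_1+n_3))$, the same estimating-equation (Taylor/Bahadur) expansion of the downstream stage separating an $O(d_2/(n_1+n_2))$ stochastic term from a term driven by $\widehat{A}-A^*$ that enters only through fixed directions (hence $O((d_1+d_2)/(n_1+n_3))$ rather than a Frobenius-type bound), and the same passage from $\|\widehat{\beta}-\beta^*\|^2$ to excess risk via Assumption \ref{assumption}(2).
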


% The proof of Theorem \ref{thm:mixed:upper} and \ref{thm:linear} are in the appendix.

% The intuition of the upper bound in Theorem \ref{thm:mixed:upper} is similar to the lower bound result. Viewing $X_2$ as a multivariate noise on the label $Y$, the estimation error of $\widehat{\phi}$ is $O(d_1/(n_1+n_3))$. Similar to Example \ref{example:mixed_gaussian}, the downstream task is only used to match $\widehat\phi$ with the sign of the label, so there is no further error involved.  For the upper bound in Theorem \ref{thm:linear}, in the pre-text task, $\widehat{\phi}$ involves estimation error of $O((d_1+d_2)/(n_1+n_3))$ since it uses $n_1+n_3$ samples with input dimension $d_1$ and output dimension $d_2$. In the downstream task, $\widehat{W}$ further involves an error of $O(d_2/(n_1+n_2))$ because it uses $n_1+n_2$ samples with input dimension $d_2$ and output dimension 1.

Together with the lower bounds obtained in Theorem \ref{thm:mixed:lower:no_ci}, the upper bounds in Theorem \ref{thm:mixed:upper} and \ref{thm:linear} indicate that SSL achieves minimax optimal for clean training, when $n_1+n_3\gg n_1+n_2$ under CI, or $n_1+n_3\gg (d_1+d_2)(n_1+n_2)/(d_1-d_2)$ without CI. This implies that SSL efficiently utilizes data information to achieve optimal convergence, while the deterioration of rate when CI fails is merely caused by information bottleneck of the data.

Furthermore, for commonly used model which only considers $(X_1,Y)$, the upper bound is $O(d_1/(n_1+n_2))$, e.g. \cite{xing2021generalization}. Compared to this rate, the upper bounds in the Theorem \ref{thm:mixed:upper} and \ref{thm:linear} are faster when $n_3$ is large. These observations imply that the reconstruction-based SSL does perform better than only studying the relationship between $X_1$ and $Y$. 
% This observation is true also for the adversarial training procedure we consider.

\vspace{-0.1in}
\paragraph{Simulation Study} We use the model in Example \ref{example:mixed_gaussian} to numerically verify the effectiveness of SSL under CI. We take $d_1=5$, $d_2=2$, the mean vector $\mu=\textbf{1}_{d_1+d_2}/\sqrt{d_1}$, and the covariance matrix $\Sigma=I_{d_1+d_2}$. We repeat 100 times to obtain the mean and variance of Regret. The sample size $n_2$ and $n_4$ are zero, and $n_1=100$. The results for plugin estimate are summarized in Table \ref{tab:clean:ssl}. From Table \ref{tab:clean:ssl}, SSL improves the performance when $n_3$ is large enough. The observations in logistic regression are similar (postponed to the appendix).

\begin{table*}[]
\centering
\caption{Regret in clean training under CI: SSL (plugin estimator/square loss) vs learning only labeled data.}\label{tab:clean:ssl}
\begin{tabular}{ccccc}
\hline
 $n_3$    & SSL (mean) & labeled (mean) & SSL (var)   & labeled (var)  \\\hline
500   & 0.01057 &  0.00959 & 7.07E-05 &  3.84E-05 \\
1000  & 0.00529 &  0.01017 & 1.83E-05 &  7.53E-05 \\
5000  & 0.00104 &  0.00970 & 2.01E-06 &  4.78E-05 \\
10000 & 0.00042 &  0.00876 & 1.55E-06 &  4.88E-05 \\
20000 & 0.00031 &  0.00974 & 6.94E-07 &  5.20E-05\\\hline
\end{tabular}

\end{table*}

\subsection{Adversarial Training}

Intuitively, a straightforward way to adapt SSL in adversarially robust learning is to perform the downstream task with adversarial loss. However, a simple example below illustrates that such a procedure may lead to a bias:

\begin{example}
Under the Gaussian mixture classification model in Example \ref{example:mixed_gaussian}, following \cite{dan2020sharp}, one can show that the population adversarial risk minimizer, for both the two loss functions, is a linear classifier whose coefficient vector is of the form $(A\Sigma_{1,1}^* +BI_{d_1})^{-1}\mu_1^*$. Using $\mathcal{F}$ considered in Example \ref{example:mixed_gaussian}, the decision boundary implied from $W^\top\widehat{\phi}$ (for any $W$) is always parallel to $\widehat{(\Sigma_{1,1})^{-1}\mu_1}$ which is a biased estimation for $(A\Sigma_{1,1}^* +BI_{d_1})^{-1}\mu_1^*$ if $\Sigma_{1,1}^*$ is not proportional to $ I_{d_1}$ .
\end{example}

To ensure the consistency of the adversarially robust estimator, one can borrow the idea of \cite{carmon2019unlabeled,uesato2019labels}: we first use SSL in clean training, and based on which, we create pseudolabel for data in  $S_3$ and $S_4$, then we perform an adversarial training using $S_1$ to $S_4$ with the pseudolabels. Algorithm \ref{alg:simple} summarizes this procedure.
\begin{algorithm}
	\caption{Adversarial Training with SSL (adv+SSL)}\label{alg:simple}
		\begin{algorithmic}
			\STATE {\bfseries Input:} data $S_i$ for $i=1,...,4$. Adversarial training configuration
% 			$(\eta,T,\epsilon,\alpha,T_{\alpha},...)$.
$(\eta,T,\epsilon,...)$.
			\STATE Use $S_1$ and $S_3$ to obtain $\widehat{\phi}$.
			\STATE Use $S_1$ and $S_2$ to obtain $\widehat{W}$.
			\STATE Create pseudolabel for samples in $S_3$ and $S_4$ as $\widehat{y}$. Take $\widehat{y}=y$ for $S_1, S_2$.
			\STATE Conduct adversarial training with configuration $(\eta,T,\epsilon,...)$ to obtain $\widehat{\theta}$ where
			$$\widehat{\theta}=\argmin_{\theta}\frac{1}{\sum n_i}\sum_{S_1,...,S_4}\max_{\widetilde{x}_1\in \mathcal{R}(x_1,\epsilon)}l(\theta^{\top}\widetilde{x}_1,\widehat{y}).$$
			\STATE{\bfseries Output:}   the robust model $\widehat{\theta}$.
		\end{algorithmic}
 	\end{algorithm}
 	
With an abuse of notation, we denote $R(\theta,\epsilon)$ as the risk of the linear classifier $\mbox{sgn}(\theta^\top \widetilde X_1)$.
Algorithm \ref{alg:simple} focuses on the linear classifier $\theta^\top\widetilde{x}_1$, but in real practice, one may train a nonlinear model in the adversarial training stage (e.g., a neural network).

\vspace{-0.1in}
\paragraph{How to obtain reasonable pseudolabels?} A key requirement of the pseudolabel is that the distribution of $(X_1,\widehat{Y})$ approximately matches $(X_1,Y)$. Thus $\widehat Y$ is a simple plug-in estimator $p(x_1;\widehat\beta)$ when we have the parametric form of $p$, i.e., $P(\widehat Y=1|X_1=x_1)$.  
% Besides, when CI condition holds, 
The following Gaussian mixture model example illustrates how to construct pseudolables for unlabeled data:

\begin{example}[Pseudolabel for Gaussian Mixture Model]\label{example:mixed:label} 
 When estimating $\phi$, we are considering the class of function $\mathcal{F}=\{\phi\mid \phi(x_1)=(2p(x_1;\Sigma_{1,1}^{-1}\mu_1)-1)\mu_2,\;\forall \Sigma_{1,1}^{-1}\mu_1,\mu_2 \}$. An estimate of $P(Y=1|X_1=x_1)$, i.e. $p(x_1;\widehat{\Sigma_{1,1}^{-1}\mu_1})$, can be directly obtained from the pretext task. This construction method can also be applied to general models in $\mathcal{P}$.
% For Gaussian mixture model with unknown $\Sigma_{1,1}^*$, one may also use $p(x_1,\widehat{\Sigma_{1,1}^{-1}\mu_1})$ following Example \ref{example:mixed_gaussian} as well.
\end{example}
% \begin{remark}
% Besides the general construction method in Example \ref{example:mixed:label}, we have other methods to impute pseudolabels in the Gaussian mixture model. Observe that $P(Y=1|X_1=x_1,X_2=x_2)$ equals to
% \begin{eqnarray*}
%     \frac{ \psi(X_1,X_2|Y=1)P(Y=1) }{\psi(X_1,X_2|Y=1)P(Y=1)+\psi(X_1,X_2|Y=-1)P(Y=-1)},
%     \end{eqnarray*}
%     where $\psi(\cdot|Y=y)$ is the pdf of $N(y[\mu_1,\mu_2],\Sigma)$ for $y=\pm 1$, and $P(Y=1)=1/2$.
%   Since $\mathbb{E}_{X_2}P(Y=1|X_1=x_1,X_2)=P(Y=1|X_1=x_1)$, it is also a proper estimate of $p$. 
% \end{remark}
% \begin{remark}
% In Example \ref{example:mixed:label}, both of above two lead to consistent estimate $\widehat\theta$ in adversarial training stage. The difference between using $P(Y=1|X_1=x_1)$ and $P(Y=1|X_1=x_1,X_2=x_2)$ in mainly on the estimation error. When estimating $\widehat{p}(x_1,x_2)$, some more parameters are estimated (e.g. $\widehat{\mu}_2$), so it involves larger error. 
% \end{remark}
The following theorem evaluates the convergence rate of Algorithm \ref{alg:simple} and shows its effectiveness:

\begin{theorem}\label{thm:mix:adv}
Assume Assumption \ref{assumption} and some finite-variance condition (in the appendix) hold.

(I) Assume the conditions in Theorem \ref{thm:mixed:upper} hold.
Denote $\theta^*=\argmin_{\theta} El(\theta^{\top}\widetilde{X}_1,{Y})$ as the optimal linear classifier using square loss/logistic regression.  Denote $\widehat\theta$ as the linear adversarially robust estimator obtained via Algorithm \ref{alg:simple}. If $\widehat{W}^{\top}\widehat{\phi}$ is unbiased, then for square loss/logistic regression, 
\begin{eqnarray*}
\mathbb{E}R(\widehat\theta,\epsilon)-R(\theta^*,\epsilon)= O\left(\frac{d_1}{n_1+n_3}\right).
\end{eqnarray*}
(II) Assume the conditions in Theorem \ref{thm:linear} hold, if $\widehat{W}^{\top}\widehat{\phi}$ is asymptotically unbiased, 
\begin{eqnarray*}
\mathbb{E}R(\widehat\theta,\epsilon)-R(\theta^*,\epsilon)= O\left(\frac{d_2}{n_1+n_2}+\frac{d_1+d_2}{n_1+n_3}\right).
\end{eqnarray*}
\end{theorem}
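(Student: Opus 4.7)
The plan is to characterize $\widehat\theta$ through the first-order optimality condition of its empirical objective, compare it with $\theta^*$ via a Bahadur-type representation, and handle the pseudolabel error using the (asymptotic) unbiasedness of $\widehat W^{\top}\widehat\phi$ together with Theorems~\ref{thm:mixed:upper} and \ref{thm:linear}. Denote $N = \sum_{i=1}^{4} n_i$ and write $\widehat g(\theta;\widehat y) = N^{-1}\sum_{S_1,\dots,S_4}\nabla_\theta \max_{\widetilde x_1 \in \mathcal R(x_1,\epsilon)} l(\theta^{\top}\widetilde x_1, \widehat y)$ for the empirical estimating function.

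First I would observe that $\theta^*$ solves the population estimating equation $\mathbb{E}[\nabla_\theta \max l(\theta^{*\top}\widetilde X_1, Y)] = 0$, whereas $\widehat\theta$ solves $\widehat g(\widehat\theta;\widehat y) = 0$. Taylor-expanding around $\theta^*$ and invoking Assumption~\ref{assumption}(2) for the invertibility of the Hessian $H$ of the population adversarial risk yields
\begin{equation*}
\widehat\theta - \theta^* = -H^{-1}\widehat g(\theta^*;\widehat y) + o_P(\|\widehat\theta-\theta^*\|).
\end{equation*}
Since Assumption~\ref{assumption}(2) also supplies $\nabla R(\theta^*,\epsilon)=0$ and local twice-differentiability of the risk in the parameter, a second Taylor expansion gives $R(\widehat\theta,\epsilon) - R(\theta^*,\epsilon) = O(\|\widehat\theta-\theta^*\|^2)$, so the whole task reduces to bounding $\mathbb{E}\|\widehat g(\theta^*;\widehat y)\|^2$.

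Next I would decompose $\widehat g(\theta^*;\widehat y)$ into a stochastic fluctuation plus a pseudolabel bias conditional on the first-stage estimates,
\begin{equation*}
\widehat g(\theta^*;\widehat y) = \bigl[\widehat g(\theta^*;\widehat y) - \mathbb{E}(\widehat g(\theta^*;\widehat y)\mid \widehat\phi,\widehat W)\bigr] + \mathbb{E}\bigl(\widehat g(\theta^*;\widehat y)\mid \widehat\phi,\widehat W\bigr).
\end{equation*}
The fluctuation is an average of $N$ conditionally i.i.d.\ summands, so its squared norm is $O(d_1/N)$, which is dominated by the target rates since $N \ge n_1+n_3$. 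For the bias term I would exploit that the square-loss and logistic-regression gradients are linear in $y$, so that the bias can be written as $\mathbb{E}[A(\widetilde X_1)(\widehat y(X_1)-Y)]$ plus a higher-order remainder, with $A$ coming from $\partial_y\nabla l$. The (asymptotic) unbiasedness $\mathbb{E}[\widehat y\mid X_1]=\mathbb{E}[Y\mid X_1]$ (to leading order) annihilates this first-order piece, leaving a second-order contribution of order $\mathbb{E}\|\widehat W^{\top}\widehat\phi - W^{*\top}\phi^*\|_2^2$, whose rate is supplied by Theorem~\ref{thm:mixed:upper} in Part~(I) and Theorem~\ref{thm:linear} in Part~(II). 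Substituting these rates reproduces the claimed bounds.

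The main obstacle I anticipate is the careful bookkeeping of the pseudolabel bias. Because the adversarial loss involves an inner maximization, its gradient in $\theta$ depends on the argmax location and is only piecewise smooth; showing that only the second-order pseudolabel remainder survives requires a uniform Lipschitz control of the outer gradient in $y$ together with the linearity of $\partial_y\nabla l$ for the specific losses considered. Once these regularity facts are in place and combined with the Bahadur expansion and Theorems~\ref{thm:mixed:upper}/\ref{thm:linear}, the remainder of the argument is essentially substitution.
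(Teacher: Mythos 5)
Your overall skeleton is the same as the paper's: expand the first-order condition of the adversarial objective around $\theta^*$ (a Bahadur-type representation with the Hessian inverted via the extra-variance assumption), split the estimating function at $\theta^*$ into a sampling fluctuation of order $d_1/\sum n_i$ plus a pseudolabel term, bound the pseudolabel term by the clean-stage rate from Theorem~\ref{thm:mixed:upper} (resp.\ Theorem~\ref{thm:linear}), and convert $\|\widehat\theta-\theta^*\|^2$ into regret. However, the step where you control the pseudolabel term contains a genuine error of mechanism. The unbiasedness of $\widehat W^{\top}\widehat\phi$ assumed in the theorem does \emph{not} give $\mathbb{E}[\widehat y\mid X_1]=\mathbb{E}[Y\mid X_1]$ conditionally on the fitted model: the imputed labels are drawn from $\widehat p(x_1)=p(x_1;\widehat\beta)$, so conditionally on the first stage the bias of the estimating function is exactly the first-order term $\tfrac{n_3+n_4}{\sum n_i}\,\mathbb{E}\bigl[(\widehat p(X_1)-p(X_1))(\xi_1(X_1,\theta^*)+\xi_2(X_1,\theta^*))\bigr]$, which is \emph{not} annihilated. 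Unbiasedness only makes its mean (over the training data) vanish to leading order; since the error $\widehat p-p$ is common to all $n_3+n_4$ pseudolabeled samples, it does not average out across samples, and what survives in $\mathbb{E}\|\widehat g(\theta^*;\widehat y)\|^2$ is the \emph{variance} of this first-order piece, which is of order $\mathbb{E}\|\widehat\beta-\beta^*\|^2\asymp d_1/(n_1+n_3)$ (this is precisely what the paper's Part~3 computes, term by term, via the Bahadur representation of $\widehat{\Sigma_{1,1}^{-1}\mu_1}$ and Assumption~\ref{assumption:extra}). If your argument literally relied on annihilation leaving only a remainder quadratic in $\widehat p-p$, the contribution would be $(d_1/(n_1+n_3))^2$ and your bookkeeping would not reflect where the dominant $d_1/(n_1+n_3)$ term actually comes from; the fact that you assign the surviving term the order $\mathbb{E}\|\widehat W^{\top}\widehat\phi-W^{*\top}\phi^*\|_2^2$ is numerically the right answer, but for the wrong reason, and a referee would not accept the step as written.

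Two further gaps, both handled explicitly in the paper: (i) you never establish consistency of $\widehat\theta$, which is needed before the Taylor expansion $\widehat\theta-\theta^*=-H^{-1}\widehat g(\theta^*;\widehat y)+o_P(\|\widehat\theta-\theta^*\|)$ is legitimate; the paper's Part~1 proves it from strong convexity of the adversarial loss, a uniform law over $\theta$ in a neighborhood of $\theta^*$, and consistency of $\widehat p$. (ii) Your claim that the fluctuation consists of ``conditionally i.i.d.\ summands'' given $(\widehat\phi,\widehat W)$ is false for $S_1,S_2,S_3$, since those samples were used to build the first-stage estimates; the paper separates $S_3$ from $S_4$ and verifies by direct moment calculations that this dependence only adds terms of order $d_1^2/(n_1+n_3)^2$. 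These can be repaired, but as proposed the proof is incomplete at exactly the points where the theorem's content lies.
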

The proof of Theorem \ref{thm:mix:adv} is postponed to the appendix. To build the connection between the clean training and the adversarial training, we borrow the idea from  semi-parametric problems to expand $p(x_1;\widehat{\Sigma_{1,1}^{-1}\mu_1})$ in the Taylor expansion for the estimation equation, e.g., \cite{wang2009locally}.

Theorem \ref{thm:mix:adv} shows the convergence rate of the estimator obtained in Algorithm \ref{alg:simple}. Again for adversarial training, SSL achieves minimax optimal when $n_3$ is large.

\vspace{-0.1in}
\paragraph{Effect of Accuracy of Imputed Labels} Theorem \ref{thm:mix:adv} establishes the convergence rate of the whole procedure in Algorithm \ref{alg:simple} where the SSL clean training stage helps estimate probability $p=P(Y=1|X_1=x_1)$. However, in real practice, when there is no parametric knowledge of the model (i.e., Assumption \ref{assumption} fails), it is not easy to obtain an accurate $\widehat{p}$, and people may consider directly using the predicted label as the pseudolabel. The following result illustrates how the accuracy of $\widehat{p}$ affects the convergence in logistic regression. For square loss, the condition is slightly different, and we postpone the discussion to the appendix.
\begin{proposition}\label{prop:mix:adv}
Under the conditions of Theorem \ref{thm:mix:adv}(I), assume one obtains some consistent $\widehat{p}$ such that $\mathbb{E}\|X_1\|^2\|\widehat{p}(X_1)-p(X_1)\|^2  \rightarrow 0$ in $n_1+n_3$, then for logistic regression, (1) $\widehat{\theta}$ is consistent to $\theta^*$; and (2) the convergence rate of $\widehat{\theta}$ is $O( d_1/(\sum n_i) + \mathbb{E}\|X_1\|^2\|\widehat{p}(X_1)-p(X_1)\|^2 )$.
\end{proposition}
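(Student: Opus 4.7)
The plan is to mimic the M-estimator analysis behind Theorem \ref{thm:mix:adv}(I) while tracking how the label error $\widehat p-p$ propagates into the adversarial score equation. Write the gradient of the adversarial logistic loss with a (hard or soft) label $z\in[0,1]$ as
\begin{eqnarray*}
m(\theta,z,x_1)=\bigl(\sigma(\theta^\top \widetilde x_1(\theta,x_1))-z\bigr)\,\widetilde x_1(\theta,x_1),
\end{eqnarray*}
where $\sigma$ is the logistic function, $\widetilde x_1(\theta,x_1)$ is the adversary's perturbation, and the $\{0,1\}$ label convention is used so that $\mathbb{E}[z\mid X_1]=p(X_1)$. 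The crucial observation is that $m$ is \emph{linear} in $z$; this is specific to the logistic loss and explains why the square-loss case is stated separately in the appendix.

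For part (1) I would first establish consistency. On any compact $\Theta$, the bound $|l(\theta^\top u,z')-l(\theta^\top u,z)|\le \|\theta\|\|u\|\,|z'-z|$, together with the hypothesis $\mathbb{E}\|X_1\|^2|\widehat p-p|^2\to 0$ and Cauchy--Schwarz, implies that the conditional population pseudolabel loss $\mathbb{E}_{X_1}[\max_{\widetilde x_1}l(\theta^\top \widetilde x_1,\widehat p(X_1))]$ converges uniformly in $\theta$ to its true-label counterpart. Standard empirical-process control over the fixed-dimensional compact $\Theta$ then upgrades this to $\widehat\theta\to\theta^*$ in probability.

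For the rate in part (2), introduce the infeasible estimator $\widetilde\theta$ that would result from running the same adversarial program using the true labels in $S_3\cup S_4$. Theorem \ref{thm:mix:adv}(I) applied with $\widehat p\equiv p$ gives $\|\widetilde\theta-\theta^*\|^2=O(d_1/N)$ with $N=\sum n_i$, via the local quadratic growth of $R(\cdot,\epsilon)$ at $\theta^*$. Taylor expanding the score for $\widehat\theta$ around $\widetilde\theta$, subtracting the score equation defining $\widetilde\theta$, and using the linearity of $m$ in the label, produces
\begin{eqnarray*}
H(\widehat\theta-\widetilde\theta)=\frac{1}{N}\sum_{S_3\cup S_4}\bigl(y_{j}-\widehat p(x_{1,j})\bigr)\widetilde x_{1,j}(\widetilde\theta)+o_p\bigl(\|\widehat\theta-\widetilde\theta\|\bigr),
\end{eqnarray*}
with $H$ the (invertible) population adversarial Hessian from Theorem \ref{thm:mix:adv}. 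Splitting the sum into a mean-zero fluctuation $N^{-1}\sum(y_j-p(x_{1,j}))\widetilde x_{1,j}$, whose squared norm is $O(d_1/N)$ by a routine variance calculation using $\|\widetilde X_1\|\le \|X_1\|+\epsilon$, and a bias $N^{-1}\sum(p(x_{1,j})-\widehat p(x_{1,j}))\widetilde x_{1,j}$, whose squared norm is at most $\mathbb{E}\|X_1\|^2|\widehat p(X_1)-p(X_1)|^2$ up to a lower-order $O(\epsilon^2\,\mathbb{E}|\widehat p-p|^2)$ term via Jensen plus Cauchy--Schwarz, then inverting $H$ and combining with $\|\widetilde\theta-\theta^*\|^2=O(d_1/N)$, yields
\begin{eqnarray*}
\|\widehat\theta-\theta^*\|^2=O\!\left(\frac{d_1}{N}+\mathbb{E}\|X_1\|^2|\widehat p(X_1)-p(X_1)|^2\right).
\end{eqnarray*}
Twice differentiability of $R(\cdot,\epsilon)$ at $\theta^*$ with a positive-definite Hessian (Assumption \ref{assumption}(2)) converts this into the claimed regret rate.

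The main obstacle is the statistical dependence between $\widehat p$ (fit on $S_1,S_3$ in the clean-SSL stage) and the $x_1$-values in $S_3$ that re-enter the adversarial stage, which technically prevents taking a plain expectation in the Cauchy--Schwarz step. I would handle this by conditioning on $\widehat p$ and observing that the bound $\mathbb{E}\|X_1\|^2|\widehat p-p|^2\to 0$ applies in probability under the conditioning, or, if a sharper argument is needed, by passing to a sample-splitting variant as in semi-parametric double-machine-learning. A secondary technicality is the non-differentiability of $\widetilde x_1(\theta,x_1)$ in $\theta$ induced by the inner maximum; for the linear models in scope, the attacker's direction is essentially constant in a neighborhood of $\theta^*$ with probability one, so the Taylor expansion and the invertibility of $H$ go through without modification.
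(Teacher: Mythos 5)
Your proposal is correct and takes essentially the same route as the paper: the paper's proof simply reuses Parts 1--2 of the proof of Theorem \ref{thm:mix:adv} (strong convexity plus uniform convergence of the pseudolabel score for consistency, then the Bahadur expansion in which the pseudolabel term $\Delta=\frac{\Gamma^{-1}}{\sum n_i}\sum_{S_3,S_4}(p(x_1)-\widehat p(x_1))\left(\xi_1(x_1,\theta^*)+\xi_2(x_1,\theta^*)\right)$ is bounded via Cauchy--Schwarz by $\mathbb{E}\|X_1\|^2\|\widehat p(X_1)-p(X_1)\|^2$ while the remaining noise terms are $O(d_1/\sum n_i)$), which is exactly the content of your argument. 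Your detour through the infeasible true-label estimator $\widetilde\theta$ and your explicit treatment of the dependence between $\widehat p$ and the $x_1$'s in $S_3$ are cosmetic (indeed slightly more careful) variants of the same linearization.
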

% For some real applications \citep[e.g.,][]{carmon2019unlabeled}, it is possible to obtain an accurate $\widehat{p}$ without concrete parametric assumption. For example, in real images, $p$ is usually either 0 or 1 within the support, thus by Proposition \ref{prop:mix:adv}, using the prediction of the trained classifier as the pseudolabel leads to a reasonable adversarial estimator $\widehat\theta$.
\vspace{-0.1in}
\paragraph{Simulation Study} Our aim is to numerically verify: (1) Algorithm \ref{alg:simple} improves the overall performance; and (2) the dataset $S_4$ is not the bottleneck of the convergence, which is an observation from the comparison among upper bounds and lower bounds as discussed Section \ref{sec:lower_bound}. Similar to Table \ref{tab:clean:ssl}, we take  $d_1=5$, $d_2=2$. The mean and variance are $\mu=\textbf{1}_{d_1+d_2}/\sqrt{d_1}$, $\Sigma=I_{d_1+d_2}$ respectively. We consider $\mathcal{L}_2$ attack with $\epsilon=0.1$ in this experiment. 
% The data generation follows the same as Example \ref{example:mixed_gaussian}.

\begin{table*}[]
\centering
\caption{Average Regret of adversarially robust estimate under CI condition. $n_1=100$. The variance information is in Table \ref{tab:adv:ssl:var} in the appendix. }\label{tab:adv:ssl}
\begin{tabular}{cccccc}
\hline  $n_3$    & benchmark       & adv+SSL($S_1$,$S_3$)              &  adv($S_1$)             & adv+SSL($S_1$,$S_3$,$S_4$)         & adv+pseudo label($S_1$,$S_3$)             \\\hline
  500   & 0.00771 & 0.01264 & 0.01041 & 0.01211 & 0.00965 \\
  1000  & 0.00543 & 0.01195 & 0.01040 & 0.01046 & 0.00946 \\
  5000  & 0.00150 & 0.00543 & 0.00897 & 0.00526 & 0.00916 \\
  10000 & 0.00070 & 0.00332 & 0.01008 & 0.00330 & 0.00898 \\
  20000 & 0.00050 & 0.00213 & 0.00956 & 0.00185 & 0.00907    \\\hline             
\end{tabular}

\end{table*}

In Table \ref{tab:adv:ssl}, the benchmark algorithm is omnipotent and performs standard adversarial training on $S_1$ to $S_3$, with labels in $S_3$ known. For benchmark and the other methods except for adv+SSL($S_1,S_3,S_4$), they do not use $S_4$, while for the method adv+SSL($S_1,S_3,S_4$), $n_4=n_3$. The method adv($S_1$) means to use adversarial training on the dataset $S_1$ only. The method adv+pseudo label($S_1$,$S_3$) is to use clean training in $S_1$ to impute the label for samples in $S_3$ and then conduct adversarial training.

From Table \ref{tab:adv:ssl}, we have some observations.

First, the quality of the imputed labels affects the adversarial robustness, and unlabeled data helps improve adversarial robustness. Comparing the benchmark and the other methods, the benchmark has a better clean training stage result, i.e., the true label, thus the final adversarial estimate is better. Comparing adv+SSL($S_1$,$S_3$),  adv($S_1$), and adv+pseudo label($S_1$,$S_3$), we claim that using unlabeled data helps improve the estimation efficiency. 

In addition, $S_4$ only slightly contributes to the improvement of adversarial robustness. Comparing adv+SSL($S_1$,$S_3$) and adv+SSL($S_1$,$S_3$,$S_4$), we see that the additional data $S_4$ do not significantly improve the estimation efficiency. Similar observations can be found in Table \ref{tab:adv:ssl:log} for logistic regression (in appendix).

\section{Additional Discussions}\label{sec:discussion}
We provide some additional discussions as by-products of the analysis above. In the main text, we provide theoretical results associated with two-layer neural networks in SSL. Due to the space limit, we postpone the discussion about the linear regression model with linear $\phi$  to the appendix. 
% In short, SSL achieves the optimal convergence rate under proper conditions.

\subsection{Neural Networks}
\label{sec:mixed:nn}
The design of $\phi$ in the previous sections is based on the parametric knowledge of the data generating model. We consider using a two-layer neural network as a ``nonparametric'' alternative to model $\phi$ while such knowledge is unavailable. In the literature, there are abundant results on the expressibility or fitting convergence of neural networks, e.g. \cite{schmidt2020nonparametric,bauer2019deep,elbrachter2019deep,hu2020optimal,hu2021regularization,farrell2021deep}. 

We follow \cite{hu2021regularization} to consider an easy-to-implement estimation procedure. To be specific, we use a two-layer neural network $\phi=(\phi_1,\dots,\phi_{d_2})$ with
\begin{eqnarray*}
&& \phi_k(x)=\frac{1}{\sqrt{m}}\sum_{j=1}^m a_{j,k} \sigma(w_{j,k}^{\top}x), \quad w_{j,k}\in\mathbb{R}^{d_1},
\end{eqnarray*}
where $a_{j,k}$ are generated from $\{\pm 1\}$ uniformly. The weights $w_{j,k}$'s are initialized from $N(0,\tau^2 I_{d_1})$ and trained with an $\mathcal{L}_2$ penalty with multiplier $\lambda$.

\begin{proposition}\label{prop:nn}
Let $d_1$ and $d_2$ be fixed and CI holds. Assume $\widehat{\phi}$ is trained under proper configurations and the distribution of $(X_1,Y)$ satisfies some extra conditions. Assume $n_3$ is large enough so that $n_3\gg \text{poly}(n_1+n_2)$ for some polynomial of $n_1+n_2$. Both $\widehat{W}$ and $\widehat{\phi}$ in the SSL procedure are regression estimator. Then with high probability,
\begin{eqnarray*}
R(\widehat{W}\widehat{\phi},0)-R^*(0)=o\left(\frac{1}{n_1+n_2}\right).
\end{eqnarray*}
\end{proposition}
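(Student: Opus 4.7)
The plan is to separate the two learning stages and exploit the fact that, under CI, the pretext regression target $\phi^*$ carries all of the Bayes-relevant information about $Y$, so the downstream estimator $\widehat W$ plays only an ``indexing'' role. The top-level decomposition
\[
\widehat W^\top\widehat\phi(X_1)-(W^*)^\top\phi^*(X_1)=\widehat W^\top(\widehat\phi(X_1)-\phi^*(X_1))+(\widehat W-W^*)^\top\phi^*(X_1),
\]
together with the local second-order behaviour of $R(\cdot,0)$ around the Bayes rule granted by Assumption~\ref{assumption}(2) (the first-order term vanishes because $(W^*)^\top\phi^*$ induces the Bayes classifier under CI), reduces the claim to showing that both summands above are $o(1/\sqrt{n_1+n_2})$ in mean square.

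For the pretext task, under CI the target
\[
\phi^*(x_1)=\E[X_2\mid X_1=x_1]=(2p(x_1;\beta^*)-1)\mu_2^*+c
\]
is a smooth function of $x_1$ with fixed dimension $d_1$. I would invoke the two-layer ReLU regression analysis of \cite{hu2021regularization}: with the prescribed initialization variance $\tau^2$, penalty $\lambda$, width $m$, and step-size schedule, the trained $\widehat\phi$ satisfies $\E_{X_1}\|\widehat\phi(X_1)-\phi^*(X_1)\|^2=O(n_3^{-\gamma})$ with high probability for some fixed $\gamma>0$ depending only on the smoothness of $\phi^*$ and on $d_1$. The hypothesis $n_3\gg\mathrm{poly}(n_1+n_2)$, applied with exponent larger than $1/\gamma$, upgrades this to $o(1/(n_1+n_2))$. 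Multiplying by $\|\widehat W\|^2$, which is $O_p(1)$ since $\widehat W$ converges to a bounded population minimizer, controls the first term of the decomposition.

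For the second term, CI forces $\phi^*(X_1)$ to lie on a one-dimensional affine manifold spanned by $\mu_2^*$, so the population regression of $Y$ on $\phi^*(X_1)$ has a degenerate second-moment matrix and only the projection of $W$ onto $\mu_2^*$ is identifiable. Exactly as in Example~\ref{example:mixed_gaussian}, the component of $\widehat W-W^*$ orthogonal to $\mu_2^*$ does not influence the sign of the score, while the scalar component along $\mu_2^*$ merely rescales $(W^*)^\top\phi^*(X_1)$ and hence preserves its sign once estimated with the correct sign, which occurs with probability tending to one as $n_1+n_2$ grows. Feeding these two facts into the quadratic Taylor expansion of $R(\cdot,0)$ at the Bayes rule gives that the contribution of the second term is $o(1/(n_1+n_2))$; combined with the pretext bound, the total regret is $o(1/(n_1+n_2))$ with high probability.

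The hard part is making the ``orthogonal component is irrelevant'' argument genuinely rigorous in finite samples, since $\widehat\phi(X_1)$ does not exactly lie on the 1D manifold of $\phi^*(X_1)$: writing $\widehat\phi=\phi^*+\eta$ and expanding $\widehat W^\top\widehat\phi$ produces coupling terms of order $\|\widehat W\|\cdot\|\eta\|_{L^2}$, which are $o(1/\sqrt{n_1+n_2})$ only because the pretext rate is so fast. A secondary technical issue is decoupling $\widehat\phi$ from the downstream empirical loss (since $S_1$ participates in both stages); this can be handled either by sample splitting or by a uniform concentration bound over the low-complexity (fixed $d_1,d_2$) class of NTK-approximable two-layer networks used in \cite{hu2021regularization}. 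Finally, one must check that the ``extra conditions'' on $(X_1,Y)$ amount to $\phi^*$ lying in the Barron-type class for which the NN rate $n_3^{-\gamma}$ holds, which is inherited from the smoothness of $p(\cdot;\beta^*)$ in Assumption~\ref{assumption}.
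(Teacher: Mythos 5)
Your overall route is essentially the paper's: the same decomposition of $\widehat W^\top\widehat\phi-(W^*)^\top\phi^*$ into a pretext error piece and a downstream piece, the same appeal to \cite{hu2021regularization} for the two-layer-network regression rate (the paper obtains $\|\widehat\phi-\phi^*\|^2_{L_2}=O_p(n^{-d/(2d-1)})$ with $n$ the pretext sample size), and the same CI-based observation that, since $\phi^*(x_1)=p(x_1)\mu_2+(1-p(x_1))\mu_2'$ lives on a one-dimensional affine manifold, the estimation error in $\widehat W$ never flips $\sgn(\widehat W^\top\phi^*(x_1))$ relative to the Bayes rule; the paper additionally defines $\widehat W$ through a ridge limit $\lambda\to 0$ to cope with the rank-deficient Gram matrix $\phi^*(X_1')^\top\phi^*(X_1')/(n_1+n_2)$, which is the rigorous version of your ``only the $\mu_2^*$-projection is identifiable'' remark.

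The one step that does not hold as stated is your regret reduction. Assumption \ref{assumption}(2) gives twice differentiability of $R(2p(\cdot;\beta)-1,\epsilon)$ in the finite-dimensional parameter $\beta$; it says nothing about arbitrary function-valued perturbations of the score such as $\widehat W^\top(\widehat\phi-\phi^*)$, where $\widehat\phi$ is a neural network, so it does not yield ``regret $\lesssim$ mean-squared score error,'' and standard plug-in/margin arguments with only $L_2$ control of the perturbation give a smaller exponent than $2$ unless you also control a sup-norm or the margin density. The paper avoids this entirely: it writes the regret as $\int|1/2-p(x_1)|\,1\{\text{disagreement with Bayes}\}\,dP(x_1)$, uses the comparison $|(W^*)^\top\phi^*(x_1)|\geq c\,|1/2-p(x_1)|$ (this is where the non-degeneracy of the conditional means, $\|\mu_2\|,\|\mu_2'\|=\Theta(1)$ in the paper's notation, is needed), and then applies Cauchy--Schwarz to conclude regret $\lesssim\|\widehat\phi-\phi^*\|_{L_2}=O_p\bigl((n_1+n_3)^{-d/(2(2d-1))}\bigr)$ --- a bound \emph{linear}, not quadratic, in the pretext $L_2$ error. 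This weaker exponent costs nothing because $n_3\gg\mathrm{poly}(n_1+n_2)$ with an arbitrarily large polynomial absorbs it, so your conclusion survives; but you should replace the second-order Taylor step by this margin-comparison argument (or add the extra conditions a genuinely quadratic regret bound would require). Your point about decoupling $S_1$ from the two stages is a real technicality that the paper's proof also glosses over, so it is not a gap relative to the paper.
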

The detailed conditions in Proposition \ref{prop:nn} are postponed to Appendix \ref{sec:appendix:proof3}. From Proposition \ref{prop:nn}, when there are sufficient samples in $S_3$, the SSL procedure will improve the accuracy in clean training even if we do not have parametric knowledge for the model. 
\begin{figure*}
    \centering
    \includegraphics[scale=0.45]{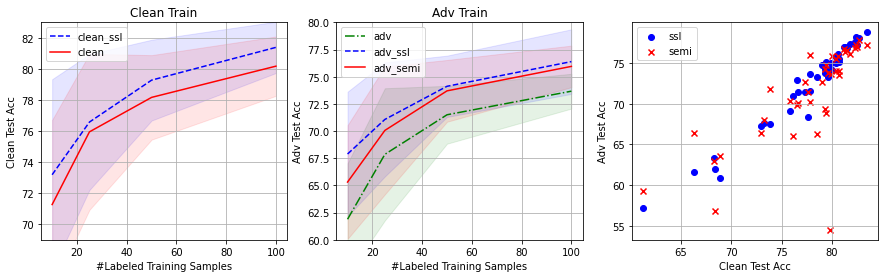}
\caption{Clean and adversarial ($\mathcal{L}_{\infty}$ attack with $\epsilon=4/255$) test accuracy (gender classification) using Yearbook data. Left: clean training accuracy. clean\_ssl: SSL clean training. clean: benchmark clean training. Middle: adversarial training accuracy adv: standard adversarial training with only labeled data. adv\_ssl: adv+SSL algorithm. adv\_semi: \cite{carmon2019unlabeled}. Right: clean test accuracy in clean training vs adversarial test accuracy in adversarial training.   
}\label{tab:yearbook}
\end{figure*}

\vspace{-0.1in}
\paragraph{Real-Data Experiment}
We use the Yearbook dataset from \cite{ginosar2015century}. We consider a two-layer ReLU network with lazy training (this network matches Proposition \ref{prop:nn}) for $\phi$. Since all of our theoretical results are developed under large-sample asymptotics, we resize the images to 32x32, take the center 16x16 patch of an image as $X_2$, and take the rest as $X_1$. The goal is to classify the gender of each image. We minimize the square loss to obtain a classifier.

For the clean training task, in the pretext task, we randomly select 20,000 samples and regress $X_2$ on $X_1$ to learn the representation mapping $\widehat\phi$ using a two-layer ReLU network with lazy training and taking $m$, the number of hidden nodes, as $1000$. Since the data dimension $d_1$ and $d_2$ are comparable to 20,000, we add an $\mathcal{L}_2$ penalty in the regression loss. The pretext task is trained by 100 epochs. For the downstream task, we take $10,25,50,100$ samples with their labels to obtain $\widehat{W}$. Using square loss, there is an analytical solution of $\widehat{W}$. We add an $\mathcal{L}_2$ penalty to the square loss and tune it to achieve the best prediction accuracy. For adversarial training task, 
% again, since the number of samples is not sufficiently larger than the data dimension, the estimated $\widehat{p}$ from the previous clean training is not accurate enough, and 
we use SSL in clean training and impute labels for unlabeled data,
% (refer to Proposition \ref{prop:mix:adv} and its discussion)
 and use all data with labels/pseudolabels to train a two-layer ReLU network (with 1000 hidden node) as the adversarial classifier.  We use $\mathcal{L}_{\infty}$ attack with $\epsilon=4/255$.

To assess the performance of SSL for the clean training task, we compare it against a benchmark clean training algorithm: we directly use the two-layer ReLU network with 1000 hidden nodes (train all layers) on the labeled samples for 100 epochs with a learning rate of 0.1. For fairness, we also add and tune the $\mathcal{L}_2$ penalty to achieve the best performance. To assess the performance of adv+SSL algorithm for the adversarial training task, we compare it against two benchmark adversarial training algorithms: (1) We conduct the standard clean training using only the labeled data $(x_1,y)$ and then impute labels for unlabeled data to do adversarial training, i.e., the exact algorithm in \cite{carmon2019unlabeled}. (2) We only use the given labeled data to conduct adversarial training and do not use the unlabeled data. We use a two-layer ReLU network (with 1000 hidden nodes) for both benchmarks.

The experiment results are summarized in Figure \ref{tab:yearbook}, based on the average and variance of testing accuracies of over ten repeated runs. There are three figures in Figure \ref{tab:yearbook}. The left panel of Figure \ref{tab:yearbook} compares the clean testing accuracy, and it is easy to see that SSL leads to a higher accuracy than the benchmark method. The middle panel of Figure \ref{tab:yearbook} compares the adversarial testing accuracy, and one can see that utilizing unlabeled data helps improve the testing performance. 
If we compare the blue dashed curve and the red curve in both left and middle plots, it suggests that a better clean model (used to generate pseudolabels) leads to a better adversarial model. To confirm this, in the right panel of Figure \ref{tab:yearbook}, we plot the clean testing accuracy in clean training against the adversarial testing accuracy in adversarial training to study how the quality of the pseudolabels affects the final adversarial robustness. One can observe a positive correlation between these two accuracies, implying a positive correlation between the pseudolabel quality and final adversarial robustness. We conjecture that the improvements in Figure \ref{tab:yearbook} is not as remarkable as Table \ref{tab:adv:ssl} in simulation due to (1) from Proposition \ref{prop:nn}, the required $n_3$ is much larger than Theorem \ref{thm:mix:adv}, and (2) the data dimension $d_2$ for real data is larger than simulated model, involving more error in estimating $\widehat\phi$.

\section{Conclusion}

In this paper, we investigate the statistical properties of reconstruction-based SSL. In particular, we study the minimax lower bound of estimation accuracy and the adversarial robustness. Through figuring out these properties, we argue that (1) in clean training, no matter CI holds or not, reconstruction-based SSL reaches the optimal rate of convergence in the models we consider; and (2) it is possible to design adversarially robust estimate such that it is also optimal. These advantages of the SSL method lead to a better performance of SSL compared with the vanilla training.
% and some other semi-supervised methods. Numerical results support our arguments. 

There are several potential directions for future development. First, the procedure for adversarial training considered is tedious (i.e., pretext task and downstream task in clean training, and the adversarial training itself). It is interesting to simplify the procedure.
Second, we only provide some light discussion using neural networks for the pretext task when the parametric form of $P(Y=1|X_1=x_1)$ is unknown. An in-depth investigation on this matter is definitely worthwhile, as deep neural networks have been viewed as a powerful nonparametric learning tool for modern data sciences.
Third, this paper only discusses reconstruction-based SSL, and it is of great interest to generalize our analysis to other types of SSL-based methods.

\section{Acknowledgements}
This project is partially supported by NSF-SCALE MoDL (2134209).

\bibliographystyle{asa}
\bibliography{regression}

\newpage
\onecolumn
\appendix
% The structure of the appendix is as follows. Section \ref{sec:appendix:regression} shows the theoretical analysis associated with regression. In Section \ref{sec:appendix:table}, we present the experimental results postponed from the main text. Sections \ref{sec:appendix:proof1}, \ref{sec:appendix:proof2}, and \ref{sec:appendix:proof3} provide the proofs for theorems in Sections 2, 3, 4.  Section \ref{sec:adv:ssl:another} provides another example of utilizing SSL in adversarial training, and we reveal potential problems within this design.

Below is the list of contents in the appendix:
\begin{itemize}
    \item Section A: discussion about regression.
    \item Section B: additional experiments, extra tables, and results regarding to neural networks.
    \item Section C: additional assumptions on the finite variance, and lemmas which provide minimax lower bound for some particular distributions.
    \item Section D: proofs for results in Section 2 and 3.
    \item Section E: proofs for results in Section 4 and A.
    \item The derivations for different losses are similar, so during the proofs, we firstly present the proof details for one loss, then display the proof for the other losses in a separate proof block to mention the differences. 
\end{itemize}

\section{Regression}\label{sec:appendix:regression}
For regression, we assume $X_1$ and $X_2$ jointly follow some multivariate Gaussian distribution with $Var(X_1)=I_{d_1}$, $Var(X_2)=I_{d_1}$. The singular values of $Cov(X_1,X_2)$ are finite and bounded away from zero. The response satisfies $y=\theta_0^{\top}x_1+\varepsilon$ for noise $\varepsilon$. The following theorem presents the convergence rate of the SSL estimate and the minimax lower bound:
\begin{theorem}\label{thm:regression}
For linear regression model described in above, assume $\theta_0=Cov(X_1,X_2)a_0$, then
\begin{eqnarray*}
\mathbb{E} R(\widehat{W}^{\top}\widehat{\phi},\epsilon) - R^*(\epsilon) = O\left( \frac{\sigma^2 d_2}{n_1+n_2}+\frac{\|a_0\|^2(d_1+d_2)}{n_1+n_3} \right),
\end{eqnarray*}
which is minimax optimal when $n_1+n_3\gg (n_1+n_2)\|a_0\|^2(d_1+d_2)/(\sigma^2(d_1-d_2))$.
\end{theorem}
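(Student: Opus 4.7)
The plan is to prove this by matching standard multivariate/reduced-rank regression analysis with the Bahadur expansion style used in Theorems \ref{thm:mixed:upper}, \ref{thm:linear}, and then adapting the Fano-type lower bound of Theorem \ref{thm:mixed:lower:no_ci} to the regression risk. First I would identify the population minimizers explicitly. Since $\phi$ is linear, write $\phi(x_1)=B^\top x_1$ with $B\in\mathbb R^{d_1\times d_2}$. Under the stated covariance structure, the population pretext minimizer is $B^*=\mathrm{Cov}(X_1,X_2)$ and, using $\theta_0=\mathrm{Cov}(X_1,X_2)a_0$, the population downstream minimizer is $W^*=a_0$ (the unique solution whenever $\mathrm{Cov}(X_1,X_2)$ has full column rank, which is guaranteed by the bounded-singular-value assumption). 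Note $B^*W^*=\theta_0$, so the linear-in-$x_1$ SSL classifier $\widehat{W}^\top\widehat\phi(x_1)=(\widehat B\widehat W)^\top x_1$ targets the correct $\theta_0$.

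For the upper bound, set $\widehat\theta:=\widehat B\widehat W$ and decompose
\begin{equation*}
\widehat\theta-\theta_0=(\widehat B-B^*)W^*+\widehat B(\widehat W-W^*).
\end{equation*}
For the pretext term, $\widehat B$ is multivariate OLS on $n_1+n_3$ samples, and the standard Bahadur expansion (applied columnwise, or equivalently to the scalarized regression with response $W^{*\top}X_2$) yields $\mathbb E\|(\widehat B-B^*)W^*\|^2=O(d_1\,W^{*\top}\Sigma_\eta W^*/(n_1+n_3))=O(\|a_0\|^2 d_1/(n_1+n_3))$, where $\Sigma_\eta=\mathrm{Var}(X_2-B^{*\top}X_1)$ is bounded by the finite-singular-value hypothesis. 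For the downstream term, doing least squares on $(\widehat\phi(X_1),Y)$ over $n_1+n_2$ samples, I would expand $\widehat W-W^*$ to leading order as $(\mathbb E[\widehat\phi\widehat\phi^\top])^{-1}\cdot\tfrac{1}{n_1+n_2}\sum\widehat\phi(X_{1,i})\varepsilon_i$ plus a remainder driven by $\widehat B-B^*$, which yields a principal $O(\sigma^2 d_2/(n_1+n_2))$ contribution and a negligible cross term of order $\|a_0\|^2 d_2/(n_1+n_3)$. Combining gives $\mathbb E\|\widehat\theta-\theta_0\|^2=O\bigl(\sigma^2 d_2/(n_1+n_2)+\|a_0\|^2(d_1+d_2)/(n_1+n_3)\bigr)$. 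Converting to adversarial regression risk: for the linear predictor under $\mathcal L_2$ or $\mathcal L_\infty$ attack, the population adversarial square loss is a smooth function of $\theta$ near $\theta_0$ with bounded Hessian (using the Gaussian design and Assumption \ref{assumption}(2) extended to regression), so a Taylor expansion yields $R(\widehat\theta,\epsilon)-R(\theta_0,\epsilon)\lesssim\mathbb E\|\widehat\theta-\theta_0\|^2$, producing the stated rate.

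For the matching minimax lower bound I would mirror the strategy of Theorem \ref{thm:mixed:lower:no_ci}: build two hypothesis subfamilies, one varying $a_0$ in $\mathbb R^{d_2}$ while fixing $B^*$, another varying $B^*$ along directions that move $\theta_0$ in $\mathbb R^{d_1}$ while fixing $a_0$, and apply the Fano/Le Cam calculations from Appendix C. The first family has only $(X_1,Y)$ as informative data, yielding the $\sigma^2 d_2/(n_1+n_2)$ term; the second additionally uses $S_3$, yielding the $\|a_0\|^2(d_1+d_2)/(n_1+n_3)$ term after a local-curvature lower bound on the regression risk. Taking the minimum of the two subfamily bounds (actually the maximum of the two separate lower bounds, which here behaves as a sum under the stated regime $n_1+n_3\gg(n_1+n_2)\|a_0\|^2(d_1+d_2)/(\sigma^2(d_1-d_2))$) recovers optimality.

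The main obstacle I anticipate is the downstream step: $\widehat\phi$ uses $S_1$ which is reused in the downstream regression, so $\widehat W$ and $\widehat B$ are dependent. Resolving this requires either a sample-splitting argument to decouple or a careful Bahadur expansion that treats $\widehat B-B^*$ as a small perturbation and absorbs the resulting cross terms into the remainder—essentially the same device used in the proofs of Theorems \ref{thm:mixed:upper} and \ref{thm:linear}. A secondary subtlety is verifying that the regression adversarial risk has the required local smoothness near $\theta_0$ (the analogue of Assumption \ref{assumption}(2)) so that the quadratic bound $R(\widehat\theta,\epsilon)-R(\theta_0,\epsilon)=O(\|\widehat\theta-\theta_0\|^2)$ holds uniformly.
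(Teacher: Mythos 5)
Your clean-training analysis is essentially the paper's: the same decomposition $\widehat\theta-\theta_0=(\widehat\phi-\phi^*)^\top W^*+(\phi^*)^\top(\widehat W-W^*)+o$, the identification $\phi^*(x_1)=\Sigma_{2,1}x_1$, $W^*=a_0$ (so the estimator is asymptotically unbiased for $\theta_0=\Sigma_{1,2}a_0$), and the resulting rate $O\bigl(\sigma^2 d_2/(n_1+n_2)+\|a_0\|^2(d_1+d_2)/(n_1+n_3)\bigr)$ for $\mathbb{E}\|\widehat\theta-\theta_0\|^2$. Your lower-bound sketch also mirrors the paper's two-subfamily structure (vary $a$ with $\Sigma_{1,2}$ fixed, vary $\Sigma_{1,2}$ with $a$ fixed), though the paper proves both pieces by Gaussian-prior Bayes/posterior computations (Lemmas \ref{lem:minimax:a} and \ref{lem:minimax:covariance}, the latter requiring a delicate likelihood-ratio approximation and the regime $n_1+n_2=o(n_1+n_3)$, plus handling the labeled-data likelihood factor) rather than Fano/Le Cam; that substitution is plausible but you would have to redo the hard part of Lemma \ref{lem:minimax:covariance}, namely that the labeled samples do not improve the $(d_1+d_2)/(n_1+n_3)$ rate for $\Sigma_{1,2}a$, not merely cite ``Appendix C calculations.''

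The genuine gap is the adversarial case $\epsilon>0$. You bound $R(\widehat\theta,\epsilon)-R(\theta_0,\epsilon)$ by a local quadratic expansion around $\theta_0$, but the theorem compares against $R^*(\epsilon)$, and in adversarial \emph{regression} the population robust minimizer is not $\theta_0$: it is $\theta(\epsilon)=(\Sigma_{1,1}+\lambda I_{d_1})^{-1}\Sigma_{1,1}\theta_0$ for some $\lambda=\lambda(\epsilon)>0$ (a shrunken coefficient). Unlike classification, where only the direction matters and the issue disappears when $\Sigma_{1,1}=I_{d_1}$, in regression the scale enters the square loss, so $R(\theta_0,\epsilon)-R^*(\epsilon)$ is a constant-order bias and your chain of inequalities cannot yield the stated rate; moreover the gradient of the adversarial risk at $\theta_0$ is nonzero, so even the first bracket is not quadratic in $\|\widehat\theta-\theta_0\|$. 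The paper closes this by following the decomposition of \cite{xing2021adversarially}: the robust estimator is built from the clean SSL estimate together with an estimate of $\Sigma_{1,1}$, and since $x_1$ appears in all of $S_1,\dots,S_4$, the extra error from estimating $\Sigma_{1,1}$ is $O(d_1/\sum n_i)$ and does not change the rate; correspondingly, the adversarial lower bound needs an additional scenario placing a prior on $\Sigma_{1,1}$ with $\theta_0$ known, again contributing only $\Theta(d_1/\sum n_i)$. Your proposal is missing this adaptation step on both the upper- and lower-bound sides.
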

Note that the condition $\theta_0=Cov(X_1,X_2)a_0$ implies that the final estimate $\widehat{W}^{\top}\widehat{\phi}$ is asymptotically unbiased to $\theta_0$. In comparison, Theorem \ref{thm:regression} delivers a similar conclusion to the Theorem \ref{thm:mix:adv}(II). Although the convergence rate of SSL estimate is slower without CI, it still reaches the minimax lower bound. The proof of Theorem \ref{thm:regression} is postponed to Section \ref{sec:appendix:proof3}. 

We only present the case where CI fails in Theorem \ref{thm:regression} as CI is not appropriate in the model we consider. The following example illustrates this issue:
\begin{example}\label{example:zero_variance}
Assume $(X_1,X_2,\varepsilon)$ follows multivariate Gaussian, then CI implies $\sigma^2=0$.
\end{example}

\section{Extra Tables and Additional Experiments}\label{sec:appendix:table}

\begin{table*}[!ht]
\centering
\caption{Variance table corresponding to Table \ref{tab:adv:ssl}}\label{tab:adv:ssl:var}
\begin{tabular}{cccccc}
\hline  $n_3$    & benchmark      & adv+SSL($S_1$,$S_3$)              &  adv($S_1$)             & adv+SSL($S_1$,$S_3$,$S_4$)         & adv+pseudo label($S_1$,$S_3$)              \\\hline
  500   & 1.14E-05 & 3.55E-04 & 7.09E-05 & 3.18E-04 & 3.53E-05 \\
  1000  & 8.43E-06 & 8.35E-05 & 6.71E-05 & 9.69E-05 & 3.07E-05 \\
  5000  & 7.85E-07 & 8.68E-06 & 4.58E-05 & 8.62E-06 & 2.27E-05 \\
  10000 & 3.01E-07 & 4.61E-06 & 6.12E-05 & 5.65E-06 & 1.62E-05 \\
  20000 & 2.06E-07 & 2.48E-06 & 5.11E-05 & 1.42E-06 & 1.79E-05  \\\hline  
\end{tabular}
\end{table*}

\begin{table*}[!ht]
\centering
\begin{tabular}{ccccc}
\hline n3    & SSL(mean) & labeled(mean) & SSL(var)   & labeled(var)  \\\hline
500   & 0.00946 & 0.02307 & 4.17E-05 & 3.19E-04 \\
1000  & 0.00458 & 0.02076 & 1.24E-05 & 3.14E-04 \\
5000  & 0.00106 & 0.02304 & 1.84E-06 & 3.42E-04 \\
10000 & 0.00049 & 0.02375 & 1.25E-06 & 2.72E-04 \\
20000 & 0.00015 & 0.02394 & 6.79E-07 & 2.23E-04 \\\hline
\end{tabular}
\caption{Clean estimate under CI condition using logistic regression. $n_1=100$. Left: mean regret, right: the variance.}\label{tab:clean:log}
\end{table*}

Table \ref{tab:clean:log} is the clean training result using logistic regression, and the observations are similar to Table \ref{tab:clean:ssl}. Table \ref{tab:adv:ssl:log} is the adversarial training result using logistic regression, and the observations are similar to Table \ref{tab:adv:ssl}. Table \ref{tab:adv:ssl:log:var} summarizes the variance information for simulation in adversarial training.

\begin{table*}[!ht]
\centering
\begin{tabular}{cccccc}
\hline $n_3$     & benchmark       & adv+SSL($S_1$,$S_3$)              &  adv($S_1$)             & adv+SSL($S_1$,$S_3$,$S_4$)         & adv+pseudo label($S_1$,$S_3$)             \\\hline
  500   & 0.00218 & 0.00570 & 0.01229 & 0.00441 & 0.00643 \\
  1000  & 0.00104 & 0.00325 & 0.01209 & 0.00291 & 0.00572 \\
  5000  & 0.00021 & 0.00064 & 0.01254 & 0.00072 & 0.00477 \\
  10000 & 0.00025 & 0.00051 & 0.01248 & 0.00027 & 0.00485 \\
  20000 & 0.00002 & 0.00020 & 0.01245 & 0.00017 & 0.00553    \\\hline
\end{tabular}
\caption{Mean: Difference between the adversarial testing accuracy of estimators and $\theta^*$ using logistic regression. $n_1=100$.  The variance information is in Table \ref{tab:adv:ssl:log:var}.}\label{tab:adv:ssl:log}
\end{table*}

\begin{table*}[!ht]
\centering
\begin{tabular}{cccccc}
\hline    $n_3$     & benchmark      & adv+SSL($S_1$,$S_3$)              &  adv($S_1$)             & adv+SSL($S_1$,$S_3$,$S_4$)         & adv+pseudo label($S_1$,$S_3$)              \\\hline
 500   & 3.78E-06 & 1.63E-05 & 7.20E-05 & 1.29E-05 & 2.09E-05 \\
 1000  & 1.08E-06 & 7.06E-06 & 9.72E-05 & 5.30E-06 & 2.06E-05 \\
 5000  & 4.06E-07 & 7.65E-07 & 8.69E-05 & 5.96E-07 & 9.63E-06 \\
 10000 & 1.99E-07 & 4.28E-07 & 9.69E-05 & 2.86E-07 & 9.09E-06 \\
 20000 & 1.32E-07 & 2.72E-07 & 8.62E-05 & 2.20E-07 & 2.07E-05  \\\hline
\end{tabular}
\caption{Variance table corresponding to Table \ref{tab:adv:ssl:log}}\label{tab:adv:ssl:log:var}
\end{table*}

% \begin{figure}
%     \centering
%     \includegraphics[scale=0.45]{ssl.png}
%     \caption{SSL in Adversarial Training when $\epsilon=16/255$.}
%     \label{fig:nn:4}
% \end{figure}
\section{Lemmas and Extra Conditions}

We first introduce some extra conditions and lemmas which are related to the minimax lower bounds. The extra conditions are technical assumptions regulating the behavior of the loss (and its Taylor expansions) to ensure a finite variance. The lemmas are some particular examples used in the minimax lower bound.

\begin{assumption}\label{assumption:more}
We further assume that $\mathcal{P}$ satisfies:
\begin{enumerate}
    \item[(3)] $\mathbb{E}\|X_1\|^2X_1X_1^{\top}/\mathbb{E}\|X_1\|^2$ have bounded eigenvalues.
    \item[(4)] When CI holds, all the eigenvalues of the covariance of $\partial \|X_2-\mathbb{E}(X_2|X_1)\|^2/\partial\beta$ and the expectation of $\partial^2 \|X_2-\mathbb{E}(X_2|X_1)\|^2/\partial\beta^2$ are of $\Theta(\gamma_{d_2})$ for some $\gamma_{d_2}=\Omega(1)$.
    \item[(5)] When CI holds, if $(X_1,X_2,Y)$ follows distribution $P_{\beta^*}$ for some $\beta^*$, then $\mathbb{E}\partial \|X_2-\mathbb{E}(X_2|X_1)\|^2/\partial\beta$ is $L$-Lipschitz for some constant $L>0$ when $\beta\in B(\beta^*,r)$ where $B$ denotes an $L_2$ ball and $r$ is some constant.
\end{enumerate}

\end{assumption}

\begin{assumption}\label{assumption:extra}
We assume that
\begin{enumerate}
    \item when $l$ is logistic regression: the covariance of $\partial l/\partial\theta$ has all bounded and greater-than-zero eigenvalues, the expectation of $\partial^2 l/\partial\theta^2$ has all bounded and greater-than-zero eigenvalues. The density is finite and away from zero when $x_1$ is near the decision boundary. In addition, all the eigenvalues of 
    \begin{eqnarray*}
    \mathbb{E}_{X_1}\|X_1\|^2\left[ \left(\frac{\partial p(X_1;\beta^*)}{\partial\beta}\right)^{\top}\mathbb{E}_{X_1}\left\{\left(\frac{\partial p(X_1;\beta^*)}{\partial\beta}\right)\left(\frac{\partial p(X_1;\beta^*)}{\partial\beta}\right)^{\top}\right\}^{-1}\left(\frac{\partial p(X_1;\beta^*)}{\partial\beta}\right) \right]^2
    \end{eqnarray*}
    are in $\Theta(d_1^3)$. 
    \item when $l$ is square loss: the covariance of $\partial l/\partial\theta$ has all bounded and greater-than-zero eigenvalues, the expectation of $\partial^2 l/\partial\theta^2$ has all bounded and greater-than-zero eigenvalues. The density is finite and away from zero when $x_1$ is near the decision boundary. In addition, all the eigenvalues of 
    \begin{eqnarray*}
    \mathbb{E}_{X_1}\|X_1\|^2\left[ \left(\frac{\partial p(X_1;\beta^*)}{\partial\beta}\right)^{\top}\mathbb{E}_{X_1}\left\{\left(\frac{\partial p(X_1;\beta^*)}{\partial\beta}\right)\left(\frac{\partial p(X_1;\beta^*)}{\partial\beta}\right)^{\top}\right\}^{-1}\left(\frac{\partial p(X_1;\beta^*)}{\partial\beta}\right) \right]^2(X_1^{\top}\theta^*)^2
    \end{eqnarray*}
    are in $\Theta(d_1^3)$.
\end{enumerate}
\end{assumption}

Conditions (3), (4), (5) in Assumption \ref{assumption:more} regulate the distributions in $\mathcal{P}$, ensuring that a second-order Taylor expansion is accurate for the likelihood/loss function.

Assumption \ref{assumption:extra} is an extra assumption for adversarial training. Since we use SSL in clean training and use the vanilla method in adversarial training, some extra conditions are needed to supplement adversarial training. Similar to the idea in Assumption \ref{assumption:more}, Assumption \ref{assumption:extra} ensures that the Taylor expansions w.r.t. $p$ and $l$ are accurate.

\begin{lemma}\label{lem:minimax:mu}
Assume $(x_1,x_2)\in\mathbb{R}^{d_1+d_2}$, and $(x_1,x_2)\sim N(\mu,\Sigma^*)$ for some known $\Sigma^*$ and unknown $\mu=(\mu_1,\mu_2)$. There are $n_1+n_3$ samples of $(x_1,x_2)$, $n_2+n_4$ samples of $(x_1)$. Then for estimators of $\mu_1$,
\begin{eqnarray*}
\inf_{\widehat{\mu}_1}\sup_{\|\mu_1\|\leq R,\Sigma^*}\mathbb{E}\|\widehat{\mu}_1-\mu_1\|^2 = \Omega\left( \frac{R^2 d_1 }{\sum n_i} \right),
\end{eqnarray*}
and for estimators of $\mu_2$,
\begin{eqnarray*}
\inf_{\widehat{\mu}_2}\sup_{\|\mu_2\|\leq R,\Sigma^*}\mathbb{E}\|\widehat{\mu}_2-\mu_2\|^2 = \Omega\left( \frac{R^2 d_2 }{n_1+n_3} \right).
\end{eqnarray*}
\end{lemma}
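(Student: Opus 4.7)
The plan is to reduce each minimax lower bound to a standard isotropic Gaussian mean estimation problem and invoke Assouad's lemma. The key leverage is that the supremum ranges over $\Sigma^*$ as well, so one may restrict attention to a carefully chosen scalar covariance that both calibrates the noise scale to $R^2$ and decouples the two blocks. With such a choice, each bound collapses to the classical minimax problem of estimating a bounded Gaussian mean from a fixed number of i.i.d.\ observations.

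For the first bound, I would restrict the sup to $\Sigma^* = R^2 I_{d_1+d_2}$ (so both marginals are isotropic and $x_1,x_2$ are independent) and freeze $\mu_2 = 0$. Since $x_2$ is then independent of $\mu_1$, every sample across $S_1,\ldots,S_4$ contributes exactly one informative i.i.d.\ copy of $x_1 \sim N(\mu_1, R^2 I_{d_1})$, and the problem becomes estimating $\mu_1$ on $\{\|\mu_1\|\leq R\}$ from $\sum_i n_i$ such observations. Apply Assouad's lemma on the hypercube $\{\mu^{(e)} = c\, e : e\in\{\pm 1\}^{d_1}\}$ with $c = c_0 R/\sqrt{\sum_i n_i}$ for a small absolute constant $c_0$. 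Containment in the ball reduces to $c\sqrt{d_1}\leq R$, i.e.\ $c_0^2 d_1 \leq \sum_i n_i$, which holds under the implicit assumption $d_1 \ll \sum_i n_i$ (enforced by $(d_1+d_2)\log(n_i)=o(n_i^{1/3})$ in the parent theorem). A direct computation gives that the product-measure KL between two adjacent hypotheses is $\sum_i n_i \cdot (2c)^2/(2R^2) = 2c_0^2 = O(1)$, so Pinsker's inequality bounds the pairwise total variation below $1$. Assouad's lemma then delivers the lower bound $\tfrac{(2c)^2 d_1}{2}\cdot \Omega(1) = \Omega(R^2 d_1/\sum_i n_i)$.

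For the second bound, use the same block-diagonal $\Sigma^* = R^2 I_{d_1+d_2}$ but now fix $\mu_1=0$. Because $\Sigma^*_{12}=0$, observations of $x_1$ alone (from $S_2, S_4$ and marginally from the joint samples) are independent of $\mu_2$ and carry no information about it; only the $x_2$ coordinates of the $n_1+n_3$ joint samples are informative. The problem reduces to estimating $\mu_2$ on $\{\|\mu_2\|\leq R\}$ from $n_1+n_3$ i.i.d.\ draws of $N(\mu_2, R^2 I_{d_2})$, and the same Assouad argument on a $2^{d_2}$-hypercube with $c = c_0 R/\sqrt{n_1+n_3}$ yields $\Omega(R^2 d_2/(n_1+n_3))$. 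The main obstacle is the bookkeeping of the reduction: one must verify that the restricted sub-family lies inside the original family (so that the minimax risk over the full sup dominates that over the sub-family), and simultaneously calibrate $c$ to respect both the $L_2$-ball constraint and the bounded-KL requirement, which together force the regime $d\leq \sum_i n_i$ (respectively $d_2\leq n_1+n_3$) for the argument to be meaningful. Beyond these verifications, the remainder is a routine Assouad calculation.
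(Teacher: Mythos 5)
Your proposal is correct, but it proves the lemma by a different route than the paper. The paper's own proof is a one-line reduction to the Bayesian argument of Lemma \ref{lem:minimax:a}: place a Gaussian prior on the unknown mean, observe that the posterior is again Gaussian (conjugacy), and lower-bound the minimax risk by the resulting Bayes risk, i.e.\ by the posterior variance, which scales as $d_1/\sum_i n_i$ for $\mu_1$ and as $d_2/(n_1+n_3)$ for $\mu_2$. You instead exploit the supremum over $\Sigma^*$ to fix $\Sigma^*=R^2 I_{d_1+d_2}$, which simultaneously calibrates the noise to produce the $R^2$ factor and block-decouples $x_1$ from $x_2$, so that the two claims reduce to classical bounded-mean Gaussian location problems with $\sum_i n_i$ and $n_1+n_3$ i.i.d.\ observations respectively; Assouad's lemma with hypercube spacing $c\asymp R/\sqrt{N}$ and Pinsker's inequality then give the rates. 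Both arguments are valid. The conjugate-prior route is shorter, stays within the style of the neighboring lemmas, and sidesteps hypercube bookkeeping; your Assouad route is more elementary and self-contained, makes transparent \emph{why} only the $n_1+n_3$ joint samples are informative about $\mu_2$ (the block-diagonal likelihood factorizes, so the $x_1$-only samples drop out), and surfaces explicitly the regime conditions $d_1\lesssim \sum_i n_i$ and $d_2\lesssim n_1+n_3$ under which the stated rates are meaningful (without them the trivial estimator $\widehat\mu=0$ already beats the claimed bound), a point the paper leaves implicit. Your observation that the sup over $\Sigma^*$ is what legitimizes the $R^2$ scaling is also correct and worth keeping; with a fixed unit covariance the bound would instead saturate at $R^2\wedge d_1/\sum_i n_i$.
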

\begin{proof}[Proof of Lemma \ref{lem:minimax:mu}]
The proof is similar to Lemma \ref{lem:minimax:a} below using Gaussian prior.
\end{proof}
\begin{lemma}\label{lem:minimax:a}
Assume $(x_1,x_2)\in\mathbb{R}^{d_1+d_2}$, and $(x_1,x_2)\sim N(\textbf{0},\Sigma^*)$ for some known $\Sigma^*$, and the response $\mathbb{E}[y|x_1]=x_1^{\top}\Sigma_{1,2}^*a$ for some vector $a$ and $Var(y|x_1)=\sigma^2$ for any $x_1$. There are $n_1$ samples of $(x_1,x_2,y)$, $n_2$ samples of $(x_1,y)$, $n_3$ samples of $(x_1,x_2)$ and $n_4$ samples of $(x_1)$. Then
\begin{eqnarray*}
\inf_{\widehat a}\sup_{a}\mathbb{E}\|\widehat a-a\|^2=\Theta\left(\frac{\sigma^2 d_2}{n_1+n_2}\right).
\end{eqnarray*}
\end{lemma}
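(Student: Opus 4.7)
The plan is to split the problem into an upper bound achieved by ordinary least squares and a matching lower bound proved by a Bayes argument. The key preliminary observation is that since the covariance $\Sigma^*$ is assumed known, only the samples that carry a response contribute to the likelihood of $a$: the conditional law of $y$ given $x_1$ depends on $a$ only through the scalar $x_1^\top \Sigma_{1,2}^* a$, so $x_2$ is ancillary for $a$, and the unlabeled samples $S_3, S_4$ are informative only about $\Sigma^*$. Therefore the problem reduces to estimating $a \in \mathbb{R}^{d_2}$ from the $n_1+n_2$ samples in $S_1 \cup S_2$ under the Gaussian linear model $y_i = \tilde x_i^\top a + \epsilon_i$, where I define the reduced design vector $\tilde x_i := \Sigma_{2,1}^* x_{1,i} \in \mathbb{R}^{d_2}$.

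For the upper bound, I would take the plug-in least squares estimator $\widehat a = (\sum_{i \in S_1 \cup S_2} \tilde x_i \tilde x_i^\top)^{-1} \sum_{i \in S_1 \cup S_2} \tilde x_i y_i$. This estimator is unbiased conditional on the design, with conditional covariance $\sigma^2 (\sum \tilde x_i \tilde x_i^\top)^{-1}$. Because $\mathbb{E}[\tilde x \tilde x^\top] = \Sigma_{2,1}^* \Sigma_{1,1}^* \Sigma_{1,2}^*$ has eigenvalues bounded above and below (by the assumption that the singular values of $\mathrm{Cov}(X_1,X_2)$ are finite and bounded away from zero, together with $\Sigma_{1,1}^* = I_{d_1}$ in the regression setup), standard Wishart concentration gives $\mathbb{E}\|\widehat a - a\|^2 = O(\sigma^2 d_2/(n_1+n_2))$.

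For the lower bound, I would use a Bayes argument with Gaussian prior $a \sim N(0, \tau^2 I_{d_2})$. Since $(a, x_1, x_2, y)$ is then jointly Gaussian, the posterior of $a$ given the observed data is again Gaussian and the Bayes risk equals the trace of its covariance. Conditional on the covariates, the posterior covariance is $(\tau^{-2} I_{d_2} + \sigma^{-2}\sum_{i \in S_1 \cup S_2} \tilde x_i \tilde x_i^\top)^{-1}$, and letting $\tau \to \infty$ while using concentration of the empirical design Gram matrix around $\Sigma_{2,1}^* \Sigma_{1,1}^* \Sigma_{1,2}^*$ yields a Bayes risk of $\sigma^2 \,\mathrm{tr}[(\Sigma_{2,1}^* \Sigma_{1,1}^* \Sigma_{1,2}^*)^{-1}]/(n_1+n_2) = \Theta(\sigma^2 d_2 / (n_1+n_2))$. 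Since the minimax risk dominates the Bayes risk under any prior, this gives the desired lower bound.

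The main obstacle is the lower bound: a pure Gaussian-prior Bayes-risk calculation gives an average risk rather than a worst-case one, so to turn this into a clean $\sup_a$ statement I expect to either (i) restrict the prior to a bounded region and track the small loss from truncation, or (ii) replace the Gaussian prior by an Assouad/Fano hypercube packing of $\{-\delta, +\delta\}^{d_2}$ with $\delta \asymp \sigma \sqrt{d_2/(n_1+n_2)}$ and evaluate the Gaussian KL divergence $\mathrm{KL}(P_a \,\|\, P_{a'}) = (n_1+n_2)(a-a')^\top \Sigma_{2,1}^* \Sigma_{1,1}^* \Sigma_{1,2}^* (a-a')/(2\sigma^2)$; this yields the $\Omega(\sigma^2 d_2/(n_1+n_2))$ bound with a cleaner minimax statement. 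Aside from this, checking that the unlabeled data truly cannot be exploited (which is immediate once $\Sigma^*$ is known but would require a likelihood factorization argument if we wished to extend beyond the known-$\Sigma^*$ setting) is the only other technical point.
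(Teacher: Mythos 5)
Your proposal is correct and follows essentially the same route as the paper: observe that with $\Sigma^*$ known only $S_1$ and $S_2$ carry information about $a$, place a Gaussian prior on $a$, and lower-bound the minimax risk by the Bayes risk, which equals the trace of the Gaussian posterior covariance $\sigma^2(n_1+n_2)^{-1}(\widehat{\Sigma}_{n_1,n_2}+\alpha I)^{-1}$ and is of order $\sigma^2 d_2/(n_1+n_2)$ (you additionally spell out the matching least-squares upper bound, which the paper leaves implicit). Your closing worry about converting the average-case Bayes risk into a $\sup_a$ statement is unnecessary here: since the supremum in the lemma is over all of $\mathbb{R}^{d_2}$, the minimax risk dominates the Bayes risk under the (untruncated) Gaussian prior directly, exactly as in the paper's one-line inequality.
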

\begin{proof}[Proof of Lemma \ref{lem:minimax:a}]
Assume $y|x_1$ follows Gaussian distribution. We take $a\sim N(0, \sigma^2 I_{d_2}/(\alpha n))$ for some $\alpha>0$ as the prior distribution of $a$. Then it is easy to see that only $S_1$ and $S_2$ are related to $a$. Denote $\widehat{\Sigma}_{n_1,n_2}$ as the sample covariance matrix, and $\widehat{a}_{\alpha}=(\widehat{\Sigma}_{n_1,n_2}+\alpha I_{d_1})^{-1}\frac{1}{n_1+n_2}\sum_{S_1,S_2}x_1 y$. The conditional distribution of $a|S_1,S_2$ becomes a multivariate Gaussian
\begin{eqnarray*}
a|S_1,S_2\sim N(\widehat{a}_{\alpha},(\sigma^2/(n_1+n_2))(\widehat{\Sigma}_{n_1,n_2}+\alpha  I_{d_2})^{-1}),
\end{eqnarray*}
and therefore
\begin{eqnarray*}
\inf_{\widehat a}\sup_{a}\mathbb{E}\|\widehat a-a\|^2\geq \inf_{\widehat a}\mathbb{E}\|\widehat{a}-\widehat{a}_{\alpha}\|^2+\|\widehat{a}_{\alpha}-a\|^2 =\Theta\left(\frac{\sigma^2 d_2}{n_1+n_2}\right).
\end{eqnarray*}
\end{proof}
\begin{lemma}\label{lem:minimax:covariance}
    Assume $(x_1,x_2)\in\mathbb{R}^{d_1+d_2}$, and $(x_1,x_2)\sim N(\mu,\Sigma^*)$, where $\Sigma^*=\begin{bmatrix}
    \Sigma_{1,1}^* & \Sigma_{1,2}^*\\
    \Sigma_{2,1}^* & \Sigma_{2,2}^*
    \end{bmatrix}$. Assume there are $n_1+n_3$ samples of $(x_1,x_2)$, and $n_2+n_4$ samples of $x_1$, and no response $y$ is provided, then when $(d_1+d_2)=o( (n_1+n_3)\log (n_1+n_3) )$,
    \begin{eqnarray*}\label{eqn:lem:minimax:covariance:1}
    \inf_{\widehat\Sigma_{1,2}}\sup_{\Sigma_{1,2}^*}\mathbb{E}\|\widehat\Sigma_{1,2}-\Sigma_{1,2}^*\|^2=\Omega\left(\frac{d_1+d_2}{n_1+n_3}\right).
    \end{eqnarray*}
    Assume the response $\mathbb{E}[y|x_1]=x_1^{\top}\Sigma_{1,2}^*a$ for some known vector $a$ with $\|a\|=1$ and $\mu=0$. There are $n_1$ samples of $(x_1,x_2,y)$, $n_2$ samples of $(x_1,y)$, $n_3$ samples of $(x_1,x_2)$ and $n_4$ samples of $(x_1)$, then when $(n_1+n_2)=o(n_1+n_3)$, for any estimator $\widehat\theta$ which estimates $\Sigma_{1,2}^*a$, 
    \begin{eqnarray*}
    \inf_{\widehat{\theta}}\sup_{ \Sigma_{1,2}^*}\mathbb{E}\|\widehat{\theta}-\Sigma_{1,2}^*a\|^2=\Omega\left(\frac{d_1+d_2}{n_1+n_3}\right).
    \end{eqnarray*}
\end{lemma}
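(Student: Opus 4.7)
My plan is to prove both displayed inequalities of Lemma~\ref{lem:minimax:covariance} by reducing to classical Gaussian minimax problems with a carefully chosen rank-one family of cross-covariances. Throughout, I would fix the diagonal blocks $\Sigma_{1,1}^*=I_{d_1}$ and $\Sigma_{2,2}^*=I_{d_2}$ so that only $\Sigma_{1,2}^*$ varies; then the marginal distribution of $x_1$ does not depend on the parameter, and the $x_1$-only samples in $S_2\cup S_4$ carry no information about $\Sigma_{1,2}^*$. This immediately cuts the effective sample size for the estimation problem down to $n_1+n_3$ and explains why $n_2+n_4$ does not appear in the rate.

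For the first inequality, the $d_1+d_2$ factor is obtained by combining two rank-one sub-families and taking the larger of the two lower bounds. In the first sub-family, fix a unit vector $e\in\mathbb R^{d_1}$ and let $\Sigma_{1,2}^*=\delta\,eu^\top$ for a small constant $\delta>0$ and a free $u\in\mathbb R^{d_2}$. Conditionally on $x_1$, one then has $x_2\mid x_1\sim N(\delta(x_1^\top e)u,\,I_{d_2}-\delta^2 uu^\top)$, which is a standard Gaussian regression of $x_2$ on the scalar $x_1^\top e$ with $d_2$-dimensional coefficient $\delta u$. A Fano/Assouad packing argument on the sphere in $\mathbb R^{d_2}$, or equivalently a Gaussian prior on $u$ combined with a Bayes-risk computation in the spirit of Lemma~\ref{lem:minimax:mu}, yields $\Omega(d_2/(n_1+n_3))$ on $\|\widehat\Sigma_{1,2}-\Sigma_{1,2}^*\|^2$. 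A symmetric sub-family with fixed unit vector $f\in\mathbb R^{d_2}$ and free $v\in\mathbb R^{d_1}$ gives $\Omega(d_1/(n_1+n_3))$, and the supremum over the union of the two constructions produces the claimed rate.

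For the second inequality, I would reuse the same two rank-one sub-families, arranging that the fixed vectors are not orthogonal to $a$ so that $\theta=\Sigma_{1,2}^* a$ varies non-trivially with the free parameter; the reduction to the Gaussian regression problems of Part~1 then transfers to a lower bound on $\mathbb E\|\widehat\theta-\theta\|^2$ of the corresponding order. The new ingredient is that the labeled samples in $S_1\cup S_2$ also carry information about $\theta$ through $y\mid x_1\sim N(x_1^\top\Sigma_{1,2}^* a,\,\sigma^2+a^\top(\Sigma_{2,2}-\Sigma_{2,1}\Sigma_{1,2})a)$. Each labeled observation contributes only $O(1)$ to the Fisher information for $\theta$, so the labeled data add at most $O(n_1+n_2)$ to the total information for the free parameter; under the assumption $n_1+n_2=o(n_1+n_3)$ this is negligible compared to the $\Theta(n_1+n_3)$ contribution from the unlabeled samples, and the combined Fano radius is still of order $\sqrt{(d_1+d_2)/(n_1+n_3)}$.

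The main obstacle, I expect, is the bookkeeping around the construction rather than any single deep step. First, positive definiteness of the full $\Sigma^*$ must be preserved over the entire family; a Schur-complement check shows this holds for any fixed $\delta$ small enough independently of $n$. Second, the KL divergence between two joint Gaussians with nearby cross-covariances must be shown to be quadratic in the rank-one perturbation so that the Fano argument is clean; the higher-order Taylor corrections here are $O(\delta^4)$ and are absorbed into constants. The more delicate point lies in Part~2: although $\theta$ is only $d_1$-dimensional, the $u$-sub-family is $d_2$-dimensional, so one has to verify that the minimax risk for estimating $\theta$ inside that sub-family really scales as $d_2/(n_1+n_3)$ and does not collapse to $d_1/(n_1+n_3)$. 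This is handled by choosing the fixed vector in the construction so that the linear map $u\mapsto\Sigma_{1,2}^* a$ has sufficiently large singular values on the sphere, which is why the not-orthogonal-to-$a$ condition on the fixed vector is essential.
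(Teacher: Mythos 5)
Your treatment of the first display is fine and is in fact a somewhat different (and cleaner) route than the paper's: the paper never proves the matrix bound separately (its proof says it directly proves the second claim and the first inherits from the same Bayesian computation), whereas your two rank-one sub-families $\delta e u^\top$ and $\delta v f^\top$, each reduced to a Gaussian regression in which the $x_1$-marginal is parameter-free (so $S_2,S_4$ are uninformative, matching the paper), give $\Omega(d_2/(n_1+n_3))$ and $\Omega(d_1/(n_1+n_3))$ respectively, and their maximum is the stated rate for $\|\widehat\Sigma_{1,2}-\Sigma_{1,2}^*\|^2$. Your remark that the labeled samples add only $O(n_1+n_2)$ Fisher information, negligible when $n_1+n_2=o(n_1+n_3)$, is also the same mechanism the paper uses (there it appears as the factor $D=\exp\{o(1)\}$ in the likelihood-ratio control over the shrinking support of the prior).

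The genuine gap is in Part 2, and it is exactly the point you flag and then dismiss. Inside the sub-family $\Sigma_{1,2}^*=\delta e u^\top$ the target is $\theta=\Sigma_{1,2}^*a=\delta(u^\top a)\,e$, so the map $u\mapsto\theta$ is rank one for \emph{every} choice of $e$: the $d_2-1$ directions of $u$ orthogonal to $a$ move $\Sigma_{1,2}^*$ but leave $\theta$ unchanged, and the minimax risk for $\theta$ within this sub-family is $O(1/(n_1+n_3))$, not $\Omega(d_2/(n_1+n_3))$. No choice of the fixed vector can give this map ``large singular values'' in more than one direction, so the proposed fix cannot work; what survives of your construction is only the second sub-family ($v$ free, $f^\top a\neq 0$), which yields $\Omega(d_1/(n_1+n_3))$. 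The paper's construction is precisely of this second type: it perturbs $\Sigma_{1,2}$ by $\Delta_1\Delta_2^\top$ with $\Delta_2=a/\|a\|$ \emph{fixed} and $\Delta_1\in\mathbb{R}^{d_1}$ free (prior supported on $\|\Delta_1\|_\infty\le 1/\sqrt{n_1+n_3}$), so that $\theta$ itself moves in all $d_1$ directions, and then controls the exact posterior, including the $y$-factor, against a quadratic surrogate. Note also that a genuine $d_2/(n_1+n_3)$ term in the second display is not obtainable from any construction once $d_2\gg d_1$ with the diagonal blocks fixed as in your (and the paper's) setup, since the plug-in estimator $\frac{1}{n_1+n_3}\sum x_1(x_2^\top a)$ already attains risk $O((d_1+1)/(n_1+n_3))$; the $(d_1+d_2)$ form is meaningful, and is used in the paper (Case 1 of the proof of Theorem 1), only in the regime $d_2=O(d_1)$, where your second sub-family alone already delivers the stated rate. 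You should therefore drop the attempted transfer of the $u$-family hardness to $\theta$ and either restrict Part 2 to that regime or state the bound as $\Omega(d_1/(n_1+n_3))$.
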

\begin{proof}[Proof of Lemma \ref{lem:minimax:covariance}]
We directly prove the second argument of Lemma \ref{lem:minimax:covariance}. We use Bayes method to show the minimax lower bound. Assume $\Sigma_{1,2}=\Sigma_{1,2}^*$ follows some prior distribution, then
\begin{eqnarray*}
\inf_{\widehat\Sigma_{1,2}}\sup_{\Sigma_{1,2}^*}\mathbb{E}\|\widehat\Sigma_{1,2}-\Sigma_{1,2}^*\|^2&\geq& \inf_{\widehat\Sigma_{1,2}}\mathbb{E}_{S_i}\mathbb{E}_{\Sigma_{1,2}|S_i}\|\widehat\Sigma_{1,2}-\Sigma_{1,2}\|^2\\
&\geq&\inf_{\widehat\Sigma_{1,2}}\mathbb{E}_{S_i}\mathbb{E}_{\Sigma_{1,2}|S_i}\|[\mathbb{E}_{\Sigma_{1,2}|S_i}\Sigma_{1,2}]-\Sigma_{1,2}\|^2.
\end{eqnarray*}

 Denote the density of $\Sigma_{1,2}$ as $g(\Sigma_{1,2})$. Assume $\mathbb{E}\Sigma_{1,2}=\Sigma_{1,2}^*$ and $\Sigma_{1,2}-\Sigma_{1,2}^*=\Delta_1\Delta_2^{\top}$. And we take $\Delta_2=a/\|a\|$.

Assume $\Sigma_{1,1}=I_{d_1}$, $\Sigma_{1,2}=I_{d_2}$, and $\theta_0=\Sigma_{1,2}a$. Denote $\Sigma^*=\begin{bmatrix}
\Sigma_{1,1} & \Sigma_{1,2}^*\\
\Sigma_{2,1}^* & \Sigma_{2,2}
\end{bmatrix}$. Then the likelihood of the four types of samples $S_1,...,S_4$ is proportional to 
\begin{eqnarray*}
\frac{g(\Sigma_{1,2})}{|\Sigma|^{(n_1+n_3)/2}}\exp\left\{ -\frac{1}{2}\sum_{S_1,S_3}\begin{bmatrix} x_1^{\top},x_2^{\top}
\end{bmatrix}\Sigma^{-1}\begin{bmatrix} x_1\\x_2
\end{bmatrix} -\frac{1}{2\sigma^2}\sum_{S_1,S_2}(y-x_1^{\top}\Sigma_{1,2}a)^2 \right\}.
\end{eqnarray*}
Since $\Sigma_{1,1}$ and $\Sigma_{2,2}$ are both identity matrix, 
\begin{eqnarray*}
|\Sigma|&=& \left|\begin{bmatrix}
\Sigma_{1,1} & \Sigma_{1,2}^*\\
\Sigma_{2,1}^*+\Delta_2\Delta_1^{\top} & \Sigma_{2,2}
\end{bmatrix}\right|\left(1+\begin{bmatrix}\textbf{0}& \Delta_2^{\top}
\end{bmatrix}\begin{bmatrix}
\Sigma_{1,1} & \Sigma_{1,2}^*\\
\Sigma_{2,1}^*+\Delta_2\Delta_1^{\top} & \Sigma_{2,2}
\end{bmatrix}^{-1}
\begin{bmatrix}\Delta_1 \\ \textbf{0}
\end{bmatrix}\right)\\
&=&|\Sigma^*|\left(1+\begin{bmatrix}\textbf{0}& \Delta_2^{\top}
\end{bmatrix}(\Sigma^*)^{-1}
\begin{bmatrix}\Delta_1 \\ \textbf{0}
\end{bmatrix}\right)\left(1+\begin{bmatrix}\textbf{0}& \Delta_2^{\top}
\end{bmatrix}\begin{bmatrix}
\Sigma_{1,1} & \Sigma_{1,2}^*\\
\Sigma_{2,1}^*+\Delta_2\Delta_1^{\top} & \Sigma_{2,2}
\end{bmatrix}^{-1}
\begin{bmatrix}\Delta_1 \\ \textbf{0}
\end{bmatrix}\right),
\end{eqnarray*}
where
\begin{eqnarray*}
\begin{bmatrix}
\Sigma_{1,1} & \Sigma_{1,2}^*\\
\Sigma_{2,1}^*+\Delta_2\Delta_1^{\top} & \Sigma_{2,2}
\end{bmatrix}^{-1}
=(\Sigma^*)^{-1}-\frac{(\Sigma^*)^{-1}\begin{bmatrix}
\Delta_1\\0
\end{bmatrix}\begin{bmatrix}\textbf{0}& \Delta_2^{\top}
\end{bmatrix}(\Sigma^*)^{-1}}{1+\begin{bmatrix}\textbf{0}& \Delta_2^{\top}
\end{bmatrix}(\Sigma^*)^{-1}\begin{bmatrix}
\Delta_1\\0
\end{bmatrix}},
\end{eqnarray*}
thus denoting $\xi=\begin{bmatrix}\textbf{0}& \Delta_2^{\top}
\end{bmatrix}(\Sigma^*)^{-1}
\begin{bmatrix}\Delta_1 \\ \textbf{0}
\end{bmatrix}$, we obtain
\begin{eqnarray*}
|\Sigma|&=&|\Sigma^*|\left( 1+\xi\right)\left( 1+ \begin{bmatrix}\textbf{0}& \Delta_2^{\top}
\end{bmatrix}\left[(\Sigma^*)^{-1}-\frac{(\Sigma^*)^{-1}\begin{bmatrix}
\Delta_1\\0
\end{bmatrix}\begin{bmatrix}\textbf{0}& \Delta_2^{\top}
\end{bmatrix}(\Sigma^*)^{-1}}{1+\xi}\right]
\begin{bmatrix}\Delta_1 \\ \textbf{0}
\end{bmatrix}\right)\\
&=&|\Sigma^*|\left( 1+\xi\right)\left( 1+ \xi - \frac{\xi^2}{1+\xi}\right)\\
&=&|\Sigma^*|(1+2\xi).
\end{eqnarray*}
In terms of $\Sigma^{-1}$, denoting  $\xi_1=\begin{bmatrix} \Delta_1^{\top} & \textbf{0}
\end{bmatrix}(\Sigma^*)^{-1}
\begin{bmatrix}\Delta_1 \\ \textbf{0}
\end{bmatrix}$, and  $\xi_2=\begin{bmatrix}\textbf{0}& \Delta_2^{\top}
\end{bmatrix}(\Sigma^*)^{-1}
\begin{bmatrix}\textbf{0}\\\Delta_2 
\end{bmatrix}$
\begin{eqnarray*}
\Sigma^{-1} &=& \begin{bmatrix}
\Sigma_{1,1} & \Sigma_{1,2}^*\\
\Sigma_{2,1}^*+\Delta_2\Delta_1^{\top} & \Sigma_{2,2}
\end{bmatrix}^{-1}-\frac{ \begin{bmatrix}
\Sigma_{1,1} & \Sigma_{1,2}^*\\
\Sigma_{2,1}^*+\Delta_2\Delta_1^{\top} & \Sigma_{2,2}
\end{bmatrix}^{-1} \begin{bmatrix}
\textbf{0}\\\Delta_2
\end{bmatrix}\begin{bmatrix}\Delta_1^{\top} & \textbf{0}
\end{bmatrix}\begin{bmatrix}
\Sigma_{1,1} & \Sigma_{1,2}^*\\
\Sigma_{2,1}^*+\Delta_2\Delta_1^{\top} & \Sigma_{2,2}
\end{bmatrix}^{-1}  }{1+\xi-\xi^2/(1+\xi)}\\
&=&(\Sigma^*)^{-1}-\frac{(\Sigma^*)^{-1}\begin{bmatrix}
\Delta_1\\\textbf{0}
\end{bmatrix}\begin{bmatrix}\textbf{0}& \Delta_2^{\top}
\end{bmatrix}(\Sigma^*)^{-1}}{1+\xi}\\
&& - \frac{ (\Sigma^*)^{-1} \begin{bmatrix}
\textbf{0}\\\Delta_2
\end{bmatrix}\begin{bmatrix} \Delta_1^{\top} & \textbf{0}
\end{bmatrix}(\Sigma^*)^{-1}  }{1+\xi-\xi^2/(1+\xi)}-\frac{\xi_1\xi_2 (\Sigma^*)^{-1} \begin{bmatrix}
\Delta_1\\\textbf{0}
\end{bmatrix}\begin{bmatrix}  \textbf{0} & \Delta_2^{\top} 
\end{bmatrix}(\Sigma^*)^{-1} }{(1+\xi)^2(1+\xi-\xi^2/(1+\xi))}\\
&&+\frac{ \xi_2 (\Sigma^*)^{-1}\begin{bmatrix}
\Delta_1\\\textbf{0}
\end{bmatrix} \begin{bmatrix}
\Delta_1& \textbf{0}
\end{bmatrix}(\Sigma^*)^{-1}}{(1+\xi)(1+\xi-\xi^2/(1+\xi))}+\frac{ \xi_1 (\Sigma^*)^{-1}\begin{bmatrix}
\textbf{0}\\\Delta_2
\end{bmatrix} \begin{bmatrix}
\textbf{0} & \Delta_2
\end{bmatrix}(\Sigma^*)^{-1}}{(1+\xi)(1+\xi-\xi^2/(1+\xi))}.
\end{eqnarray*}
Therefore, 
\begin{eqnarray*}
&&\frac{g(\Sigma_{1,2})}{|\Sigma|^{(n_1+n_3)/2}}\exp\left\{ -\frac{1}{2}\sum_{S_1,S_3}\begin{bmatrix} x_1^{\top},x_2^{\top}
\end{bmatrix}\Sigma^{-1}\begin{bmatrix} x_1\\x_2
\end{bmatrix} -\frac{1}{2\sigma^2}\sum_{S_1,S_2}(y-x_1^{\top}\Sigma_{1,2}a)^2 \right\}\\
&=&
\frac{g(\Sigma_{1,2})}{|\Sigma^*|^{(n_1+n_3)/2}}\exp\left\{ -\frac{1}{2}\sum_{S_1,S_3}\begin{bmatrix} x_1^{\top},x_2^{\top}
\end{bmatrix}(\Sigma^*)^{-1}\begin{bmatrix} x_1\\x_2
\end{bmatrix} -\frac{1}{2\sigma^2}\sum_{S_1,S_2}(y-x_1^{\top}\Sigma_{1,2}^*a)^2 \right\}\\
&&\times \frac{1}{(1+2\xi)^{(n_1+n_3)/2}}\exp\left\{ -\frac{1}{2}\sum_{S_1,S_3} \begin{bmatrix} x_1^{\top},x_2^{\top}
\end{bmatrix}(\Sigma^{-1}-(\Sigma^*)^{-1})\begin{bmatrix} x_1\\x_2
\end{bmatrix} \right\}\\
&&\times \exp\left\{-\frac{1}{2\sigma^2}\sum_{S_1,S_2}(x_1^{\top}\Delta_1\Delta_2^{\top}a)^2-2 (y-x_1^{\top}\Sigma_{1,2}^*a)(x_1^{\top}\Delta_1\Delta_2^{\top}a)\right\}\\
&:=& g(\Sigma_{1,2}) g_0 g_1(\Delta_1,\Delta_2) g_2(\Delta_1,\Delta_2)
\end{eqnarray*}
Denoting $\begin{bmatrix}
z_1\\z_2
\end{bmatrix}=(\Sigma^*)^{-1}\begin{bmatrix}
x_1\\x_2
\end{bmatrix}$, then
\begin{eqnarray*}
&& -\frac{1}{2}\sum_{S_1,S_3} \begin{bmatrix} x_1^{\top},x_2^{\top}
\end{bmatrix}(\Sigma^{-1}-(\Sigma^*)^{-1})\begin{bmatrix} x_1\\x_2
\end{bmatrix}\\
&=&-\frac{1}{2}\sum_{S_1,S_3} -\frac{ [z_1^{\top},z_2^{\top}]\begin{bmatrix}
\Delta_1\\\textbf{0}
\end{bmatrix}[z_1^{\top},z_2^{\top}]\begin{bmatrix}
\textbf{0}\\\Delta_2
\end{bmatrix} }{1+\xi} - \frac{  [z_1^{\top},z_2^{\top}]\begin{bmatrix}
\Delta_1\\\textbf{0}
\end{bmatrix}[z_1^{\top},z_2^{\top}]\begin{bmatrix}
\textbf{0}\\\Delta_2
\end{bmatrix} }{1+\xi-\xi^2/(1+\xi)}- \frac{\xi_1\xi_2 (\Sigma^*)^{-1} \begin{bmatrix}
\Delta_1\\\textbf{0}
\end{bmatrix}\begin{bmatrix}  \textbf{0} & \Delta_2^{\top} 
\end{bmatrix}(\Sigma^*)^{-1} }{(1+\xi)^2(1+\xi-\xi^2/(1+\xi))}\\
&&-\frac{1}{2}\sum_{S_1,S_3}\frac{\xi_2 \left([z_1^{\top},z_2^{\top}]\begin{bmatrix}
\Delta_1\\\textbf{0}
\end{bmatrix}\right)^2}{(1+\xi)(1+\xi-\xi^2/(1+\xi))}+\frac{\xi_1 \left([z_1^{\top},z_2^{\top}]\begin{bmatrix}
\textbf{0}\\\Delta_2
\end{bmatrix}\right)^2}{(1+\xi)(1+\xi-\xi^2/(1+\xi))}.
\end{eqnarray*}

Now we consider another likelihood
\begin{eqnarray*}
g(\Sigma_{1,2}) g_0 \widetilde{g}_1(\Delta_1,\Delta_2),
\end{eqnarray*}
where $\widetilde{g}_1$ is an approximation of $g_1$ and equals to
\begin{eqnarray*}
\widetilde{g}_1(\Delta_1,\Delta_2)
&=&\exp\left\{ tr\left(\begin{bmatrix}
\Delta_1\\\textbf{0}
\end{bmatrix} \begin{bmatrix}
\textbf{0} & \Delta_2
\end{bmatrix}(\Sigma^*)^{-1}\left[\sum_{S_1,S_3}(\Sigma^*)^{-1}\begin{bmatrix}
x_1\\x_2
\end{bmatrix} \begin{bmatrix}
x_1^{\top},x_2^{\top}
\end{bmatrix}-I_{d_1+d_2}\right] \right) \right\}\\
&&\times\exp\left\{ -\frac{1}{2}\sum_{S_1,S_3}\xi_2 \left([z_1^{\top},z_2^{\top}]\begin{bmatrix}
\Delta_1\\\textbf{0}
\end{bmatrix}\right)^2+\xi_1 \left([z_1^{\top},z_2^{\top}]\begin{bmatrix}
\textbf{0}\\\Delta_2
\end{bmatrix}\right)^2 \right\}\\
&&\times 1\left\{\Delta_1\in\mathcal{S}\right\}\\
&=&\exp\left\{ tr\left(\begin{bmatrix}
\Delta_1\\\textbf{0}
\end{bmatrix} \begin{bmatrix}
\textbf{0} & \Delta_2
\end{bmatrix}(\Sigma^*)^{-1}\left[\sum_{S_1,S_3}(\Sigma^*)^{-1}\begin{bmatrix}
x_1\\x_2
\end{bmatrix} \begin{bmatrix}
x_1^{\top},x_2^{\top}
\end{bmatrix}-I_{d_1+d_2}\right] \right) \right\}\\
&&\times\exp\left\{ -\frac{1}{2} \begin{bmatrix}
\Delta_1^{\top} & \textbf{0}
\end{bmatrix}\left( \sum_{S_1,S_3}\xi_2\begin{bmatrix}
x_1\\x_2
\end{bmatrix}(\Sigma^*)^{-2}[x_1^{\top},x_2^{\top}] +(x_1(\Sigma^*)^{-1}\Delta_2)^2I_{d_1} \right)\begin{bmatrix}
\Delta_1\\\textbf{0}
\end{bmatrix} \right\}\\
&&\times 1\left\{\Delta_1\in\mathcal{S}\right\}.
\end{eqnarray*}

Intuitively, when $\|\Delta_1\|\rightarrow 0$, $g_1\rightarrow \widetilde{g}_1$; otherwise $g_1\rightarrow 0$. As a result, we take $\mathcal{S}=\{\|\Delta_{1}\|_{\infty}\leq 1/\sqrt{n_1+n_3}\}$.

From the generation of $(x_1,x_2)$, assume $g_0(\Sigma_{1,2})\propto 1\{\|\Delta_{1}\|_{\infty}\leq 1/\sqrt{n_1+n_3}\}$. We know that with probability tending to 1,
\begin{eqnarray*}
\left\|a\left(\sum_{S_1,S_3}(\Sigma^*)^{-1}\begin{bmatrix}
x_1\\x_2
\end{bmatrix} \begin{bmatrix}
x_1^{\top},x_2^{\top}
\end{bmatrix}-I_{d_1+d_2}\right)\right\|_2^2 =O\left((d_1+d_2)(n_1+n_3)\right),
\end{eqnarray*}
and
\begin{eqnarray*}
\left( \sum_{S_1,S_3}\xi_2\begin{bmatrix}
x_1\\x_2
\end{bmatrix}(\Sigma^*)^{-2}[x_1^{\top},x_2^{\top}] +(x_1(\Sigma^*)^{-1}\Delta_2)^2I_{d_1} \right)=O\left( (n_1+n_3)I_{d_1} \right).
\end{eqnarray*}
Therefore, $\Delta_1$ follows truncated normal distribution and
\begin{eqnarray*}
\mathbb{E}_{\Delta_1\sim \widetilde{g}_1}\|(\Delta_1\Delta_2^{\top}-\mathbb{E}\Delta_1\Delta_2^{\top})a\|^2 =\Theta\left(\frac{d_1+d_2}{n_1+n_3}\right).
\end{eqnarray*}

In the above analysis of $A$ we investigate in the distribution $gg_0\widetilde{g}_1$ instead of the true distribution $g g_0g_1g_2$. Now we quantify the difference between $\widetilde{g}_1$ and $g_1g_2$.

When $\Delta_1\in\mathcal{S}$, we have
\begin{eqnarray*}
&&\frac{g_1g_2}{\widetilde{g}_1}\\&=&\exp\bigg\{\frac{1}{2} tr\left(\begin{bmatrix}
\Delta_1\\\textbf{0}
\end{bmatrix} \begin{bmatrix}
\textbf{0} & \Delta_2
\end{bmatrix}(\Sigma^*)^{-1}\left[\sum_{S_1,S_3}(\Sigma^*)^{-1}\begin{bmatrix}
x_1\\x_2
\end{bmatrix} \begin{bmatrix}
x_1^{\top},x_2^{\top}
\end{bmatrix}-I_{d_1+d_2}\right] \right)\\&&\qquad\qquad\qquad\qquad\qquad\qquad\qquad\qquad\qquad\qquad\times\left(\frac{1}{1+\xi}+\frac{1}{1+\xi-\xi^2/(1+\xi)}-2\right) \bigg\}\\
&&\times\exp\bigg\{-\frac{1}{2}\sum_{S_1,S_3}\frac{\xi_2 \left([z_1^{\top},z_2^{\top}]\begin{bmatrix}
\Delta_1\\\textbf{0}
\end{bmatrix}\right)^2}{(1+\xi)(1+\xi-\xi^2/(1+\xi))}\\&&\qquad\qquad\qquad\qquad\qquad\qquad\qquad\qquad+\frac{\xi_1 \left([z_1^{\top},z_2^{\top}]\begin{bmatrix}
\textbf{0}\\\Delta_2
\end{bmatrix}\right)^2}{(1+\xi)(1+\xi-\xi^2/(1+\xi))}\left(\frac{2}{(1+\xi)(1+\xi-\xi^2/(1+\xi))}-2\right) \bigg\}\\
&&\times\exp\left\{-\frac{1}{2}\sum_{S_1,S_3}\frac{\xi_1\xi_2 (\Sigma^*)^{-1} \begin{bmatrix}
\Delta_1\\\textbf{0}
\end{bmatrix}\begin{bmatrix} \textbf{0} & \Delta_2^{\top}
\end{bmatrix}(\Sigma^*)^{-1} }{(1+\xi)^2(1+\xi-\xi^2/(1+\xi))}\right\}\\
&&\times \exp\left\{-\frac{1}{2\sigma^2}\sum_{S_1,S_2}(x_1^{\top}\Delta_1\Delta_2^{\top}a)^2-2 (y-x_1^{\top}\Sigma_{1,2}^*a)(x_1^{\top}\Delta_1\Delta_2^{\top}a)\right\}\\
&&\times 1\left\{\Delta_1\in\mathcal{S}\right\}\\
&:=&A\times B\times C\times D\times  1\left\{\Delta_1\in\mathcal{S}\right\}.
\end{eqnarray*}
Recall that $\xi=\begin{bmatrix}\textbf{0}& \Delta_2^{\top}
\end{bmatrix}(\Sigma^*)^{-1}
\begin{bmatrix}\Delta_1 \\ \textbf{0}
\end{bmatrix}$ and $\Delta_2=a/\|a\|$, from the support of $\Delta_1$, we have $\xi=O(\sqrt{d_1/(n_1+n_3)})$. Therefore,
\begin{eqnarray*}
&&\frac{1}{1+\xi}+\frac{1}{1+\xi-\xi^2/(1+\xi)}-2\\
&=&\frac{1}{(1+\xi)[1+\xi-\xi^2/(1+\xi)]}\left[1+\xi-\xi^2/(1+\xi)+1+\xi-2 (1+\xi)[1+\xi-\xi^2/(1+\xi)]\right]\\
&=&\frac{1}{(1+\xi)[1+\xi-\xi^2/(1+\xi)]}\left[1+\xi-\xi^2/(1+\xi)+1+\xi-  2 [1+\xi-\xi^2/(1+\xi)]\right]\\
&&-\frac{1}{(1+\xi)[1+\xi-\xi^2/(1+\xi)]}2 [\xi+\xi^2-\xi^3/(1+\xi)]\\
&=&\frac{1}{(1+\xi)[1+\xi-\xi^2/(1+\xi)]}\left[\xi^2/(1+\xi)-2 [\xi+\xi^2-\xi^3/(1+\xi)\right]\\
&=&O(\xi),
\end{eqnarray*}
so $A=\exp\{o(1)\}$. Similarly, we have $B=\exp\{o(1)\}$. In terms of $C$ and $D$, one can show that both of them are in $\exp\{o(1)\}$ as well. Therefore, $g_1g_2=\widetilde{g}_1(1+o(1))$, which implies that 
\begin{eqnarray*}
\sup_{\Sigma}\mathbb{E}\|(\Delta_1\Delta_2^{\top}-\mathbb{E}\Delta_1\Delta_2^{\top})a\|^2 = \Theta\left(\frac{d_1+d_2}{n_1+n_3}\right).
\end{eqnarray*}

As a result, we can conclude that 
\begin{eqnarray*}
\inf_{\widehat\theta}\sup_{\Sigma}\mathbb{E}\|\widehat\theta- \Sigma_{1,2}a \|^2=\Theta\left(\frac{d_1+d_2}{n_1+n_3}\right).
\end{eqnarray*}

% \begin{eqnarray*}
% \inf_{\widehat\theta}\sup_{\Sigma,a}\mathbb{E}\|\widehat\theta- \Sigma_{1,2}^*a \|^2=\Theta\left(\frac{\sigma^2 d_2}{n_1+n_2}+\frac{d_1+d_2}{n_1+n_3}\right).
% \end{eqnarray*}

\end{proof}

\section{Proofs for Section 2 and 3}
\subsection{Proof for Theorem 1}
\begin{proof}[Proof of Theorem \ref{thm:mixed:lower:no_ci}, CI holds]
The basic idea is similar to Theorem \ref{thm:linear}. We impose a prior distribution on the parameter $\mu_0$ associated with $P(y|x_1)$, then argue that we cannot exactly estimate $\mu_0$.

Denote $f_1$ and $f_2$ are the density of $
x_1$ and $x_2$, and $x_2|y\sim N(y\mu_2,\Sigma_{2,2})$ and is conditionally independent to $x_1$ given $y$. Also denote $\psi(x,\mu,\Sigma)$ as normal density of $x$ given mean $\mu$ and covariance $\Sigma$. Assume $P(y=1|x_1)=\psi(x_1,\mu_0,\Sigma_{1,1})/(\psi(x_1,\mu_0,\Sigma_{1,1})+\psi(x_1,-\mu_0,\Sigma_{1,1}))$, then the likelihood becomes
\begin{eqnarray*}
&&f_0(\mu_0)\left(\prod_{S_1}f_1(x_1)P(y|x_1,\mu_0)f_2(x_2|y)\right)\times\left(\prod_{S_2}f_1(x_1)P(y|x_1,\mu_0)\right)\\&&
\times\left(\prod_{S_3}f_1(x_1)f_2(x_2|x_1,\mu_0)\right)\times\left(\prod_{S_4}f_1(x_1)\right).
\end{eqnarray*}
Thus the posterior distribution of $\mu_0$ given $S_1$ to $S_4$ is proportional to
\begin{eqnarray*}
f_0(\mu_0)\left(\prod_{S_1,S_2}P(y|x_1,\mu_0)\right)\times\left(\prod_{S_3}f_2(x_2|x_1,\mu_0)\right).
\end{eqnarray*}

Denote $f_2(x_2|x_1,\mu_0)=m(\mu_0,x_1,x_2)$, then we have
\begin{eqnarray*}
\frac{\partial}{\partial{\mu_0}}\log m(\mu_0^*,x_1,x_2)
&=&\frac{\psi\left([x_1,x_2],[\mu_0^*,\mu_2],\Sigma\right)-\psi\left([x_1,x_2],-[\mu_0^*,\mu_2],\Sigma\right) }{\psi\left([x_1,x_2],[\mu_0^*,\mu_2],\Sigma\right)+\psi\left([x_1,x_2],-[\mu_0^*,\mu_2],\Sigma\right)}(x_1-\mu_0^*)\\
&&-\frac{\psi(x_1,\mu_0^*,\Sigma_{1,1})-\psi(x_1,-\mu_0^*,\Sigma_{1,1})  }{\psi(x_1,\mu_0^*,\Sigma_{1,1})+\psi(x_1,-\mu_0^*,\Sigma_{1,1})}(x_1-\mu_0^*),
\end{eqnarray*}
and
\begin{eqnarray*}
\frac{\partial^2}{\partial \mu_0^2}\log m(\mu_0^*,x_1,x_2)
&=&-\frac{\psi\left([x_1,x_2],[\mu_0^*,\mu_2],\Sigma\right)-\psi\left([x_1,x_2],-[\mu_0^*,\mu_2],\Sigma\right) }{\psi\left([x_1,x_2],[\mu_0^*,\mu_2],\Sigma\right)+\psi\left([x_1,x_2],-[\mu_0^*,\mu_2],\Sigma\right)} I_{d_1}\\
&&+\frac{\psi(x_1,\mu_0^*,\Sigma_{1,1})-\psi(x_1,-\mu_0^*,\Sigma_{1,1})  }{\psi(x_1,\mu_0^*,\Sigma_{1,1})+\psi(x_1,-\mu_0^*,\Sigma_{1,1})}I_{d_1}\\
&&-\left(\frac{\psi\left([x_1,x_2],[\mu_0^*,\mu_2],\Sigma\right)-\psi\left([x_1,x_2],-[\mu_0^*,\mu_2],\Sigma\right) }{\psi\left([x_1,x_2],[\mu_0^*,\mu_2],\Sigma\right)+\psi\left([x_1,x_2],-[\mu_0^*,\mu_2],\Sigma\right)}\right)^2(x_1-\mu_0^*)(x_1-\mu_0^*)^{\top}\\
&&+\left(\frac{\psi(x_1,\mu_0^*,\Sigma_{1,1})-\psi(x_1,-\mu_0^*,\Sigma_{1,1})  }{\psi(x_1,\mu_0^*,\Sigma_{1,1})+\psi(x_1,-\mu_0^*,\Sigma_{1,1})}\right)^2(x_1-\mu_0^*)(x_1-\mu_0^*)^{\top}.
\end{eqnarray*}
When $(d_1+d_2)^2\log(n_1+n_3)=o(n_1+n_3)$, one can figure out that $\frac{\partial}{\partial \mu_0}\log m(\mu_0^*,x_1,x_2)$ and $\frac{\partial^2}{\partial \mu_0^2}\log m(\mu_0^*,x_1,x_2)$ converges to their mean respectively. In addition, from the model construction, we know that $\mathbb{E}\frac{\partial^2}{\partial \mu_0^2}\log m(\mu_0^*,x_1,x_2)$ is negatively definite. 

Using Taylor expansion, we have
\begin{eqnarray*}
\log m(\mu_0,x_1,x_2)&=&\log m(\mu_0^*,x_1,x_2)+\left(\frac{\partial}{\partial \mu_0}\log m(\mu_0^*,x_1,x_2)\right)^{\top}(\mu_0-\mu_0^*)\\
&&+\frac{1}{2}(\mu_0-\mu_0^*)^{\top}\left(\frac{\partial^2}{\partial \mu_0^2}\log m(\mu_0^*,x_1,x_2)\right)(\mu_0-\mu_0^*)+O(\|\mu_0-\mu_0^*\|^3).
\end{eqnarray*}
Thus when $\|\mu_0-\mu_0^*\|^3=O(1/(n_1+n_3))$, $\prod_{S_3}f_2$ only changes in $O(1)$ proportion.

On the other hand, one can figure out that when $\|\mu_0-\mu_0^*\|^3=O( 1/(n_1+n_2) )$, the posterior distribution of $\mu_0$ is approximately a multivariate Gaussian. The final lower bound takes from the smaller one in the above two bounds.

When $\epsilon>0$, assume $x_1\sim N(\textbf{0},I_{d_1})$, then the adversarial risk minimizer still the linear classifier $\sgn(x_1^{\top}\mu_0)$. As a result, the minimax lower bound of the estimation error of $\mu_0$ is inherited in adversarial case.  
\end{proof}

\begin{proof}[Proof of Theorem \ref{thm:mixed:lower:no_ci}, CI does not hold]
We consider several cases:
\begin{itemize}
    \item Case 1: $n_1+n_2=o(n_1+n_3)$ and $d_2\leq d_1$.
    \item Case 2: $n_1+n_2=o(n_1+n_3)$ and $d_2> d_1$.
    \item Case 3: $n_1+n_3=O(n_1+n_2)$.
\end{itemize}
\textit{Case 1}: The proof is similar to Theorem \ref{thm:regression} for regression. Assume the optimal classifier w.r.t $x_1$ is of the form $\sgn(a^{\top}\Sigma_{2,1}x_1)$ for some $a\in\mathbb{R}^{d_1}$. Based on Lemma \ref{lem:minimax:covariance}, when $a$ is known and $\|a\|=1$, we have 
\begin{eqnarray*}
\inf_{\widehat{\theta}}\sup_{\Sigma_{1,2}}\mathbb{E}\|\widehat{\theta}-\Sigma_{1,2}a\|^2=\Omega\left(\frac{d_1+d_2}{n_1+n_3}\right).
\end{eqnarray*}
When $\Sigma_{1,2}$ is known, the proof follows similar arguments as Theorem \ref{thm:mixed:lower:no_ci} when CI holds. Assume $P(y=1|x_1)=\psi(x_1,\Sigma_{1,2}a,\Sigma_{1,1})/(\psi(x_1,\Sigma_{1,2}a,\Sigma_{1,1})+\psi(x_1,-\Sigma_{1,2}a,\Sigma_{1,1}))$. Since there is no CI condition, there is no connection between the distribution of $x_2$ and the label $y$. As a result, the part of likelihood related to $a$ is
\begin{eqnarray*}
f_0(\Sigma_{1,2}a)\prod_{S_1,S_2}P(y|x_1,\Sigma_{1,2}a).
\end{eqnarray*}
Denote $P(y=1|x_1)=m(\Sigma_{1,2}a,x_1)$, then we have
\begin{eqnarray*}
\frac{\partial}{\partial \Sigma_{1,2}a}\log m(\Sigma_{1,2}a^*,x_1)&=& (x_1-\Sigma_{1,2}a^*)\\
&&-\frac{\psi(x_1,\Sigma_{1,2}a,\Sigma_{1,1})-\psi(x_1,-\Sigma_{1,2}a,\Sigma_{1,1}) }{ \psi(x_1,\Sigma_{1,2}a,\Sigma_{1,1})+\psi(x_1,-\Sigma_{1,2}a,\Sigma_{1,1})}(x_1-\Sigma_{1,2}a^*)\\
\frac{\partial}{\partial \Sigma_{1,2}a}\log [1-m(\Sigma_{1,2}a^*,x_1)]&=& -(x_1-\Sigma_{1,2}a^*)\\
&&-\frac{\psi(x_1,\Sigma_{1,2}a,\Sigma_{1,1})-\psi(x_1,-\Sigma_{1,2}a,\Sigma_{1,1}) }{ \psi(x_1,\Sigma_{1,2}a,\Sigma_{1,1})+\psi(x_1,-\Sigma_{1,2}a,\Sigma_{1,1})}(x_1-\Sigma_{1,2}a^*)
\end{eqnarray*}
\begin{eqnarray*}
&&\frac{\partial^2}{\partial \Sigma_{1,2}a^2}\log m(\Sigma_{1,2}a^*,x_1)\\
&=& -I_{d_1}-(x_1-\Sigma_{1,2}a^*)(x_1-\Sigma_{1,2}a^*)^{\top}\\
&&+\frac{\psi(x_1,\Sigma_{1,2}a^*,\Sigma_{1,1})-\psi(x_1,-\Sigma_{1,2}a^*,\Sigma_{1,1}) }{ \psi(x_1,\Sigma_{1,2}a,^*\Sigma_{1,1})+\psi(x_1,-\Sigma_{1,2}a^*,\Sigma_{1,1})}I_{d_1}\\
&&+\left(\frac{\psi(x_1,\Sigma_{1,2}a^*,\Sigma_{1,1})-\psi(x_1,-\Sigma_{1,2}a^*,\Sigma_{1,1}) }{ \psi(x_1,\Sigma_{1,2}a^*,\Sigma_{1,1})+\psi(x_1,-\Sigma_{1,2}a^*,\Sigma_{1,1})}\right)^2(x_1-\Sigma_{1,2}a^*)(x_1-\Sigma_{1,2}a^*)^{\top}\\
&&\frac{\partial^2}{\partial \Sigma_{1,2}(a^*)^2}\log[1- m(\Sigma_{1,2}a^*,x_1)]\\
&=& I_{d_1}-(x_1-\Sigma_{1,2}a^*)(x_1-\Sigma_{1,2}a^*)^{\top}\\
&&+\frac{\psi(x_1,\Sigma_{1,2}a^*,\Sigma_{1,1})-\psi(x_1,-\Sigma_{1,2}a^*,\Sigma_{1,1}) }{ \psi(x_1,\Sigma_{1,2}a^*,\Sigma_{1,1})+\psi(x_1,-\Sigma_{1,2}a^*,\Sigma_{1,1})}I_{d_1}\\
&&+\left(\frac{\psi(x_1,\Sigma_{1,2}a^*,\Sigma_{1,1})-\psi(x_1,-\Sigma_{1,2}a^*,\Sigma_{1,1}) }{ \psi(x_1,\Sigma_{1,2}a^*,\Sigma_{1,1})+\psi(x_1,-\Sigma_{1,2}a^*,\Sigma_{1,1})}\right)^2(x_1-\Sigma_{1,2}a^*)(x_1-\Sigma_{1,2}a^*)^{\top}.
\end{eqnarray*}
As a result, when the singular values of $\Sigma_{1,2}$ are all finite positive constants, if $\|a-a^*\|^3=O(1/(n_1+n_2))$, the posterior distribution of $a$ is approximately a multivariate Gaussian when $d_1>d_2$, and we obtain $\mathbb{E}\|a-a^*\|^2=\Theta(d_2/(n_1+n_2))$.  The overall rate becomes $\Theta(d_2/(n_1+n_2)+(d_1+d_2)/(n_1+n_3))$.

\textit{Case 2}: the arguments are similar to \textit{Case 1}. However, in the last step, the covariance of the posterior of $a$ if not of full rank, so again we obtain $\mathbb{E}\|a-a^*\|^2=\Theta(d_1/(n_1+n_2))$. On the other hand, since $n_1+n_2=o(n_1+n_3)$, the overall minimax rate becomes $\Theta(d_1/(n_1+n_2))$.

\textit{Case 3}: we directly assume a prior distribution on $\mu_0$ and do not consider the relationship between $\mu_0$ and $\Sigma_{1,2}$. Following similar arguments as in the previous cases, we obtain $\mathbb{E}\|\mu_0-\mu_0^*\|^2=\Theta(d_1/(n_1+n_2))$.

\end{proof}

% \section{Proofs in Section 3}\label{sec:appendix:proof2}

\subsection{Proof of Example \ref{example:mixed_gaussian}}

\begin{proof}[Proof of Bahadur Representation in Example \ref{example:mixed_gaussian}]
In pretext task, we have
\begin{eqnarray*}
\frac{\partial}{\partial\Sigma_{1,1}^{-1}\mu_1}\| x_2-\phi(x_1) \|_2^2=-4(x_2-\phi(x_1))^{\top}\mu_2 \frac{\partial p(x_1;\Sigma_{1,1}^{-1}\mu_1)}{\partial \Sigma_{1,1}^{-1}\mu_1},
\end{eqnarray*}
\begin{eqnarray*}
\frac{\partial}{\partial\mu_2}\| x_2-\phi(x_1) \|_2^2=-2(2p(x_1)-1)(x_2-\phi(x_1)),
\end{eqnarray*}
and
\begin{eqnarray*}
\frac{\partial^2}{\partial (\Sigma_{1,1}^{-1}\mu_1)^2}\| x_2-\phi(x_1) \|_2^2&=&  -4(x_2-\phi(x_1))^{\top}\mu_2 \frac{\partial^2 p(x_1;\Sigma_{1,1}^{-1}\mu_1)}{\partial (\Sigma_{1,1}^{-1}\mu_1)^2}\\&&
+8\|\mu_2\|^2\left(\frac{\partial p(x_1;\Sigma_{1,1}^{-1}\mu_1)}{\partial (\Sigma_{1,1}^{-1}\mu_1)}\right)\left(\frac{\partial p(x_1;\Sigma_{1,1}^{-1}\mu_1)}{\partial (\Sigma_{1,1}^{-1}\mu_1)}\right)^{\top},
\end{eqnarray*}
\begin{eqnarray*}
\frac{\partial^2}{\partial \mu_2^2}\| x_2-\phi(x_1) \|_2^2=2(2p(x_1)-1)^2I_{d_2},
\end{eqnarray*}
\begin{eqnarray*}
\frac{\partial^2}{\partial (\Sigma_{1,1}^{-1}\mu_1)\partial\mu_2}\| x_2-\phi(x_1) \|_2^2=-4\frac{\partial p(x_1;\Sigma_{1,1}^{-1}\mu_1)}{\partial (\Sigma_{1,1}^{-1}\mu_1)}(x_2-\phi(x_1))^{\top} + 4(2p(x_1)-1)\frac{\partial p(x_1;\Sigma_{1,1}^{-1}\mu_1)}{\partial (\Sigma_{1,1}^{-1}\mu_1)}\mu_2^{\top}.
\end{eqnarray*}
In addition,
\begin{eqnarray*}
\frac{\partial p(x_1;\Sigma_{1,1}^{-1}\mu_1)}{\partial (\Sigma_{1,1}^{-1}\mu_1)}&=&\frac{\partial }{\partial (\Sigma_{1,1}^{-1}\mu_1)}\frac{ \psi(x_1,\mu_1,{\Sigma}_{1,1}) }{\psi(x_1,\mu_1,{\Sigma}_{1,1})+\psi(x_1,-\mu_1,{\Sigma}_{1,1})}\\
&=&\frac{\partial \psi(x_1,\mu_1,{\Sigma}_{1,1})}{\partial (\Sigma_{1,1}^{-1}\mu_1)}\frac{ 1 }{\psi(x_1,\mu_1,{\Sigma}_{1,1})+\psi(x_1,-\mu_1,{\Sigma}_{1,1})}\\
&&
-\frac{ \psi(x_1,\mu_1,{\Sigma}_{1,1}) }{[\psi(x_1,\mu_1,{\Sigma}_{1,1})+\psi(x_1,-\mu_1,{\Sigma}_{1,1})]^2}\frac{\partial }{\partial (\Sigma_{1,1}^{-1}\mu_1)}(\psi(x_1,\mu_1,{\Sigma}_{1,1})+\psi(x_1,-\mu_1,{\Sigma}_{1,1}))\\
&=&p(x_1;\Sigma_{1,1}^{-1}\mu_1)x_1-p(x_1;\Sigma_{1,1}^{-1}\mu_1)(2p(x_1;\Sigma_{1,1}^{-1}\mu_1)-1)x_1\\
&=&2p(x_1;\Sigma_{1,1}^{-1}\mu_1)(1-p(x_1;\Sigma_{1,1}^{-1}\mu_1))x_1.
\end{eqnarray*}
We know that $\sqrt{n_1+n_3}([\widehat{\Sigma_{1,1}^{-1}\mu_1},\widehat\mu_2]-[(\Sigma_{1,1}^*)^{-1}\mu_1^*,\mu_2^*])$ asymptotically follows $N(\textbf{0},A^{-1}B(A^{-1})^{\top})$ where
\begin{eqnarray*}
A&=&\frac{\partial^2}{\partial [(\Sigma_{1,1}^{-1}\mu_1),\mu_2]^2}\mathbb{E}\| x_2-\phi(x_1) \|_2^2, \\
B&=&\mathbb{E}\left(\frac{\partial}{\partial [(\Sigma_{1,1}^{-1}\mu_1),\mu_2]}\| x_2-\phi(x_1) \|_2^2 \right)\left(\frac{\partial}{\partial [(\Sigma_{1,1}^{-1}\mu_1),\mu_2]}\| x_2-\phi(x_1) \|_2^2 \right)^{\top}.
\end{eqnarray*}
Thus we have
\begin{eqnarray*}
A:=\begin{bmatrix}
A_{1,1} & A_{1,2} \\
A_{2,1} & A_{2,2}
\end{bmatrix}
\end{eqnarray*}
with
\begin{eqnarray*}
A_{1,1}&=&\frac{\partial^2}{\partial (\Sigma_{1,1}^{-1}\mu_1)^2}\mathbb{E}\| x_2-\phi(x_1) \|_2^2=8\|\mu_2\|^2\mathbb{E}\left(\frac{\partial p(x_1;\Sigma_{1,1}^{-1}\mu_1)}{\partial (\Sigma_{1,1}^{-1}\mu_1)}\right)\left(\frac{\partial p(x_1;\Sigma_{1,1}^{-1}\mu_1)}{\partial (\Sigma_{1,1}^{-1}\mu_1)}\right)^{\top}\\
A_{1,2}&=&\frac{\partial^2}{\partial (\Sigma_{1,1}^{-1}\mu_1)\partial\mu_2}\mathbb{E}\| x_2-\phi(x_1) \|_2^2=4\mathbb{E}\left[(2p(x_1)-1)\frac{\partial p(x_1;\Sigma_{1,1}^{-1}\mu_1)}{\partial (\Sigma_{1,1}^{-1}\mu_1)}\right]\mu_2^{\top}\\
A_{2,2}&=&\frac{\partial^2}{\partial \mu_2^2}\mathbb{E}\| x_2-\phi(x_1) \|_2^2=2\mathbb{E}(2p(x_1)-1)^2I_{d_2}
\end{eqnarray*}

Using block matrix inversion on $A$, we have
\begin{eqnarray*}
A^{-1}=\begin{bmatrix}
(A_{1,1}-A_{1,2}A_{2,2}^{-1}A_{2,1})^{-1} & -(A_{1,1}-A_{1,2}A_{2,2}^{-1}A_{2,1})^{-1}A_{1,2}A_{2,2}^{-1}\\
-A_{2,2}^{-1}A_{2,1}(A_{1,1}-A_{1,2}A_{2,2}^{-1}A_{2,1})^{-1} & ...
\end{bmatrix}.
\end{eqnarray*}
As a result, the Bahadur representation of $\widehat{\Sigma_{1,1}^{-1}\mu_1}$ is
\begin{eqnarray}\label{eqn:bahadur}
\widehat{\Sigma_{1,1}^{-1}\mu_1}-(\Sigma_{1,1}^*)^{-1}\mu_1^*&=& \frac{-(A_{1,1}-A_{1,2}A_{2,2}^{-1}A_{2,1})^{-1}}{n_1+n_3}\sum_{S_1,S_3} \frac{\partial}{\partial\Sigma_{1,1}^{-1}\mu_1}\| x_2-\phi(x_1,\mu_1^*,\Sigma_{1,1}^*) \|_2^2\\
&&+\frac{(A_{1,1}-A_{1,2}A_{2,2}^{-1}A_{2,1})^{-1}A_{1,2}A_{2,2}^{-1}}{n_1+n_3}\sum_{S_1,S_3}\frac{\partial}{\partial\mu_2}\| x_2-\phi(x_1,\mu_1^*,\Sigma_{1,1}^*) \|_2^2+o.
\end{eqnarray}

Expanding $A_{1,2}$ in the Bahadur representation and then
\begin{eqnarray*}
\mathbb{E}\|\widehat{\Sigma_{1,1}^{-1}\mu_1}-(\Sigma_{1,1}^*)^{-1}\mu_1^* \|^2=O\left(\frac{d_1}{n_1+n_3}\right).
\end{eqnarray*}

Finally, in terms of the regret, following Lemma 6.3 of \cite{dan2020sharp}, we have
\begin{eqnarray*}
&&\mathbb{E}R(2p(\cdot,\widehat{\mu}_1)-1)-R(2p(\cdot,\mu_1^*)-1)=O\left(\mathbb{E}\|\widehat{\Sigma_{1,1}^{-1}\mu_1}-(\Sigma_{1,1}^*)^{-1}\mu_1^* \|^2\right)=O\left(\frac{d_1}{n_1+n_3}\right).
\end{eqnarray*}
%  So finally,
% \begin{eqnarray*}
% \mathbb{E}\|\widehat{\Sigma_{1,1}^{-1}\mu_1}-(\Sigma_{1,1}^*)^{-1}\mu_1^*\|^2
% =O\left(\frac{d_1}{n_1+n_3}\right).
% \end{eqnarray*}

\end{proof}

\subsection{Proof in Section 3.1}
\begin{proof}[Proof of Theorem \ref{thm:mixed:upper}]
Theorem \ref{thm:mixed:upper} is established upon Assumption \ref{assumption} and \ref{assumption:more}.

When CI condition holds, the covariance matrix $A_{1,2}$ in Example \ref{example:mixed_gaussian} is of rank 1. From the pretext task and the family of $\phi$ we choose, following the same arguments as Example \ref{example:mixed_gaussian}\footnote{ The way of doing Taylor expansion is the same for Example \ref{example:mixed_gaussian} and other models, and the Bahadur representation (\ref{eqn:bahadur}) is the same. }, under Assumption \ref{assumption:more},
\begin{eqnarray*}
\frac{1}{\|\beta^*\|^2}\mathbb{E}\|\widehat{\beta}-\beta^*\|^2=O\left(\frac{  d_1}{n_1+n_3}\right).
\end{eqnarray*}
On the other hand, as we mentioned before, the output of $\widehat{\phi}$ is always in the same direction, thus there is no further error involved in the downstream task in terms of the misclassification rate for both plugin estimator and logistic regression.

Denote $ \widehat{\mu}_1 $ and $\widehat{\mu}_2$ are the parameter associated with $\widehat{\phi}$, then since
\begin{eqnarray*}
R^*=R(\phi^*W^*)=R(2p(\cdot,\beta^*)-1),
\end{eqnarray*}
based on Assumption \ref{assumption}, we have
\begin{eqnarray*}
R(\widehat{\phi}\widehat{W})-R^*=O\left(\frac{  d_1}{n_1+n_3}\right).
\end{eqnarray*}
\end{proof}

\begin{proof}[Proof of Theorem \ref{thm:linear}, Square Loss]
Theorem \ref{thm:linear} is established upon Assumption \ref{assumption} and \ref{assumption:more}.

Denote $(X_1,X_2)$ as the data matrix for $S_1, S_3$ (without response), and $(X_1',X_2',Y')$ as the data matrix for $S_1, S_2$. Denote $\Sigma_{i,j}=\mathbb{E}x_ix_j^{\top}$ for $i,j\in\{1,2\}$. Also denote $\Sigma_{i,y}$ as $\mathbb{E}x_iy$ for $i\in\{1,2\}$. 

We first look at the asymptotics of $\widehat{\phi}\widehat{W}$. From the problem setup, we can directly solve $\widehat{\phi}$:
\begin{eqnarray*}
\widehat{\phi} = (X_1^{\top}X_1)^{-1}X_1^{\top}X_2 \rightarrow \Sigma_{1,1}^{-1}\Sigma_{1,2},
\end{eqnarray*}
and further write down $\widehat{W}$:
\begin{eqnarray*}
\widehat{W}=( \widehat{\phi}(X_1')^{\top} \widehat{\phi}(X_1'))^{-1}\widehat{\phi}(X_1')^{\top}Y'\rightarrow (\Sigma_{2,1}\Sigma_{1,1}^{-1}\Sigma_{1,2})^{-1}\Sigma_{2,1}\Sigma_{1,1}^{-1}\Sigma_{1,y}.
\end{eqnarray*}
Thus
\begin{eqnarray*}
\widehat{\theta}=\widehat{\phi}\widehat{W}\rightarrow \Sigma_{1,1}^{-1}\Sigma_{1,2}(\Sigma_{2,1}\Sigma_{1,1}^{-1}\Sigma_{1,2})^{-1}\Sigma_{2,1}\Sigma_{1,1}^{-1}\Sigma_{1,y}.
\end{eqnarray*}
% Next, we investigate in the relationship among $\Sigma_{i,j}$, $\Sigma_{i,y}$, $\theta_0$ and Assumptions (2) to (5). Assumption (1) directly holds through taking $\beta^*=\theta_0$. Assumption (5) is a natural outcome under linear classifiers if the density is smooth and positive near the decision boundary. Assumption (3) is valid if the distribution of $X_1$ is chosen properly. For Assumption (2) and (4), we know that

We then study the convergence rate. Denote $\theta_0$ as the $\widehat\theta$ obtained when $n_i\rightarrow\infty$ for all $i=1,2,3$. For the pretext task, one can see that
\begin{eqnarray*}
\widehat{\phi}-{\phi}^*&=&( \widehat{\phi}(X_1')^{\top} \widehat{\phi}(X_1'))^{-1}\widehat{\phi}(X_1')^{\top}Y'-{\phi}^*\\
&=&\Sigma_{1,1}^{-1}\left(\frac{X_1^{\top}X_2}{n_1+n_3}-\Sigma_{1,2}\right)+\Sigma_{1,1}^{-1}\left(\frac{X_1^{\top}X_1}{n_1+n_3}-\Sigma_{1,1}\right)\Sigma_{1,1}^{-1}\Sigma_{1,2}+o.
\end{eqnarray*}
\begin{eqnarray*}
\widehat{W}-W^*&=&( \widehat{\phi}(X_1')^{\top} \widehat{\phi}(X_1'))^{-1}\widehat{\phi}(X_1')^{\top}Y'-W^*\\
&=&\left( \frac{\phi^*(X_1')^{\top}\phi^*(X_1')}{n_1+n_2} + (\widehat{\phi}-\phi^*)^{\top}\Sigma_{1,2}+\Sigma_{2,1}(\widehat{\phi}-\phi^*)+o  \right)^{-1}\\&&\cdot\left( \frac{\phi^*(X_1')^{\top}Y'}{n_1+n_2}+ (\widehat{\phi}-\phi^*)^{\top}\Sigma_{1,1}\theta_0 +o\right),
\end{eqnarray*}
where
\begin{eqnarray*}
&&\left( \frac{\phi^*(X_1')^{\top}\phi^*(X_1')}{n_1+n_2} + (\widehat{\phi}-\phi^*)^{\top}\Sigma_{1,2}+\Sigma_{2,1}(\widehat{\phi}-\phi^*)+o  \right)^{-1}-\left(\frac{\phi^*(X_1')^{\top}\phi^*(X_1')}{n_1+n_2}\right)^{-1}\\
&=&\left(\frac{\phi^*(X_1')^{\top}\phi^*(X_1')}{n_1+n_2}\right)^{-1}\left( (\widehat{\phi}-\phi^*)^{\top}\Sigma_{1,2}+\Sigma_{2,1}(\widehat{\phi}-\phi^*) \right)\left(\frac{\phi^*(X_1')^{\top}\phi^*(X_1')}{n_1+n_2}\right)^{-1}+o.
\end{eqnarray*}
As a result, denoting $\widehat{\theta}=\widehat{\phi}^{\top}\widehat{W}$, we have
\begin{eqnarray*}
\widehat{\theta}-\theta_0 &=& (\widehat{\phi}-\phi^*)^{\top}W^*+(\phi^*)^{\top}(\widehat{W}-W^*)+o,
\end{eqnarray*}
therefore, 
\begin{eqnarray*}
\mathbb{E}\|\widehat{\theta}-\theta_0\|^2=O\left( \frac{d_2}{n_1+n_2} + \frac{d_1+d_2}{n_1+n_3} \right).
\end{eqnarray*}
\end{proof}
\begin{proof}[Proof of Theorem \ref{thm:linear}, Logistic Regression]
Theorem \ref{thm:linear} is established upon Assumption \ref{assumption} and \ref{assumption:more}.

The derivation for the pretext task is the same as the one in square loss. In the downstream task, denote $\xi_1(x_1,\phi, W)=\phi(x_1)\frac{1}{1+e^{\phi(x_1)^{\top}W}}$ and $\xi_2(x_1,\phi, W)=\phi(x_1)\frac{1}{1+e^{-\phi(x_1)^{\top}W}}$, then
\begin{eqnarray*}
&&\frac{1}{n_1+n_2}\sum_{S_1,S_2}\left[ p(x_1)\xi_1(x_1,\phi^* ,W^*) -(1-p(x_1))\xi_2(x_1,\phi^*, W^*)\right]\\
&=&\frac{1}{n_1+n_2}\sum_{S_1,S_2}\left[ p(x_1)\xi_1(x_1,\phi^*, W^*) -(1-p(x_1))\xi_2(x_1,\phi^*, W^*)\right]\\
&&-\frac{1}{n_1+n_2}\sum_{S_1,S_2}\left[1\{y=1\}\xi_1(x_1,\widehat{\phi} ,\widehat{W}) -1\{y=-1\}\xi_2(x_1,\widehat{\phi},\widehat{W})\right]\\
&=&\frac{1}{n_1+n_2}\sum_{S_1,S_2}\left[ p(x_1)\xi_1(x_1,\phi^*, W^*) -(1-p(x_1))\xi_2(x_1,\phi^*, W^*)\right]\\
&&-\frac{1}{n_1+n_2}\sum_{S_1,S_2}\left[p(x_1)\xi_1(x_1,\phi^* ,\widehat{W}) -(1-p(x_1))\xi_2(x_1,\phi^*, \widehat{W})\right]\\
&&+\frac{1}{n_1+n_2}\sum_{S_1,S_2}\left[p(x_1)\xi_1(x_1,\phi^*,\widehat{W}) -(1-p(x_1))\xi_2(x_1,\phi^*, \widehat{W})\right]\\
&&-\frac{1}{n_1+n_2}\sum_{S_1,S_2}\left[p(x_1)\xi_1(x_1,\widehat{\phi} ,\widehat{W}) -(1-p(x_1))\xi_2(x_1,\widehat{\phi}, \widehat{W})\right]\\
&&+\frac{1}{n_1+n_2}\sum_{S_1,S_2}\left[p(x_1)\xi_1(x_1,\widehat{\phi}, \widehat{W}) -(1-p(x_1))\xi_2(x_1,\widehat{\phi} ,\widehat{W})\right]\\
&&-\frac{1}{n_1+n_2}\sum_{S_1,S_2}\left[1\{y=1\}\xi_1(x_1,\widehat{\phi}, \widehat{W}) -1\{y=-1\}\xi_2(x_1,\widehat{\phi}, \widehat{W})\right].
\end{eqnarray*}
Observe that with probability tending to 1,
\begin{eqnarray*}
&&\frac{1}{n_1+n_2}\sum_{S_1,S_2}\left[ p(x_1)\xi_1(x_1,\phi^* ,W^*) -(1-p(x_1))\xi_2(x_1,\phi^*, W^*)\right]\\
&&-\frac{1}{n_1+n_2}\sum_{S_1,S_2}\left[p(x_1)\xi_1(x_1,\phi^* ,\widehat{W}) -(1-p(x_1))\xi_2(x_1,\phi^*, \widehat{W})\right]\\
&=&\underbrace{\mathbb{E}\left[ p(x_1)\frac{\partial \xi_1}{\partial W}(x_1,\phi^*,W^*)-(1-p(x_1))\frac{\partial \xi_2}{\partial W}(x_1,\phi^*,W^*) \right]}_{:=A}(W^*-\widehat{W})+o,
\end{eqnarray*}
and
\begin{eqnarray*}
&&\frac{1}{n_1+n_2}\sum_{S_1,S_2}\left[p(x_1)\xi_1(x_1,\widehat{\phi} ,\widehat{W}) -(1-p(x_1))\xi_2(x_1,\widehat{\phi}, \widehat{W})\right]\\
&&-\frac{1}{n_1+n_2}\sum_{S_1,S_2}\left[1\{y=1\}\xi_1(x_1,\widehat{\phi}, \widehat{W}) -1\{y=-1\}\xi_2(x_1,\widehat{\phi} ,\widehat{W})\right]
\end{eqnarray*}
is a random noise with variance $O(d_2/(n_1+n_2))$. In addition,
\begin{eqnarray*}
&&\frac{1}{n_1+n_2}\sum_{S_1,S_2}\left[p(x_1)\xi_1(x_1,\phi^*,\widehat{W}) -(1-p(x_1))\xi_2(x_1,\phi^*, \widehat{W})\right]\\
&&-\frac{1}{n_1+n_2}\sum_{S_1,S_2}\left[p(x_1)\xi_1(x_1,\widehat{\phi} ,\widehat{W}) -(1-p(x_1))\xi_2(x_1,\widehat{\phi}, \widehat{W})\right]\\
&=&\frac{1}{n_1+n_2}\sum_{S_1,S_2}\left[p(x_1)\xi_1(x_1,\phi^*,W^*) -(1-p(x_1))\xi_2(x_1,\phi^*, W^*)\right]\\
&&-\frac{1}{n_1+n_2}\sum_{S_1,S_2}\left[p(x_1)\xi_1(x_1,\widehat{\phi} ,W^*) -(1-p(x_1))\xi_2(x_1,\widehat{\phi}, W^*)\right]+o\\
&=&\underbrace{\mathbb{E}\left[ p(x_1)\frac{\partial \xi_1}{\partial \phi W}(x_1,\phi^*,W^*)+(1-p(x_1))\frac{\partial \xi_2}{\partial \phi W}(x_1,\phi^*,W^*) \right]}_{:=B}(\widehat{\phi}-\phi^*)W^*+o
\end{eqnarray*}
Thus we have 
\begin{eqnarray*}
\widehat{W}-W^*&=& -\frac{A^{-1}}{n_1+n_2}\sum_{S_1,S_2}\left[ p(x_1)\xi_1(x_1,\phi^* ,W^*) -(1-p(x_1))\xi_2(x_1,\phi^*, W^*)\right]\\
&&+A^{-1}B(\widehat{\phi}-\phi^*)W^*\\
&&+\frac{A^{-1}}{n_1+n_2}\sum_{S_1,S_2}\left[p(x_1)\xi_1(x_1,\widehat{\phi}, \widehat{W}) -(1-p(x_1))\xi_2(x_1,\widehat{\phi}, \widehat{W})\right]\\
&&-\frac{A^{-1}}{n_1+n_2}\sum_{S_1,S_2}\left[1\{y=1\}\xi_1(x_1,\widehat{\phi}, \widehat{W}) -1\{y=-1\}\xi_2(x_1,\widehat{\phi} ,\widehat{W})\right]+o.
\end{eqnarray*}
As a result, taking $\widehat{\theta}=\widehat{\phi}\widehat{W}$ and $\theta_0=\phi^*W^*$,
\begin{eqnarray*}
\mathbb{E}\|\widehat{\theta}-\theta_0\|^2=O\left(\frac{d_2}{n_1+n_2}+\frac{d_1+d_2}{n_1+n_3}\right).
\end{eqnarray*}
\end{proof}
\subsection{Proof in Section 3.2}

\begin{proof}[Proof of Theorem \ref{thm:mix:adv}, Logistic Regression, Upper bound]
Theorem \ref{thm:mix:adv} is built upon Assumption \ref{assumption}, \ref{assumption:more}, and \ref{assumption:extra}. Assumption \ref{assumption}, \ref{assumption:more} ensures the performance of the SSL in clean training, and Assumption \ref{assumption:extra} regulates the adversarial training.

Below is a summary of the proof:
\begin{itemize}
    \item Part 1: we show that $\widehat{\theta}$ is consistent.
    \item Part 2: given the consistency results from Part 1, we can present the Bahadur representation of $\widehat{\theta}$.
    \item Part 3: we figure out $\widehat{p}$ from clean training, and take it into the Bahadur representation to get the final convergence result.
\end{itemize}
% From the form of $\theta^*$ in population version, we know that $\|\theta^*\|$ is finite. Therefore we first consider solving logistic regression without constraint. 

We first use the data model in Example \ref{example:mixed_gaussian} to go through the proof, then discuss on how to generalize it. The extra moment conditions mentioned in the theorem statement are mentioned when we generalize the proof.

\textbf{Part 1:} Our first aim is to show that $\widehat\theta$ is consistent, i.e., $\widehat\theta\rightarrow\theta^*$. To achieve this, Denoting $\xi_1(x_1,\theta)=\left(x_1-\epsilon \frac{\theta}{\|\theta\|}\right) \frac{1}{1+e^{x_1^{\top}\theta-\epsilon\|\theta\|}}$ and $\xi_2(x_1,\theta)=\left(x_1+\epsilon \frac{\theta}{\|\theta\|}\right) \frac{1}{1+e^{-x_1^{\top}\theta-\epsilon\|\theta\|}}$, since the adversarial logistic loss is strongly convex, there exists some constant $C>0$ such that
\begin{eqnarray*}
&&\inf_{\|\theta-\theta^*\|\geq\varepsilon}\bigg\|\mathbb{E}\left[ p(x_1)\xi_1(x_1,\theta^*) -(1-p(x_1))\xi_2(x_1,\theta^*)\right]-\mathbb{E}\left[ p(x_1)\xi_1(x_1,\theta) -(1-p(x_1))\xi_2(x_1,\theta) \right]\bigg\|\\
&=&\inf_{\|\theta-\theta^*\|\geq\varepsilon}\bigg\|\mathbb{E}\left[ p(x_1)\xi_1(x_1,\theta) -(1-p(x_1))\xi_2(x_1,\theta) \right]\bigg\|\\
&\geq& C\varepsilon^2.
\end{eqnarray*}
Furthermore, with probability tending to 1, we have
\begin{eqnarray*}
\bigg\|\mathbb{E}\left[ p(x_1)\xi_1(x_1,\widehat{\theta})  -(1-p(x_1))\xi_2(x_1,\widehat\theta)  \right] - \frac{1}{\sum n_i}\sum_{S_1,S_2,S_3,S_4}\left[ p(x_1)\xi_1(x_1,\widehat\theta)  -(1-p(x_1))\xi_2(x_1,\widehat\theta)  \right]\bigg\|
\end{eqnarray*}
is smaller than some $\tau\rightarrow 0$ in $\sum{n_i}$. We further compare
\begin{eqnarray*}
\frac{1}{\sum n_i}\sum_{S_1,S_2,S_3,S_4}\left[ p(x_1)\xi_1(x_1,\widehat\theta)  -(1-p(x_1))\xi_2(x_1,\widehat\theta)  \right]
\end{eqnarray*}
to the first-order optimality condition, i.e.
\begin{eqnarray*}
\frac{1}{\sum n_i}\sum_{S_1,S_2,S_3,S_4}\left[1_{\{y=1\}}\xi_1(x_1,\widehat\theta)  -1_{\{y=-1\}}\xi_2(x_1,\widehat\theta)  \right]=\textbf{0}.
\end{eqnarray*}

Since $S_1$ and $S_2$ contains labels, we have with probability tending to 1, for some constant $r>0$,
\begin{eqnarray*}
&&\sup_{\|\theta-\theta^*\|\leq r}\bigg\|\frac{1}{\sum n_i}\sum_{S_1,S_2}\left[ p(x_1)\xi_1(x_1,\theta)  -(1-p(x_1))\xi_2(x_1,\theta) \right]\\
&&\qquad\qquad\qquad-\frac{1}{\sum n_i}\sum_{S_1,S_2}\left[1_{\{y=1\}}\xi_1(x_1,\theta)  -1_{\{y=-1\}}\xi_2(x_1,\theta)  \right]\bigg\|\rightarrow 0.
\end{eqnarray*}
In terms of $S_3$ and $S_4$, the labels are imputed from $\widehat{p}$, thus
\begin{eqnarray*}
&&\sup_{\|\theta-\theta^*\|\leq r}\bigg\|\frac{1}{\sum n_i}\sum_{S_3,S_4}\left[ p(x_1)\xi_1(x_1,\theta)  -(1-p(x_1))\xi_2(x_1,\theta) \right]\\
&&\qquad\qquad\qquad-\frac{1}{\sum n_i}\sum_{S_3,S_4}\left[\widehat p(x_1)\xi_1(x_1,\theta)  -(1-\widehat p(x_1))\xi_2(x_1,\theta)  \right]\bigg\|\\
&=&\sup_{\|\theta-\theta^*\|\leq r}\bigg\|\frac{1}{\sum n_i}\sum_{S_3,S_4}\left[ (\widehat{p}(x_1)-p(x_1))\left(\xi_1(x_1,\theta) +\xi_2(x_1,\theta) \right) \right]\bigg\|,
\end{eqnarray*}
which also converges to zero since $\widehat{p}\rightarrow p$ and each dimension of $x_1$ has finite fourth moment. Further following similar argument as for $S_1$ and $S_2$, we have
\begin{eqnarray*}
&&\sup_{\|\theta-\theta^*\|\leq R}\bigg\|\frac{1}{\sum n_i}\sum_{S_3,S_4}\left[ p(x_1)\xi_1(x_1,\theta)  -(1-p(x_1))\xi_2(x_1,\theta) \right]\\
&&\qquad\qquad\qquad-\frac{1}{\sum n_i}\sum_{S_3,S_4}\left[1_{\{y=1\}}\xi_1(x_1,\theta)  -1_{\{y=-1\}}\xi_2(x_1,\theta)  \right]\bigg\|\rightarrow 0.
\end{eqnarray*}
Therefore, combining all the above results, we have
\begin{eqnarray*}
Pr( \|\widehat{\theta}-\theta^*\|\geq \varepsilon ) &\leq& P\left( \bigg\|\mathbb{E}\left[ p(x_1)\xi_1(x_1,\widehat\theta) -(1-p(x_1))\xi_2(x_1,\widehat\theta) \right]\bigg\| \geq C\varepsilon^2 \right),
\end{eqnarray*}
and $\bigg\|\mathbb{E}\left[ p(x_1)\xi_1(x_1,\widehat\theta) -(1-p(x_1))\xi_2(x_1,\widehat\theta) \right]\bigg\|\rightarrow 0$ in probability, thus with probability tending to 1, $\|\widehat{\theta}-\theta^*\|\rightarrow 0.$

\textbf{Part 2:} Given the consistency result, we further consider the convergence rate as a function of $\widehat{p}-p$. We have
\begin{eqnarray*}
&&\sum_{S_1,S_2,S_3,S_4}\left[ p(x_1)\xi_1(x_1,\theta^*) -(1-p(x_1))\xi_2(x_1,\theta^*)\right]\\
&=&\sum_{S_1,S_2,S_3,S_4}\left[ p(x_1)\xi_1(x_1,\theta^*) -(1-p(x_1))\xi_2(x_1,\theta^*) \right]-\underbrace{\sum_{S_1,S_2,S_3,S_4}\left[1_{\{y=1\}}\xi_1(x_1,\widehat\theta) -1_{\{y=-1\}}\xi_2(x_1,\widehat\theta)\right]}_{=\textbf{0}\text{ by optimality condition}}\\
&=&\sum_{S_1,S_2}\left[ p(x_1)\xi_1(x_1,\widehat\theta) -(1-p(x_1))\xi_2(x_1,\widehat\theta) \right]-\sum_{S_1,S_2}\left[1_{\{y=1\}}\xi_1(x_1,\widehat\theta) -1_{\{y=-1\}}\xi_2(x_1,\widehat\theta) \right]\\
&&+\sum_{S_1,S_2,S_3,S_4}\left[ p(x_1)\xi_1(x_1,\theta^*) -(1-p(x_1))\xi_2(x_1,\theta^*) \right]-\sum_{S_1,S_2,S_3,S_4}\left[{p}(x_1)\xi_1(x_1,\widehat\theta) -(1- {p}(x_1))\xi_2(x_1,\widehat\theta) \right]\\
&&+\sum_{S_3,S_4}\left[{p}(x_1)\xi_1(x_1,\widehat\theta)  -(1- {p}(x_1))\xi_2(x_1,\widehat\theta)  \right]-\sum_{S_3,S_4}\left[ \widehat{p}(x_1)\xi_1(x_1,\widehat\theta) -(1- \widehat{p}(x_1))\xi_2(x_1,\widehat\theta)  \right]\\
&&+\sum_{S_3,S_4}\left[ \widehat{p}(x_1)\xi_1(x_1,\widehat\theta) -(1- \widehat{p}(x_1))\xi_2(x_1,\widehat\theta)  \right]-\sum_{S_3,S_4}\left[1_{\{y=1\}}\xi_1(x_1,\widehat\theta)  -1_{\{y=-1\}}\xi_2(x_1,\widehat\theta) \right]\\
&:=&A_1+A_2+A_3+A_4=A_0
\end{eqnarray*}

$A_1$ is a random variable with noise variance $O(n_1+n_2)$. $A_2$ measures the difference between $\theta$ and $\theta^*$. $A_3$ measures the difference between $\widehat{p}$ and $p$. $A_4$ is a random variable with noise variance $O(n_3+n_4)$. With probability tending to 1,

% Denoting $\xi_1(x_1,\theta)=\left(x_1+\epsilon \frac{\theta}{\|\theta\|}\right) \frac{1}{1+e^{x_1^{\top}\theta-\epsilon\|\theta\|}}$ and $\xi_2(x_1,\theta)=\left(x_1+\epsilon \frac{\theta}{\|\theta\|}\right) \frac{1}{1-e^{x_1^{\top}\theta+\epsilon\|\theta\|}}$, then

\begin{eqnarray*}
A_2&=&-\sum_{S_1,S_2,S_3,S_4}\left[ p(x_1)\xi_1'(x_1,\theta^*)(\theta-\theta^*)-(1-p(x_1))\xi_2'(x_1,\theta^*)(\theta-\theta^*) \right]+o\\
&=& -\left(\sum n_i\right)\mathbb{E}\left[ p(x_1)\xi_1'(x_1,\theta^*)- (1-p(x_1))\xi_2'(x_1,\theta^*)\right](\theta-\theta^*)+O(\sqrt{\sum n_i}\|\theta-\theta^*\| )+o.
\end{eqnarray*}

Therefore we have
\begin{eqnarray*}
\widehat\theta-\theta^*
&=&\frac{1}{\sum n_i} \underbrace{\left( \mathbb{E}\left[ p(x_1)\xi_1'(\theta^*)- (1-p(x_1))\xi_2'(\theta^*)\right]\right)^{-1}}_{:=\Gamma^{-1}}\left(A_1+A_3+A_4-A_0\right)\\
&=&\underbrace{\frac{\Gamma^{-1}}{\sum n_i}\sum_{S_1,S_2}\left(1_{\{y=1\}}-p(x_1)\right)\xi_1(x_1,\theta^*)-(1_{ \{y=-1\}}-1+p(x_1))\xi_2(x_1,\theta^*)}_{=O_p\left( \sqrt{ d_1(n_1+n_2)}/\sum n_i  \right)}\\
&&+\underbrace{\frac{\Gamma^{-1}}{\sum n_i}\sum_{S_3,S_4}(p(x_1)-\widehat{p}(x_1))\left( \xi_1(x_1,\theta^*)+\xi_2(x_1,\theta^*) \right)}_{:=\Delta}\\
&&+\underbrace{\frac{\Gamma^{-1}}{\sum n_i}\sum_{S_3,S_4}\left(1_{\{y=1\}}-\widehat p(x_1)\right)\xi_1(x_1,\theta^*)-(1_{ \{y=-1\}}-1+\widehat p(x_1))\xi_2(x_1,\theta^*)}_{=O_p\left( \sqrt{ d_1(n_3+n_4)}/\sum n_i  \right) }\\
&&-\underbrace{\frac{\Gamma^{-1}}{\sum n_i}\sum_{S_1,S_2,S_3,S_4}\left[ p(x_1)\xi_1(x_1,\theta^*)-(1-p(x_1))\xi_2(x_1,\theta^*) \right]}_{=O_p\left(\sqrt{d_1/\sum n_i}\right)}+o.
\end{eqnarray*}

 \textbf{Part 3:} We further using the construction of $\widehat{p}$ to bound $\Delta$. % Recall that
% \begin{eqnarray*}
%     P(y=1|x_1)
%     =\frac{ P(x_1|y=1)P(y=1) }{P(x_1|y=1)P(y=1)+P(x_1|y=-1)P(y=-1)}=\frac{P(x_1|y=1)}{P(x_1|y=-1)+P(x_1|y=1)}.
%     \end{eqnarray*}
    As mentioned in Example \ref{example:mixed:label}, we use $\widehat\phi$ to obtain $\mu_1$. 
    We know that $\|\widehat{\mu}_1-\mu_1\|/\|\mu_1\|=O_p( d_1/(n_1+n_3) )$ and $\widehat{\mu}_1-\mu_1$ can be represented using Bahadur representation as well. For $(x_1)$ in $S_4$, it is independent to $\widehat{\Sigma_{1,1}^{-1}\mu_1}$, thus
    \begin{eqnarray*}
    \mathbb{E} \left\|\Gamma^{-1}\left\langle \widehat{\Sigma_{1,1}^{-1}\mu_1} -(\Sigma_{1,1}^*)^{-1}\mu_1, \frac{\partial p(x_1)}{\partial {\Sigma_{1,1}^{-1}\mu_1}} \right\rangle\left( \xi_1(x_1,\theta^*)+\xi_2(x_1,\theta^*) \right)\right\|^2=O\left(\frac{d_1^2}{n_1+n_3}\right).
    \end{eqnarray*}
    For two samples $(x_1)$ and $(x_1')$ in $S_4$, we have
    \begin{eqnarray*}
    &&\mathbb{E}\bigg[\left\langle \widehat{\Sigma_{1,1}^{-1}\mu_1} -(\Sigma_{1,1}^*)^{-1}\mu_1, \frac{\partial p(x_1)}{\partial {\Sigma_{1,1}^{-1}\mu_1}} \right\rangle\left\langle \widehat{\Sigma_{1,1}^{-1}\mu_1} -(\Sigma_{1,1}^*)^{-1}\mu_1, \frac{\partial p(x_1')}{\partial {\Sigma_{1,1}^{-1}\mu_1}} \right\rangle\\
    &&\qquad\times\left( \xi_1(x_1,\theta^*)+\xi_2(x_1,\theta^*) \right)^{\top}\Gamma^{-2}\left( \xi_1(x_1',\theta^*)+\xi_2(x_1',\theta^*) \right)\bigg]\\
    &=&O\left(\frac{d_1}{n_1+n_3}\right).
    \end{eqnarray*}
    Thus
    \begin{eqnarray*}
    &&\mathbb{E}\left\|\frac{\Gamma^{-1}}{\sum n_i}\sum_{S_4}\left\langle \widehat{\Sigma_{1,1}^{-1}\mu_1} -(\Sigma_{1,1}^*)^{-1}\mu_1, \frac{\partial p(x_1)}{\partial {\Sigma_{1,1}^{-1}\mu_1}} \right\rangle\left( \xi_1(x_1,\theta^*)+\xi_2(x_1,\theta^*) \right)\right\|^2\\
    &=&O\left(\frac{d_1}{n_1+n_3}\left(\frac{n_4}{\sum n_i}\right)^2 + \frac{d_1^2}{n_1+n_3}\frac{n_4}{(\sum n_i)^2} \right)\\
    &=&O\left(\frac{d_1}{n_1+n_3}\right).
    \end{eqnarray*}  
    For $(x_1,x_2)$ in $S_3$, it is correlated to $\widehat{\Sigma_{1,1}^{-1}\mu_1}$, thus
    \begin{eqnarray*}
    \mathbb{E} \left\|\Gamma^{-1}\left\langle \widehat{\Sigma_{1,1}^{-1}\mu_1} -(\Sigma_{1,1}^*)^{-1}\mu_1, \frac{\partial p(x_1)}{\partial {\Sigma_{1,1}^{-1}\mu_1}} \right\rangle\left( \xi_1(x_1,\theta^*)+\xi_2(x_1,\theta^*) \right)\right\|^2=O\left(\frac{d_1^2}{n_1+n_3}\right)+O\left(\frac{d_1^3}{(n_1+n_3)^2}\right),
    \end{eqnarray*}
    and for two samples $(x_1,x_2)$ and $(x_1',x_2')$ in $S_3$, it becomes
    \begin{eqnarray*}
    &&\mathbb{E}\bigg[\left\langle \widehat{\Sigma_{1,1}^{-1}\mu_1} -(\Sigma_{1,1}^*)^{-1}\mu_1, \frac{\partial p(x_1)}{\partial {\Sigma_{1,1}^{-1}\mu_1}} \right\rangle\left\langle \widehat{\Sigma_{1,1}^{-1}\mu_1} -(\Sigma_{1,1}^*)^{-1}\mu_1, \frac{\partial p(x_1')}{\partial {\Sigma_{1,1}^{-1}\mu_1}} \right\rangle\\
    &&\qquad\times\left( \xi_1(x_1,\theta^*)+\xi_2(x_1,\theta^*) \right)^{\top}\Gamma^{-2}\left( \xi_1(x_1',\theta^*)+\xi_2(x_1',\theta^*) \right)\bigg]\\
    &=&O\left(\frac{d_1}{n_1+n_3}+\frac{d_1^2}{(n_1+n_3)^2}\right).
    \end{eqnarray*}
    Thus although $S_3$ is related to $\widehat\mu_1$, we still have
    \begin{eqnarray*}
    &&\mathbb{E}\left\|\frac{\Gamma^{-1}}{\sum n_i}\sum_{S_3,S_4}\left\langle \widehat{\Sigma_{1,1}^{-1}\mu_1} -(\Sigma_{1,1}^*)^{-1}\mu_1, \frac{\partial p(x_1)}{\partial {\Sigma_{1,1}^{-1}\mu_1}} \right\rangle\left( \xi_1(x_1,\theta^*)+\xi_2(x_1,\theta^*) \right)\right\|^2
    =O\left(\frac{d_1}{n_1+n_3}\right).
    \end{eqnarray*}  
    Combining Part 2 and Part 3 we have
    \begin{eqnarray*}
    \mathbb{E}\|\widehat{\theta}-\theta^* \|^2=O\left(\frac{d_1}{n_1+n_3}\right),
    \end{eqnarray*}
    and further we obtain
    \begin{eqnarray*}
    \mathbb{E}R(\widehat{\theta},\epsilon)-R(\theta^*,\epsilon)=O\left(\frac{d_1}{n_1+n_3}\right).
    \end{eqnarray*}
    
    Assumption \ref{assumption:extra} guarantees that the above analysis can be generalized to other distributions. Furthermore, although the Bahadur representation of $\widehat{\Sigma_{1,1}^{-1}\mu_1}$ involves $X_2$, since $X_1$ and $X_2$ are conditionally independent given $Y=y$, there is no extra requirement on $X_2$.
\end{proof}

\begin{proof}[Proof of Theorem \ref{thm:mix:adv}, Square Loss, Upper bound]
Theorem \ref{thm:mix:adv} is built upon Assumption \ref{assumption}, \ref{assumption:more}, and \ref{assumption:extra}. Assumption \ref{assumption}, \ref{assumption:more} ensures the performance of the SSL in clean training, and Assumption \ref{assumption:extra} regulates the adversarial training.

The proof if similar to the one for logistic regression below and replace $\xi_1$ to  $\xi(x_1,y,\theta)=\left(x_1+\epsilon \frac{\theta}{\|\theta\|}\sgn(y-x_1^{\top}\theta)\right) (y-x_1^{\top}\theta-\sgn(y-x_1^{\top}\theta)\epsilon\|\theta\|)$. The adversarial square loss is strongly convex. 

Assumption \ref{assumption:extra} ensures that the above analysis can generalize to other distributions.
\end{proof}

\begin{proof}[Proof of Theorem \ref{thm:mix:adv}, Linear without CI, Upper bound] Theorem \ref{thm:mix:adv} is built upon Assumption \ref{assumption}, \ref{assumption:more}, and \ref{assumption:extra}. Assumption \ref{assumption}, \ref{assumption:more} ensures the performance of the SSL in clean training, and Assumption \ref{assumption:extra} regulates the adversarial training.

In the proof when CI holds, the Bahadur representation of $\widehat\theta-\theta^*$ does not directly utilize the CI condition. Instead, we use the convergence result of $\widehat p$. Therefore, similarly, we use the convergence result of SSL from Theorem \ref{thm:linear} to obtain the convergence results of $\widehat p$ to apply to Part 3.
\end{proof}

\begin{proof}[Proof of Proposition \ref{prop:mix:adv}]
 \textbf{Logistic Regression} Since $\widehat{p}$ is consistent, one can follow Part 1 of the proof of Theorem \ref{thm:mix:adv} to obtain the consistency result. In terms of the convergence rate, following Part 2 of the proof of Theorem \ref{thm:mix:adv}, we have
\begin{eqnarray*}
\theta-\theta^*=\frac{\Gamma^{-1}}{\sum n_i}\sum_{S_3,S_4}(p(x_1)-\widehat{p}(x_1))\left( \xi_1(x_1,\theta^*)+\xi_2(x_1,\theta^*) \right)+\Delta
\end{eqnarray*}
for some $\mathbb{E}\|\Delta\|^2=O( d_1/(\sum n_i) )$. Since $\mathbb{E}\|\xi_1\|^2=O(d_1)$ and $\mathbb{E}\|\xi_2\|^2=O(d_1)$, we have $\mathbb{E}\|\widehat{\theta}-\theta^*\|^2=O(d_1/(\sum n_i)+\mathbb{E}\|X_1\|^2\|\widehat{p}(X_1)-p(X_1)\|^2)$.

 \textbf{Square Loss} When  $\mathbb{E}\|X_1\|^2\|\widehat{p}(X_1)-p(X_1)\|^2(X_1\theta^*)^2\rightarrow 0$, the convergence rate of $\widehat\theta-\theta^*$ is $\mathbb{E}\|\widehat{\theta}-\theta^*\|^2=O(d_1/(\sum n_i)+\mathbb{E}\|X_1\|^2\|\widehat{p}(X_1)-p(X_1)\|^2(X_1\theta^*)^2)$.
\end{proof}

\subsection{Discussion about Logistic Regression}

Our first goal is to investigate in what is the $\theta^*$ in logistic regression. Assume there are infinite labeled data, the first-order optimality condition is
\begin{eqnarray*}
\mathbb{E}1_{\{Y=1\}}(x+\epsilon\frac{\theta}{\|\theta\|})\frac{1}{1+e^{x^{\top}\theta-\epsilon\|\theta\|}}-\mathbb{E}1_{\{Y=-1\}}(x+\epsilon\frac{\theta}{\|\theta\|})\frac{1}{1+e^{-x^{\top}\theta+\epsilon\|\theta\|}}+\lambda\theta=\textbf{0}
\end{eqnarray*}
From the distribution of $(X,Y)$, we have
\begin{eqnarray*}
\mathbb{E}1_{\{Y=1\}}(x+\epsilon\frac{\theta}{\|\theta\|})\frac{1}{1+e^{x^{\top}\theta-\epsilon\|\theta\|}}+\mathbb{E}1_{\{Y=1\}}(x-\epsilon\frac{\theta}{\|\theta\|})\frac{1}{1+e^{x^{\top}\theta+\epsilon\|\theta\|}}+\lambda\theta=\textbf{0}
\end{eqnarray*}
\begin{eqnarray*}
&&\mathbb{E}1_{\{Y=1\}}x\left( \frac{1}{1+e^{x^{\top}\theta-\epsilon\|\theta\|}}+\frac{1}{1+e^{x^{\top}\theta+\epsilon\|\theta\|}} \right)\\
&=&\mathbb{E}\left\{\mathbb{E}\left[1_{\{Y=1\}}x\left( \frac{1}{1+e^{x^{\top}\theta-\epsilon\|\theta\|}}+\frac{1}{1+e^{x^{\top}\theta+\epsilon\|\theta\|}} \right)\bigg|\theta^{\top}x=u\right]\right\}\\
&=&\mathbb{E}\left\{\mathbb{E}\left[1_{\{Y=1\}}x\left( \frac{1}{1+e^{u-\epsilon R}}+\frac{1}{1+e^{u+\epsilon R}} \right)\bigg|\theta^{\top}x=u\right]\right\}.
\end{eqnarray*}
Since $(X,\theta^{\top}X)$ follows Gaussian with mean $(\mu,\theta^{\top}\mu)$ and variance $\begin{bmatrix}
\Sigma & \Sigma\theta \\
\theta^{\top}\Sigma & \theta^{\top}\Sigma\theta
\end{bmatrix}$, we have
\begin{eqnarray*}
\mathbb{E}(x|\theta^{\top}x=u)=\mu+\frac{\Sigma\theta}{\theta^{\top}\Sigma\theta}(u-\theta^{\top}\mu)=\mu+\frac{\Sigma\theta}{\theta^{\top}\Sigma\theta}u-\frac{\Sigma\theta\theta^{\top}\mu}{\theta^{\top}\Sigma\theta},
\end{eqnarray*}
thus 
denote $\zeta=\zeta(\theta,\mu,\Sigma,\epsilon)=\mathbb{E}\left[ \left( \frac{1}{1+e^{u-\epsilon R}}-\frac{1}{1+e^{u+\epsilon R}} \right)\right]$, we have
\begin{eqnarray*}
&&\mathbb{E}\left\{\mathbb{E}\left[1_{\{Y=1\}}x\left( \frac{1}{1+e^{u-\epsilon R}}+\frac{1}{1+e^{u+\epsilon R}} \right)\bigg|\theta^{\top}x=u\right]\right\}\\
&=&\frac{\Sigma\theta}{\theta^{\top}\Sigma\theta}\mathbb{E}\left[ u\left( \frac{1}{1+e^{u-\epsilon R}}+\frac{1}{1+e^{u+\epsilon R}} \right)\right]+\mu\zeta-\frac{\Sigma\theta\theta^{\top}\mu}{\theta^{\top}\Sigma\theta}\zeta
\end{eqnarray*}

Return to the optimal condition, we have
\begin{eqnarray*}
\mu\zeta-\frac{\Sigma\theta\theta^{\top}\mu}{\theta^{\top}\Sigma\theta}\zeta+\frac{\Sigma\theta}{\theta^{\top}\Sigma\theta}\mathbb{E}\left[ u\left( \frac{1}{1+e^{u-\epsilon R}}+\frac{1}{1+e^{u+\epsilon R}} \right)\right]+\epsilon\frac{\theta}{\|\theta\|}\zeta+\lambda\theta = \textbf{0}
\end{eqnarray*}
Dividing $\zeta$,
\begin{eqnarray*}
\mu-\frac{\theta^{\top}\mu\Sigma\theta}{\theta^{\top}\Sigma\theta}+\frac{\Sigma\theta}{\theta^{\top}\Sigma\theta}\mathbb{E}\left[ u\left( \frac{1}{1+e^{u-\epsilon R}}+\frac{1}{1+e^{u+\epsilon R}} \right)\right]\frac{1}{\zeta}+\epsilon\frac{\theta}{\|\theta\|}+\lambda\theta = \textbf{0}
\end{eqnarray*}

Thus for some constants $A$ and $B$,
\begin{eqnarray*}
\theta=-\left( A\Sigma+B I \right)^{-1}\mu.
\end{eqnarray*}

The above result reveals that, the convergence rate of logistic regression is the same as plugin estimator. However, the relationship between $(A,B)$ in the above formula may be different from the one in plugin estimator, leading to potential bias in adversarial setup.

\section{Proof for Section \ref{sec:discussion} and \ref{sec:appendix:regression}}\label{sec:appendix:proof3}

\subsection{Proof for Section \ref{sec:mixed:nn}}

To provide detailed conditions on the neural network and configurations, we first define some quantities. For two unit vectors $s,t\in\mathbb{R}^{d_1}$, define a function $h$ as 
\begin{eqnarray}
h(s,t)=\mathbb{E}_{w\sim N(0,I_{d_1})} (s^{\top}t1\{w^{\top}s\geq 0, w^{\top}y\geq 0 \})=\frac{s^{\top}t(\pi-\arccos (s^{\top}t))}{2\pi}.
\end{eqnarray}
There are total $n_1+n_3$ samples which have $(x_1,x_2)$. We take $n=n_1+n_2$ and index the samples as $(x^i_1,x^i_2)$ for $i=1,...,n$. After indexing the samples, we then define $H^{(\infty)}$ as a $n\times n$ matrix such that $H^{(\infty)}_{i,j}=h(x^i_1,x^j_1)$.

\begin{proof}[Proof of Proposition \ref{prop:nn}]
The detailed conditions for Proposition \ref{prop:nn} are as follows:
\begin{itemize}
    \item The learning rate $\eta=\Theta(n^{-\frac{3d_1-1}{2d_1-1}})$.
    \item The penalty $\lambda=\Theta( n^{\frac{d_1-1}{2d_1-1}})$.
    \item The number of hidden nodes $m\geq \tau^{-2}\text{poly}(n,1/\lambda_0)$ for some initialization variance $\tau^2=O(1)$ and $\lambda_0=\lambda_{\min}^{-1}(^{(\infty)})$.
    \item The number of iterations $T$ satisfies $\log(\text{poly}(n,\tau,1/\lambda_0))\ll \eta\mu T \ll \log(\text{poly}(\tau,1/n,\sqrt{m}))$.
    \item The input $x_1$ is normalized such that $\|x_1\|=1$, and this normalization does not change the minimal misclassification rate. 
    Denoting $\mu_1$ and $\mu_1'$ as the conditional expectation of $X_1$ (after normalization) under $y=\pm 1$, then both $\|\mu_1\|$ and $\|\mu_1'\|$ are nonzero. 
\end{itemize}

Since $d_2$ is a constant, training a neural network with input dimension $d_1$ and output dimension $d_2$ is equivalent to training $d_2$ different neural networks. Therefore following \cite{hu2021regularization}, one obtain that
\begin{eqnarray}\label{eqn:nn}
\|\widehat{\phi}-\phi^*\|_2^2=O_p\left( n^{-\frac{d}{2d-1}} \right).
\end{eqnarray}

Denote $(X_1,X_2)$ as the data matrix for $S_1, S_3$ (without response), and $(X_1',X_2',Y')$ as the data matrix for $S_1, S_2$.  

In terms of $\widehat{W}$, under CI, 
\begin{eqnarray}
\frac{\phi^*(X_1')^{\top}\phi^*(X_1')}{n_1+n_2}
\end{eqnarray}
is not a full rank matrix (at most rank two for binary classification). To avoid singular matrix problem, we take
\begin{eqnarray*}
&&\widehat{W}-W^*\\
&=&\lim_{\lambda\rightarrow 0}( \widehat{\phi}(X_1')^{\top} \widehat{\phi}(X_1')+\lambda I_{d_1})^{-1}\widehat{\phi}(X_1')^{\top}Y'-W^*\\
&=&\lim_{\lambda\rightarrow 0}\left( \frac{\phi^*(X_1')^{\top}\phi^*(X_1')}{n_1+n_2} +\frac{(\widehat{\phi}-\phi^*)^{\top}\phi^* }{n_1+n_2} +\frac{(\phi^*)^{\top}(\widehat{\phi}-\phi^*)}{n_1+n_2}+o +\lambda I_{d_1} \right)^{-1}\\&&\cdot\left( \frac{\phi^*(X_1')^{\top}Y'}{n_1+n_2}+ \frac{(\widehat{\phi}-\phi^*)^{\top}Y'}{n_1+n_2}+o\right)-W^*\\
&=&\lim_{\lambda\rightarrow 0} \left(\frac{\phi^*(X_1')^{\top}\phi^*(X_1')}{n_1+n_2}+\lambda I_{d_1}\right)^{-1}\left( \frac{(\widehat{\phi}-\phi^*)^{\top}\phi^* }{n_1+n_2} +\frac{(\phi^*)^{\top}(\widehat{\phi}-\phi^*)}{n_1+n_2} \right)\left(\frac{\phi^*(X_1')^{\top}\phi^*(X_1')}{n_1+n_2}+\lambda I_{d_1}\right)^{-1}\\&&\cdot\left( \frac{\phi^*(X_1')^{\top}Y'}{n_1+n_2} \right)\\
&&+\lim_{\lambda\rightarrow 0}\left(\frac{\phi^*(X_1')^{\top}\phi^*(X_1')}{n_1+n_2}+\lambda I_{d_1}\right)^{-1} \left( \frac{(\widehat{\phi}-\phi^*)^{\top}Y'}{n_1+n_2}\right)+o.
\end{eqnarray*}
As a result, $\widehat{W}\rightarrow W^*$.

Different from \cite{lee2020predicting}, we are considering the regret (the difference on the misclassification rate between the estimated classifier and the Bayes classifier) as the final performance measure. Based on the definition of $W^*$ and $\phi^*$, if we use $\sgn( \widetilde{W}^{\top} \phi^*(x_1))$ for some estimate $\widetilde{W}$ such that $\widetilde{W}\rightarrow W$, the classifier always makes the exact same decision as the Bayes classifier.

On the other hand, for the estimated output
\begin{eqnarray*}
\widehat{W}^{\top}\widehat{\phi}(x_1)=\widehat{W}^{\top}\phi^*(x_1)+(W^*)^{\top}( \widehat{\phi}(x_1)-\phi^*(x_1) )+o,
\end{eqnarray*}
since we have argued that $\sgn(\widehat{W}^{\top}\phi^*(x_1))\equiv \sgn((W^*)^{\top}\phi^*(x_1))$, we aims to study how $(W^*)^{\top}( \widehat{\phi}(x_1)-\phi^*(x_1) )$ affects the regret.

The regret can be represented as
\begin{eqnarray*}
&&\int |1/2-p(x_1)| 1\left\{\sgn(\widehat{W}^{\top}\widehat{\phi}(x_1))\neq \sgn((W^*)^{\top}\phi^*(x_1))\right\}  dP(x_1)\\
&=& \int |1/2-p(x_1)| 1\left\{\sgn\left(\widehat{W}^{\top}\phi^*(x_1)+(W^*)^{\top}( \widehat{\phi}(x_1)-\phi^*(x_1) )\right)\neq \sgn\left((W^*)^{\top}\phi^*(x_1)\right)\right\}  dP(x_1) +o \\
&\leq& \int |1/2-p(x_1)| 1\left\{ \|W^*\| \| \widehat{\phi}(x_1)-\phi^*(x_1) )\| \geq \min\left(|\widehat{W}^{\top}\phi^*(x_1)|,|(W^*)^{\top}\phi^*(x_1)|\right) \right\}  dP(x_1) +o \\
&\leq&(1+o(1))\int |1/2-p(x_1)| 1\left\{\| \widehat{\phi}(x_1)-\phi^*(x_1) )\| \geq |(W^*)^{\top}\phi^*(x_1)|/\|W^*\|\right\}  dP(x_1) +o.
\end{eqnarray*}
Further, since $(W^*)^{\top}\phi^*(x_1)=p(x_1)(W^*)^{\top}\mu_1+(1-p(x_1))(W^*)^{\top}\mu_1'$ and $\|\mu_1\|,\|\mu_1'\|=\Theta(1)$, there exists some $c>0$ such that
\begin{eqnarray*}
&&\int |1/2-p(x_1)| 1\left\{\| \widehat{\phi}(x_1)-\phi^*(x_1) )\| \geq |(W^*)^{\top}\phi^*(x_1)|/\|W^*\|\right\}  dP(x_1) \\
&\leq& \int |1/2-p(x_1)| 1\left\{\| \widehat{\phi}(x_1)-\phi^*(x_1) )\| \geq c|1/2-p(x_1)|\right\}  dP(x_1)\\
&\leq& \frac{1}{c}\int \| \widehat{\phi}(x_1)-\phi^*(x_1) )\|  1\left\{\| \widehat{\phi}(x_1)-\phi^*(x_1) )\| \geq c|1/2-p(x_1)|\right\}  dP(x_1)\\
&\leq& \frac{1}{c}\sqrt{ \int \| \widehat{\phi}(x_1)-\phi^*(x_1) )\|^2  dP(x_1)},
\end{eqnarray*}
which becomes $O_p( (n_1+n_3)^{ -d/2(2d-1)})$ based on (\ref{eqn:nn}).
\end{proof}

\subsection{Proof for Section \ref{sec:appendix:regression}}

\begin{proof}[Proof of Theorem \ref{thm:regression}, Linear $\phi$, Regression, Lower Bound]
Assume $\Sigma_{1,1}=Var(x_1)=I_{d_1}$ and $\Sigma_{2,2}=Var(x_2)=I_{d_2}$. When using linear $\phi$, it is easy to see that $\phi^*(x_1) = \Sigma_{2,1}x_1 $ for $\Sigma_{2,1}=Cov(x_2,x_1)$. Thus when SSL is unbiased, we have $\theta_0x_1\equiv a^{\top}\Sigma_{2,1}x_1$, i.e. $\theta_0=a^{\top}\Sigma_{2,1}$ for some vector $a$. From Lemma \ref{lem:minimax:covariance}, we have known fixed $a$,
\begin{eqnarray*}
\inf_{\widehat{\theta}}\sup_{\Sigma_{1,2}}\mathbb{E}\|\widehat{\theta}-\Sigma_{1,2}a\|^2=\Omega\left(\frac{d_1+d_2}{n_1+n_3}\|a\|^2\right),
\end{eqnarray*}
and for known $\Sigma_{1,2}$, from Lemma \ref{lem:minimax:a} we have
\begin{eqnarray*}
\inf_{\widehat{\theta}}\sup_{a}\mathbb{E}\|\widehat{a}-a\|^2=\Omega\left(\frac{\sigma^2 d_2}{n_1+n_2}\right).
\end{eqnarray*}
Using the strong convexity property of the risk, we then obtain
\begin{eqnarray*}
\inf_{\widehat{\theta}}\sup_{\Sigma_{1,2}}\mathbb{E}R(\widehat{\theta})-R^*=\Omega\left(\frac{d_1+d_2}{n_1+n_3}\|a\|^2+\frac{\sigma^2 d_2}{n_1+n_2}\right),
\end{eqnarray*}
where $\|a\|=\Theta(\|\theta_0\|)$ based on assumption.
\end{proof}
\begin{proof}[Proof of Theorem \ref{thm:regression}, Linear $\phi$, Regression, Upper Bound]
The proof is similar to the square loss case in Theorem \ref{thm:linear}. Denote $(X_1,X_2)$ as the data matrix for $S_1, S_3$ (without response), and $(X_1',X_2',Y')$ as the data matrix for $S_1, S_2$. Denote $\Sigma_{i,j}=\mathbb{E}x_ix_j^{\top}$ for $i,j\in\{1,2\}$. Also denote $\Sigma_{i,y}$ as $\mathbb{E}x_iy$ for $i\in\{1,2\}$. 

We first look at the asymptotics of $\widehat{\phi}\widehat{W}$. From the problem setup, we can directly solve $\widehat{\phi}$:
\begin{eqnarray*}
\widehat{\phi} = (X_1^{\top}X_1)^{-1}X_1^{\top}X_2 \rightarrow \Sigma_{1,1}^{-1}\Sigma_{1,2},
\end{eqnarray*}
and further write down $\widehat{W}$:
\begin{eqnarray*}
\widehat{W}=( \widehat{\phi}(X_1')^{\top} \widehat{\phi}(X_1'))^{-1}\widehat{\phi}(X_1')^{\top}Y'\rightarrow (\Sigma_{2,1}\Sigma_{1,1}^{-1}\Sigma_{1,2})^{-1}\Sigma_{2,1}\theta_0.
\end{eqnarray*}
Thus
\begin{eqnarray*}
\widehat{\theta}=\widehat{\phi}\widehat{W}\rightarrow \Sigma_{1,1}^{-1}\Sigma_{1,2}(\Sigma_{2,1}\Sigma_{1,1}^{-1}\Sigma_{1,2})^{-1}\Sigma_{2,1}\theta_0.
\end{eqnarray*}
When $\theta_0=\Sigma_{1,1}^{-1}\Sigma_{2,1}a_0$ for some $a_0$, we have
\begin{eqnarray*}
\Sigma_{1,1}^{-1}\Sigma_{1,2}(\Sigma_{2,1}\Sigma_{1,1}^{-1}\Sigma_{1,2})^{-1}\Sigma_{2,1}\theta_0=\theta_0,
\end{eqnarray*}
i.e., SSL is unbiased.

We next study the convergence rate. Denote $\theta_0$ as the $\widehat\theta$ obtained when $n_i\rightarrow\infty$ for all $i=1,2,3$. For classification task, there is no preference on the magnitude of $\theta_0$ as it works as a linear classifier, so we take $\|\theta_0\|=1$. For the pretext task, one can see that
\begin{eqnarray*}
\widehat{\phi}-{\phi}^*&=&( \widehat{\phi}(X_1')^{\top} \widehat{\phi}(X_1'))^{-1}\widehat{\phi}(X_1')^{\top}Y'-{\phi}^*\\
&=&\Sigma_{1,1}^{-1}\left(\frac{X_1^{\top}X_2}{n_1+n_3}-\Sigma_{1,2}\right)+\Sigma_{1,1}^{-1}\left(\frac{X_1^{\top}X_1}{n_1+n_3}-\Sigma_{1,1}\right)\Sigma_{1,1}^{-1}\Sigma_{1,2}+o.
\end{eqnarray*}
\begin{eqnarray*}
\widehat{W}-W^*&=&( \widehat{\phi}(X_1')^{\top} \widehat{\phi}(X_1'))^{-1}\widehat{\phi}(X_1')^{\top}Y'-W^*\\
&=&\left( \frac{\phi^*(X_1')^{\top}\phi^*(X_1')}{n_1+n_2} + (\widehat{\phi}-\phi^*)^{\top}\Sigma_{1,2}+\Sigma_{2,1}(\widehat{\phi}-\phi^*)+o  \right)^{-1}\\&&\cdot\left( \frac{\phi^*(X_1')^{\top}Y'}{n_1+n_2}+ (\widehat{\phi}-\phi^*)^{\top}\Sigma_{1,1}\theta_0 +o\right)-W^*,
\end{eqnarray*}
where
\begin{eqnarray*}
&&\left( \frac{\phi^*(X_1')^{\top}\phi^*(X_1')}{n_1+n_2} + (\widehat{\phi}-\phi^*)^{\top}\Sigma_{1,2}+\Sigma_{2,1}(\widehat{\phi}-\phi^*)+o  \right)^{-1}-\left(\frac{\phi^*(X_1')^{\top}\phi^*(X_1')}{n_1+n_2}\right)^{-1}\\
&=&\left(\frac{\phi^*(X_1')^{\top}\phi^*(X_1')}{n_1+n_2}\right)^{-1}\left( (\widehat{\phi}-\phi^*)^{\top}\Sigma_{1,2}+\Sigma_{2,1}(\widehat{\phi}-\phi^*) \right)\left(\frac{\phi^*(X_1')^{\top}\phi^*(X_1')}{n_1+n_2}\right)^{-1}+o.
\end{eqnarray*}
As a result, denoting $\widehat{\theta}=\widehat{\phi}^{\top}\widehat{W}$, we have
\begin{eqnarray*}
\widehat{\theta}-\theta_0 &=& (\widehat{\phi}-\phi^*)^{\top}W^*+(\phi^*)^{\top}(\widehat{W}-W^*)+o,
\end{eqnarray*}
therefore, 
\begin{eqnarray*}
\mathbb{E}\|\widehat{\theta}-\theta_0\|^2=O\left( \frac{d_2\sigma_2^2}{n_1+n_2} + \frac{d_1+d_2}{n_1+n_3}\|\theta_0\|^2 \right).
\end{eqnarray*}
\end{proof}
\begin{proof}[Proof of Theorem \ref{thm:regression}, Linear $\phi$, Regression, Adversarial]
For the convergence upper bound, following the decomposition of estimation error in \cite{xing2021adversarially}, beside the part from $\|\widehat{\theta}-\theta_0\|^2$ in clean training, there is one extra part due to the information limit on $\Sigma_{1,1}$. However, since there are $\sum n_i$ samples to provide information of $x_1$, the new term can be ignored, so the minimax lower bound in adversarial training is the same to clean training.

In terms of the lower bound, we know that 
\begin{eqnarray*}
\theta(\epsilon)=(\Sigma_{1,1}+\lambda I_{d_1})^{-1}\Sigma_{1,1}\theta(0),
\end{eqnarray*}
thus following \cite{xing2021adversarially}, we consider two scenarios: (3) $\Sigma_{1,1}$ is known and we impose prior distribution on $\theta(0)$; (4) $\theta(0)$ is known and we impose prior distribution on $\Sigma_{1,1}$. Following the arguments in clean training, we have scenario (3) reduces to clean training setup. For scenario (4), following \cite{xing2021adversarially} we obtain
\begin{eqnarray*}
\inf_{\widehat\theta}\sup_{\Sigma,a,\epsilon}\mathbb{E}\|\widehat\theta-\theta(\epsilon)\|^2=\Theta\left(\frac{d_1}{n_1+n_2+n_3+n_4}\right).
\end{eqnarray*}

To conclude,
\begin{eqnarray*}
\inf_{\widehat\theta}\sup_{\Sigma,a,\epsilon}\mathbb{E}\|\widehat\theta-\theta(\epsilon)\|^2=\Theta\left( \frac{\sigma^2 d_2}{n_1+n_2}+\frac{d_1}{n_1+n_3}+\frac{d_1}{n_1+n_2+n_3+n_4} \right)=\Theta\left(  \frac{\sigma^2 d_2}{n_1+n_2}+\frac{d_1}{n_1+n_3}\right).
\end{eqnarray*}
\end{proof}

% \section{Another Potential Way to Incorporate SSL with Adversarially Robust Learning}\label{sec:adv:ssl:another}

% An alternative method of using SSL in adversarial training (e.g., in \cite{shi2021online}) is to consider a weighted average
%     \begin{eqnarray*}
%     \mathcal{L}_{adv}(W,\phi)+\lambda\mathcal{L}_{SSL}(\phi).
%     \end{eqnarray*}

% However, in reconstruction-based framework, this method may improve the estimation efficiency to some extent, but this needs further improvement since it currently does not achieve optimal rate of convergence:
% \begin{itemize}
%     \item     When $\epsilon=0$, i.e. clean training, taking $\lambda\rightarrow\infty$ leads to the same result as the SSL method we consider. 
    
%   \item  When $\epsilon>0$, 
%     \begin{itemize}
%         \item For $\lambda=o(1)$, this regularization term will have little effect.
%         \item For large $\lambda$, this regularization term will lead to bias on the robust estimate.
%     \end{itemize}
% \end{itemize}
% Combining the loss of adversarial training and SSL may simplify the implementation, but it still needs improvement. This is beyond the scope of our paper, but is an interesting direction to extend.

\end{document}